\documentclass{article}

\usepackage{microtype}
\usepackage{graphicx}
\usepackage{booktabs} %
\usepackage{array}
\usepackage{xcolor}
\usepackage[table]{xcolor}
\usepackage{tabularx}
\usepackage[export]{adjustbox}

\usepackage{subcaption}
\usepackage{capt-of}

\usepackage{hyperref}

\makeatletter
\newwrite\apx@out

\newcommand{\apxtocline}[4]{%
  \par\noindent
  \ifnum#1=2\hspace*{1.5em}\fi
  \hyperlink{#4}{#2}\leaders\hbox{.\kern.5pt}\hfill #3\par
}

\newcommand{\printappendixtoc}{%
  \begingroup
  \section*{Appendix Contents}%
  \parindent=0pt
  \InputIfFileExists{\jobname.apx}{}{%
    \textit{(Appendix contents will appear after the second compile.)}%
  }%
  \endgroup
}

\newcommand{\startappendixtoc}{%
  \immediate\openout\apx@out=\jobname.apx
  \let\apx@orig@section\section
  \let\apx@orig@subsection\subsection

  \def\section{\@ifstar{\apx@orig@section*}{\apx@section}}%
  \def\apx@section{\@dblarg\apx@section@i}%
  \def\apx@section@i[##1]##2{%
    \apx@orig@section[##1]{##2}%
    \protected@write\apx@out{}{%
      \string\apxtocline{1}{##2}{\thepage}{\@currentHref}%
    }%
  }%

  \def\subsection{\@ifstar{\apx@orig@subsection*}{\apx@subsection}}%
  \def\apx@subsection{\@dblarg\apx@subsection@i}%
  \def\apx@subsection@i[##1]##2{%
    \apx@orig@subsection[##1]{##2}%
    \protected@write\apx@out{}{%
      \string\apxtocline{2}{##2}{\thepage}{\@currentHref}%
    }%
  }%
}

\newcommand{\stopappendixtoc}{%
  \immediate\closeout\apx@out
  \let\section\apx@orig@section
  \let\subsection\apx@orig@subsection
}
\makeatother

\usepackage[accepted]{icml2025}

\usepackage{amsmath}
\usepackage{amssymb}
\usepackage{mathtools}
\usepackage{amsthm}
\usepackage{enumitem}
\usepackage{booktabs}
\usepackage{tabularx}
\usepackage{siunitx}
\usepackage{multirow}
\usepackage{wrapfig}
\usepackage{xspace}

\newcommand{\dm}{\textsc{Distance Marching}\xspace}

\usepackage[nameinlink,capitalize]{cleveref}

\crefname{equation}{Eq.}{Eqs.}
\Crefname{equation}{Eq.}{Eqs.}

\crefname{figure}{Fig.}{Figs.}
\Crefname{figure}{Fig.}{Figs.}

\crefname{section}{Sec.}{Secs.}
\Crefname{section}{Sec.}{Secs.}

\crefname{table}{Tab.}{Tabs.}
\Crefname{table}{Tab.}{Tabs.}

\theoremstyle{plain}
\newtheorem{theorem}{Theorem}[section]

\newtheorem{corollary}[theorem]{Corollary}
\theoremstyle{definition}
\newtheorem{definition}[theorem]{Definition}

\theoremstyle{remark}
\newtheorem{remark}[theorem]{Remark}

\usepackage[textsize=tiny]{todonotes}

\makeatletter
\renewcommand{\Notice@String}{} %
\makeatother

\newif\ifshowcomments
\showcommentsfalse %

\icmltitlerunning{Distance Marching for Generative Modeling}

\begin{document}

\twocolumn[
\icmltitle{Distance Marching for Generative Modeling}

\begin{icmlauthorlist}
\icmlauthor{Zimo Wang}{ucsd}
\icmlauthor{Ishit Mehta}{ucsd}
\icmlauthor{Haolin Lu}{ucsd}
\icmlauthor{Chung-En Sun}{ucsd}
\icmlauthor{Ge Yan}{ucsd}
\icmlauthor{Tsui-Wei Weng}{ucsd}
\icmlauthor{Tzu-Mao Li}{ucsd}
\end{icmlauthorlist}

\icmlaffiliation{ucsd}{UC San Diego}

\icmlcorrespondingauthor{Tzu-Mao Li}{tzli@ucsd.edu}

\icmlkeywords{Generative Model}
\vskip 0.3in
]

\printAffiliationsAndNotice{}

\begin{abstract}
Time-unconditional generative models learn time-independent denoising vector fields.
But without time conditioning, the same noisy input may correspond to multiple noise levels and different denoising directions, which interferes with the supervision signal.
Inspired by distance field modeling, we propose \dm, a new time-unconditional approach with two principled inference methods.
Crucially, we design losses that focus on closer targets.
This yields denoising directions better directed toward the data manifold.
Across architectures, Distance Marching consistently improves FID by 13.5\% on CIFAR-10 and ImageNet over recent time-unconditional baselines.
For class-conditional ImageNet generation, despite removing time input, Distance Marching surpasses flow matching using our losses and inference methods. It achieves lower FID than flow matching's final performance using 60\% of the sampling steps and 13.6\% lower FID on average across backbone sizes.
Moreover, our distance prediction is also helpful for early stopping during sampling and for OOD detection.
We hope distance field modeling can serve as a principled lens for generative modeling.
\end{abstract}

{
\addtolength{\textfloatsep}{-15pt}
\addtolength{\floatsep}{-10pt}
\addtolength{\intextsep}{-15pt}
\addtolength{\abovecaptionskip}{-2pt}
\addtolength{\belowcaptionskip}{-2pt}

\section{Introduction}
\label{sec:intro}

Recent advances in generative modeling have been driven by diffusion and flow-based approaches~\cite{sohl2015deep, ho2020denoising, song2020denoising, song2020score, rezende2015variational, chen2018neural, lipman2022flow}, which learn denoising vector fields that transport noise toward the data manifold.
In practice, high-fidelity and fast sampling typically rely on explicit time conditioning and tightly coupled schedules and parameterizations~\cite{nichol2021improved, karras2022elucidating, ho2022classifier, lu2022dpm, peebles2023scalable}.
Although removing explicit time conditioning simplifies the method and enables flexible denoising tasks, recent attempts~\cite{sun2025noise, balcerak2025energy, wang2025equilibrium} often fall back on ad-hoc, architecture-dependent coefficient constraints and sacrifice image quality.

\begin{figure}
    \centering
    \includegraphics[width=0.99\linewidth]{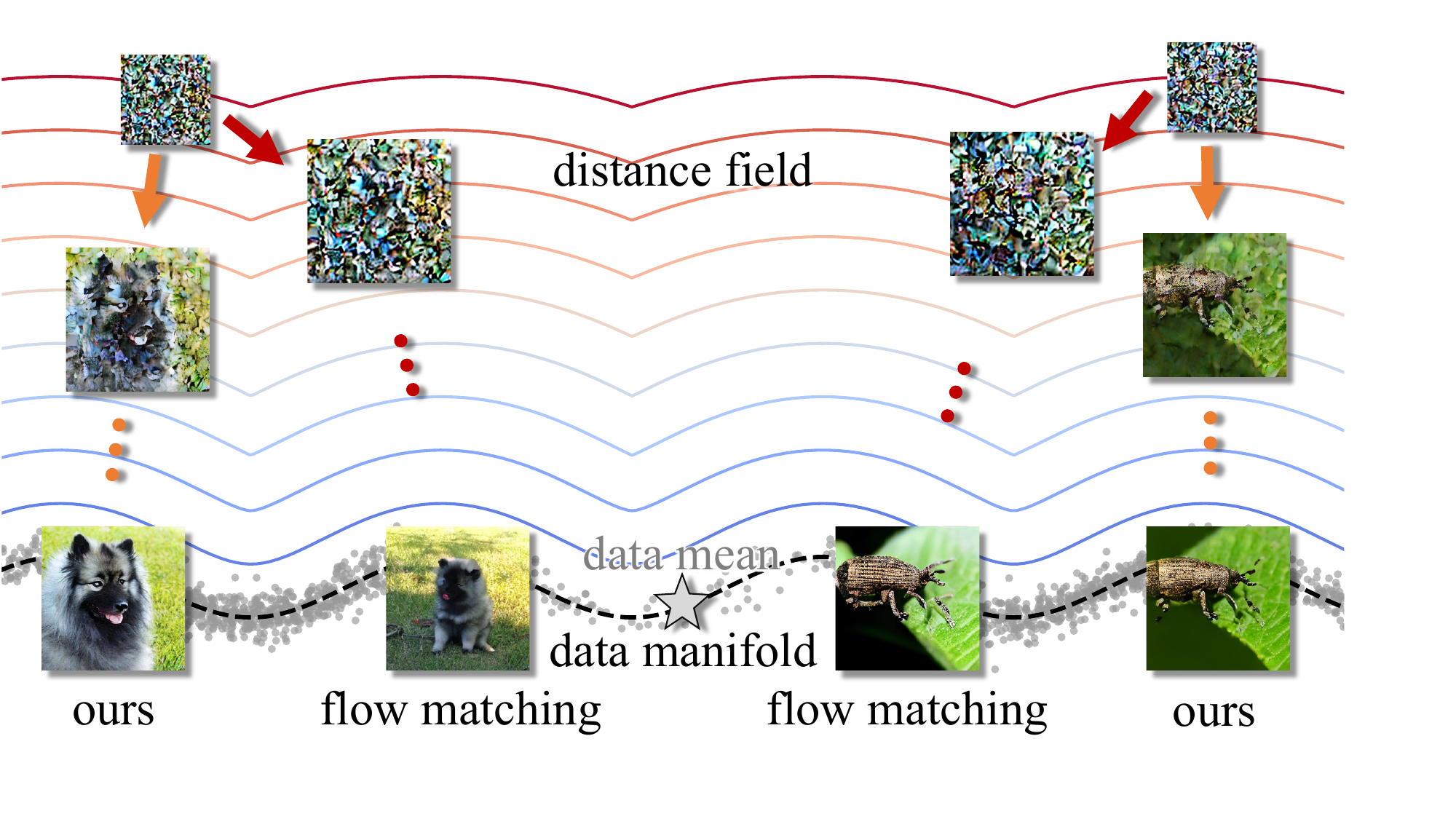}
    \caption{We propose \dm to generate images without time input.
    It is inspired by distance modeling and produces a denoising direction focusing on closer targets, while flow matching loss learns a direction biased toward the data mean in the early stage.
    We compare outcomes after 20\% of the steps to show our generation wanders less.
    See detailed analysis in~\Cref{sec:analysis}.}
    \label{fig:teaser}
\end{figure}

A key obstacle is that, once explicit time conditioning is removed, the same noisy input can arise from multiple noise levels and different denoising directions, making the denoising target fundamentally ambiguous~\cite{li2025improved, bertrand2025closed}.
Under standard denoising objectives, the optimizer averages over these incompatible targets and may deviate from a direction pointing toward the manifold.
Empirically, we find that simple time-based reweighting does not resolve this ambiguity, and conditional generation can further amplify it through train--inference mismatch~\cite{cheng2025curse}.

In this work, we draw inspiration from computer graphics, particularly distance-field-based methods for surface reconstruction and rendering, and propose \dm.
We find these seemingly unrelated problems highly relevant to generative modeling: both require reasoning about a low-dimensional structure embedded in an ambient space and navigating points toward that structure.
Motivated by this connection, we introduce distance-field-inspired losses to train a neural network that depends only on the current point, thereby decoupling generation from explicit time conditioning.
This distance-field view makes standard gradient descent a natural and effective update rule across architectures in our analysis and experiments.
Moreover, because the field provides a meaningful scalar distance value, it also enables sphere-tracing updates as used in rendering~\cite{hart1996sphere}, highlighting a concrete advantage of modeling generation from a distance-field perspective.

More importantly, from a denoising perspective, distance-field-inspired losses help disambiguate the training target.
By concentrating learning on closer targets, the learned denoising direction could more effectively denoise and aligns better with the data manifold.
We support this claim with qualitative low-dimensional visualizations and quantitative high-dimensional comparisons.
As a result, \dm consistently outperforms prior time-unconditional methods across architectures and scales, regardless whether class conditioning is used.

\paragraph{Contributions.}
Our main contributions are:
\begin{itemize}[leftmargin=1.2em]
  \item \textbf{Reformulating time-unconditional generation via distance-field modeling.}
  We propose \dm, which learns a distance-like field that depends only on the current point, decoupling generation from explicit time conditioning.

  \item \textbf{Two principled inference methods from a distance-field view.}
  The learned field supports standard gradient descent via its direction prediction and sphere-tracing-style adaptive updates via its distance value~\cite{hart1996sphere}.

  \item \textbf{Objectives that reduce target ambiguity.}
  From distance-field modeling, we design losses that focus on closer targets, yielding directions that denoise more effectively. We support it with low-dimensional visualizations and high-dimensional quantitative comparisons.

    \item \textbf{Competitive results without time conditioning across architectures and scales.}
    \dm improves average FID by 13.5\% across CIFAR-10 and ImageNet over recent time-unconditional baselines at similar model sizes~(\Cref{tab:fid_comp}).
    It also surpasses strong time-conditional flow-matching models on ImageNet.
    Notably, we achieve lower FID than the baseline's final result using only 60\% of the sampling steps~(\Cref{fig:faster}) and 13.6\% lower FID on average across backbone sizes~(\Cref{fig:fid-compare}).
    
\end{itemize}

\section{Related Work}
\label{sec:related}

\noindent\textbf{Time-conditioned diffusion and flow models.}
Diffusion~\cite{sohl2015deep, ho2020denoising, song2020denoising, song2020score} and flow-based models~\cite{rezende2015variational, chen2018neural, lipman2022flow} typically rely on explicit time or noise conditioning and carefully designed parameterizations.
Prior work improves sample quality or reduces sampling steps by refining schedules, coefficients, and conditioning mechanisms~\cite{nichol2021improved, karras2022elucidating, ho2022classifier, lu2022dpm, peebles2023scalable}.
These methods are effective but suffer from exposure bias: during inference, the model is queried at discrete time steps on states it generated itself, which no longer match the true forward-noise distribution, causing error accumulation~\cite{ning2023elucidating, li2023alleviating}.

\noindent\textbf{Time-unconditional generative modeling.}
Removing explicit time conditioning has been explored in several lines of work, including early score matching and energy-based formulations~\cite{hyvarinen2005estimation, carreira2005contrastive, lecun2006tutorial, du2019implicit}.
However, in high-dimensional image generation, eliminating time or noise inputs can degrade image quality~\cite{du2019implicit, nijkamp2020anatomy, sun2025noise} and training stability~\cite{du2020improved}.
Recent time-unconditional approaches such as uEDM~\cite{sun2025noise} and Energy Matching~\cite{balcerak2025energy} typically rely on specific coefficient constraints or training schedules, and their sampling remains delicate, with inconsistent gains on large-scale datasets.
Equilibrium Matching~\cite{wang2025equilibrium} shows sensitivity to the choice of decaying coefficients with limited theoretical guidance, and its effectiveness drops when switching architectures or training setups.

\noindent\textbf{Distance-field modeling and sphere tracing.}
Our method draws inspiration from neural distance fields in geometry processing and rendering~\cite{hoppe1992surface, park2019deepsdf, wang2025hotspot}.
Distance fields guide points toward the zero level set using a direction and magnitude that depend only on spatial coordinates, and sphere tracing~\cite{hart1996sphere} exploits this property to take adaptive, well-calibrated steps.
Prior work~\cite{gomes2003vector, atzmon2020sald, ma2020neural} has introduced directional representations and constraints to better regularize the learned distance field.
\citet{permenter2023interpreting} and~\citet{wan2024elucidating} indicate the potential of distance-based modeling for generation but do not introduce new loss functions.

\section{Method}
\subsection{Preliminary and Motivation}

\begin{figure*}[t]
\centering
\setlength{\tabcolsep}{1pt}
\begin{tabular}{@{}c c c c c@{}}
\hypertarget{fig:2d_motivation:a}{\includegraphics[width=0.195\textwidth]{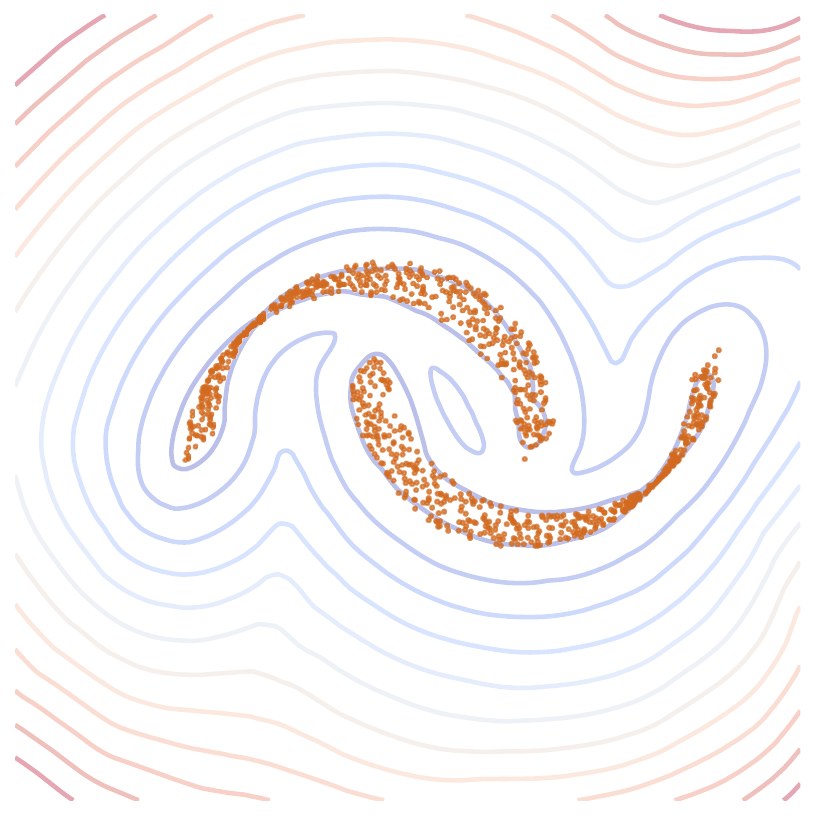}} &
\hypertarget{fig:2d_motivation:b}{\includegraphics[width=0.195\textwidth]{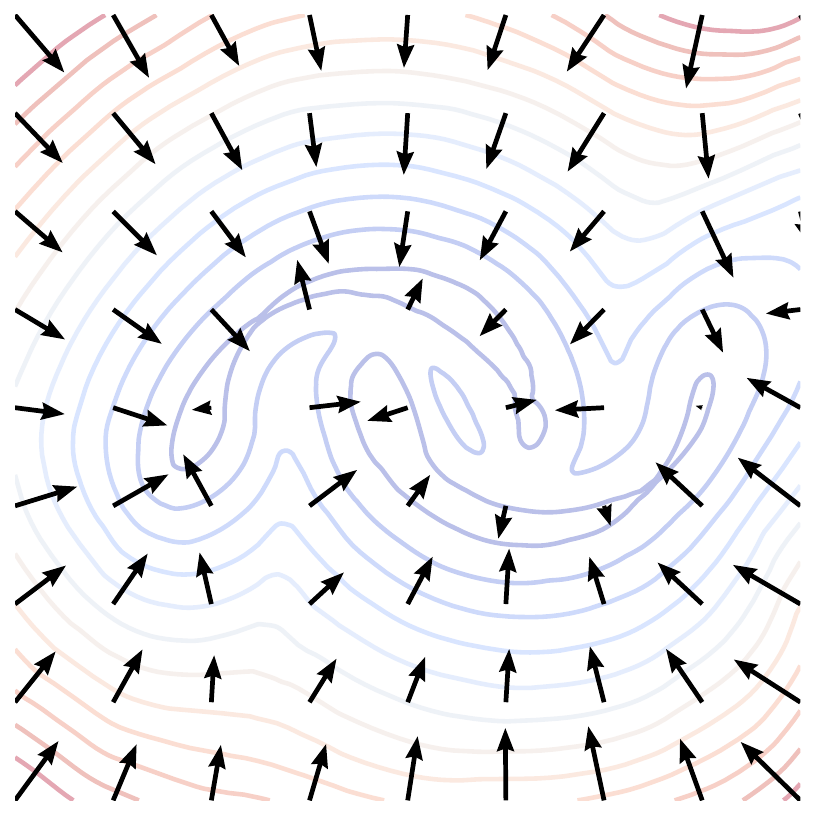}} &
\hypertarget{fig:2d_motivation:c}{\includegraphics[width=0.195\textwidth]{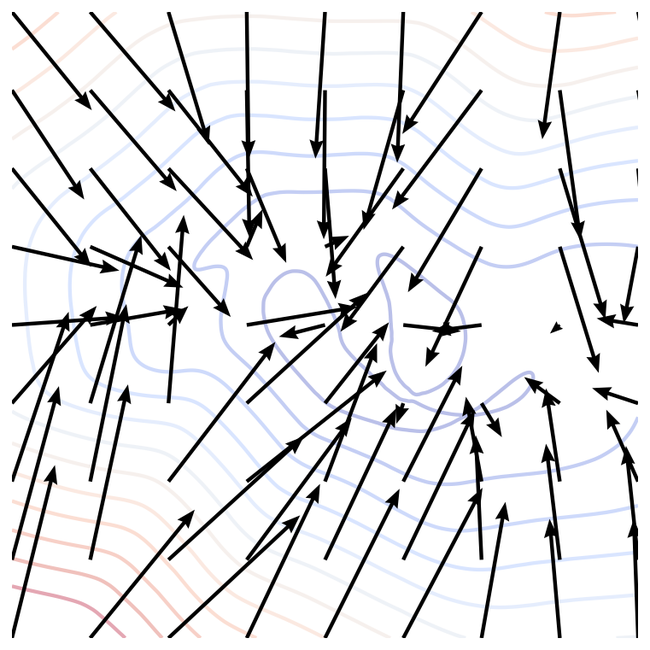}} &
\hypertarget{fig:2d_motivation:d}{\includegraphics[width=0.195\textwidth]{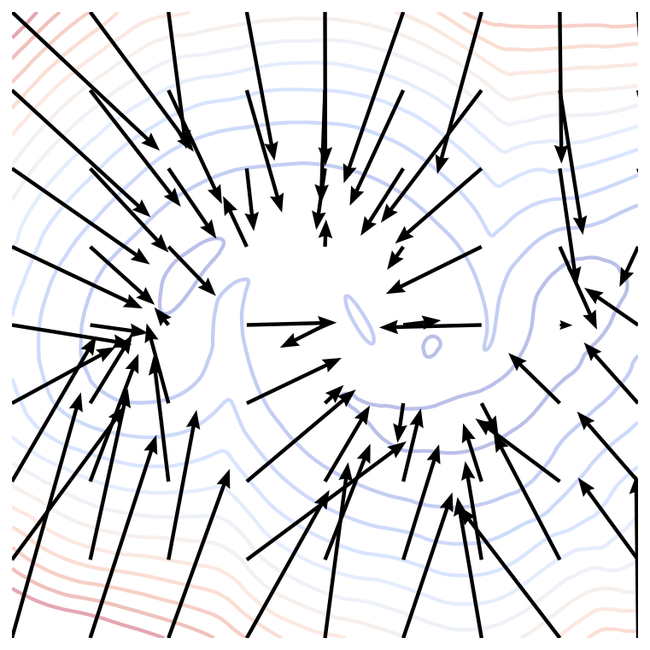}} &
\hypertarget{fig:2d_motivation:e}{\includegraphics[width=0.195\textwidth]{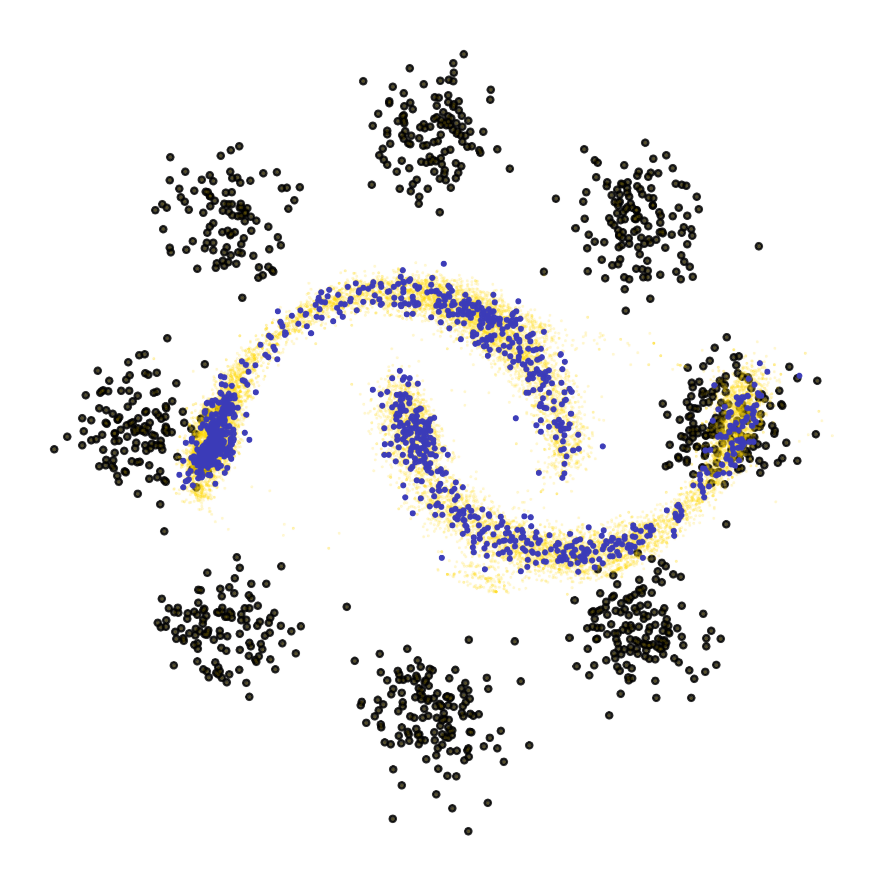}} \\
\parbox[t]{0.195\textwidth}{\centering\footnotesize (a) Target two-moons (\textcolor[HTML]{FA7F57}{orange}) and learned level sets $u_\theta$} &
\parbox[t]{0.195\textwidth}{\centering\footnotesize (b) Our learned level sets $u_\theta$ and $-\nabla_{\mathbf{x}} u_\theta$ (black arrows)} &
\parbox[t]{0.195\textwidth}{\centering\footnotesize (c) Energy Matching learns an energy landscape with inaccurate gradients and details.} &
\parbox[t]{0.195\textwidth}{\centering\footnotesize (d) Equilibrium Matching provides rough energy landscape especially nearby the data manifold.} &
\parbox[t]{0.195\textwidth}{\centering\footnotesize (e) Our generation avoids meandering near the init: init (black), mid (\textcolor[HTML]{F2D024}{yellow}), and final (\textcolor[HTML]{001F3F}{blue})} \\
\end{tabular}
\caption{
In 2D, we learn a distance-like scalar field $u_\theta(\mathbf{x})$ (\hyperlink{fig:2d_motivation:a}{a} and \hyperlink{fig:2d_motivation:b}{b}) that transports samples onto the support of the data distribution (\hyperlink{fig:2d_motivation:a}{a}).
In contrast, prior energy-based methods (\hyperlink{fig:2d_motivation:c}{c}, \hyperlink{fig:2d_motivation:d}{d}), even when using minibatch closest rematching during training to disambiguate targets, do not capture energy landscape details.
Because they inherit the flow-matching loss, they become unreliable for determining denoising directions once rematching is impractical due to high-dimensional hubness, as analyzed in~\Cref{sec:analysis}.
See~\Cref{app:2d_exp} for more details.}
\label{fig:2d_motivation}
\end{figure*}

\dm transfers the geometric intuition of distance fields to image generation by learning distance-like fields toward the data manifold with losses that naturally emphasize closer targets.
This design mitigates time-marginalized ambiguity via distance-based reweighting.
\Cref{fig:2d_motivation} shows our 2D toy example: using distance information, the first update makes a large jump that brings samples close to the target support.

This motivates learning a scalar field $u_\theta:\mathbb{R}^2\rightarrow\mathbb{R}$ that behaves like an \emph{unsigned distance} to an implicitly defined target set $\mathcal{S}$ (e.g., the two-moons support in \Cref{fig:2d_motivation}).

At any location $\mathbf{x}$ where $\nabla_{\mathbf{x}} u_\theta(\mathbf{x})$ exists, such a distance-like field provides two pieces of information: the
\emph{magnitude} $u_\theta(\mathbf{x})$, indicating \emph{how far} $\mathbf{x}$ is from $\mathcal{S}$, and the
\emph{direction} $-\nabla_{\mathbf{x}} u_\theta(\mathbf{x})$, indicating \emph{where to move} to approach $\mathcal{S}$.
Together, these properties suggest distance-adaptive updates as shown in \Cref{eq:sphere_tracing}: move along $-\nabla_{\mathbf{x}} u_\theta(\mathbf{x})$ with a step size scaled by $u_\theta(\mathbf{x})$.

\begin{equation}
    \mathbf{x}_{i+1} = \mathbf{x}_i + u_\theta(\mathbf{x}_i)\,\mathbf{v},\quad \mathbf{v}:=-\nabla u_\theta(\mathbf{x}_i)
    \label{eq:sphere_tracing}
\end{equation}

In computer graphics, sphere tracing~\cite{hart1996sphere} uses this distance estimate to march efficiently toward a surface and compute the ray--surface intersection; here we use the same distance-scaled step with $\mathbf{v}=-\nabla u_\theta(\mathbf{x}_i)$.
Although sphere tracing sets an adaptive step length, our formulation is also well suited to gradient descent: the direction $-\nabla u_\theta(\mathbf{x}_i)$ provides a ready-to-use descent direction.
The process stops when $u_\theta(\mathbf{x}_i)$ falls below a threshold or when the maximum number of steps is reached.

Intuitively, image generation shares two key similarities with this procedure.
First, starting from an initial point, both seek a target on an implicitly defined manifold: a surface in rendering and the data manifold in image generation.
Second, popular generative models refine or denoise inputs through iterative updates, much as sphere tracing requires multiple steps to approach the target surface.

Extending this intuition from geometric surfaces to data manifolds, we can regard image generation as a sphere tracing or gradient descent process toward the data manifold.
Naturally, the spatial position $\mathbf{x}$ itself contains all the information required by the distance field $u_\theta(\mathbf{x})$ with respect to a given data manifold.
Consequently, when training such a field in high-dimensional spaces, there is no need to provide time input $t$ describing how the position $\mathbf{x}$ is obtained.

More fundamentally, $u_\theta(\mathbf{x})$ can be interpreted not only as the distance but also as an estimate of the noise level, which maps one-to-one to the time variable $t$.
In the terminology of generative modeling, this formulation without time inputs is thus \textit{time-unconditional}.
Furthermore, because energy can be specified as a function of distance, our model is also an energy-based model.

These analogies inspire a new possibility: training a distance-like field in high-dimensional space as the basis for image generation.

\subsection{Low Dimensional Distance Field Modeling}
\label{subsec:dfmodeling}

A critical property of an idealized (unsigned) distance field is that it induces a closest-point projection.
Let $\mathcal{S}$ be a target data manifold and $d(\mathbf{x})$ the distance from $\mathbf{x}$ to $\mathcal{S}$. Wherever $\nabla d(\mathbf{x})$ exists, $\| \nabla d(\mathbf{x}) \|_2=1$, and the closest point on $\mathcal{S}$ is recovered by

\begin{equation}
    \mathbf{s}(\mathbf{x}) = \mathbf{x} - d(\mathbf{x}) \nabla d(\mathbf{x}) \, ,
    \label{eq:one_step}
\end{equation}
where $\mathbf{s}(\mathbf{x}) \in \mathcal{S}$ denotes the closest surface point to $\mathbf{x}$, assuming $\mathbf{x}$ lies in the region where the closest-point projection is unique.

To encourage a neural network $u_\theta(\mathbf{x}): \mathbb{R}^D \rightarrow \mathbb{R}$, where $\theta$ represents the network parameters and $D$ the spatial dimension, to learn this behavior, we introduce the \textit{One-Step Loss} (OSL). Given training pairs $(\mathbf{x}^{(i)}, \mathbf{s}^{(i)}_{\mathrm{data}})$ where $\mathbf{s}^{(i)}_{\mathrm{data}} \in \mathcal{S}$ is a target point for $\mathbf{x}^{(i)}$, we penalize the discrepancy between the predicted one-step projection and the target:
\begin{equation}
    L_{\mathrm{OS}}
    =
    \frac{1}{n}
    \sum_{i=1}^{n}
    \frac{
        \big\|
        \mathbf{x}^{(i)}
        - u_{\theta}(\mathbf{x}^{(i)})\nabla u_{\theta}(\mathbf{x}^{(i)})
        - \mathbf{s}^{(i)}_{\mathrm{data}}
        \big\|_2^{2}
    }{
        \big\|
        \mathbf{x}^{(i)} - \mathbf{s}^{(i)}_{\mathrm{data}}
        \big\|_2^{2} + \epsilon
    } .
    \label{eq:osl}
\end{equation}
Here $\epsilon >0$ is a small constant for numerical stability, removing the singularity at $\mathbf{x}=\mathbf{s}_{\mathrm{data}}$.
The denominator normalizes across pairs, preventing large-displacement pairs from dominating the training signal.
We elaborate on sampling training pairs in~\Cref{subsec:dm}.

Although~\Cref{eq:one_step} characterizes a one-step mapping to the surface, it does not uniquely determine the distance field. In particular, for any constant $C\ge 0$, the family
\begin{equation}
    \hat{d}(\mathbf{x}) = \pm \sqrt{\|\mathbf{x} - \mathbf{s}(\mathbf{x})\|_{2}^{2} + C}
    \label{eq:analytical}
\end{equation}

induces the \emph{same} one-step projection in~\Cref{eq:one_step} wherever the gradient exists, as shown in~\Cref{fig:dfamily} for different values of $C$. We prove it in~\Cref{app:projection_invariance}.

\begin{wrapfigure}[6]{r}{0.34\linewidth}
  \vspace{-14pt}
  \centering
  \includegraphics[width=\linewidth]{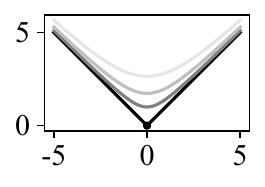}
  \vspace{-28pt}
  \caption{1D $\hat{d}(\mathbf{x})$ to the origin; lighter line, larger $c_0$.}
  \label{fig:dfamily}
  \vspace{-30pt}
\end{wrapfigure}

To remove this non-uniqueness of the offset $C$ and sign, we introduce a second constraint, the \textit{Directional Eikonal Loss} (DEL), which fixes the offset by selecting the positive branch and $C=c_0$. Specifically, we match $\nabla u_\theta(\mathbf{x}^{(i)})$ to the corresponding target direction implied by \Cref{eq:analytical} with $C=c_0$:
\begin{equation}
    L_{\mathrm{DE}}
    =
    \frac{1}{n}
    \sum_{i=1}^{n}
    \bigg\|
        \nabla u_{\theta}(\mathbf{x}^{(i)}) -
        \frac{
            \mathbf{x}^{(i)} - \mathbf{s}_{\mathrm{data}}^{(i)}
        }{
            \sqrt{
                \big\|\mathbf{x}^{(i)} - \mathbf{s}_{\mathrm{data}}^{(i)}\big\|_2^2 + c_0
            }
        }
    \bigg\|_2^{2}.
    \label{eq:del}
\end{equation}
Intuitively, one-step loss enforces that $u_\theta$ induces the desired one-step mapping, while directional eikonal loss disambiguates the offset $C$ as $c_0$ of the learned field by constraining the gradient.

Among numerous distance-function constraints, \citet{atzmon2020sald} and \citet{ma2020neural} introduce objectives that closely resemble our losses from a different viewpoint and show strong performance when training distance functions in low-dimensional spaces.
We now explain why our losses are particularly compatible with high-dimensional image generation compared with standard flow-matching losses.

\subsection{Distance Marching}
\label{subsec:dm}

In high-dimensional image space, we propose \dm (DM), which predicts a distance-like scalar $u_\theta(\mathbf{x}) \in \mathbb{R}$ and a vector field $\mathbf{v}_\theta(\mathbf{x}) \in \mathbb{R}^D$.
During inference, our model supports both sphere tracing and gradient descent.

Extending neural distance-field objectives from low-dimensional geometry to images is considerably more challenging.
First, naive uniform sampling is not directly applicable because the real-image manifold is difficult to sample and traverse.
Second, searching for the true closest neighbor in a large dataset is computationally expensive and does not scale; moreover, we show in~\Cref{sec:analysis} that naively replacing targets with nearest neighbors can induce mode collapse and condition mismatch~\cite{cheng2025curse}.
Third, computing $\nabla u_\theta$ via automatic differentiation increases computational cost, and backpropagating through the resulting second-order derivatives is notoriously unstable~\cite{czarnecki2017sobolev,li2024neural,chetan2025accurate}.

To sample points near real data in high dimensions, we use the standard linear interpolation in~\Cref{def:genproc} to construct training pairs.
To address target-selection issues without nearest-neighbor search or target replacement, we rely on our losses implicitly prioritizing closer targets, as analyzed in~\Cref{sec:analysis}. 
To avoid computing $\nabla u_\theta$ in high dimensions, we predict the direction field $\mathbf{v}_\theta(\mathbf{x})$ directly and do not enforce $\mathbf{v}_\theta=\nabla u_\theta$, while still preserving our key property of focusing on closer targets~(\Cref{sec:analysis}).
Although $\mathbf{v}_\theta$ may not be curl-free, we still observe convergence during generation, resembling real optimization~(\Cref{app:convergence}).

\noindent\textbf{Training.} We adopt the linear interpolation between the initial and target as follows:
\newcommand{\dd}{\,\mathrm{d}}
\newcommand{\Unif}{\mathrm{Unif}}
\newcommand{\Normal}{\mathcal{N}}

\begin{definition}
\label{def:genproc}
Let the dataset be $\mathcal{D}=\{\mathbf{s}^{(1)},\dots,\mathbf{s}^{(N)}\}\subset\mathbb{R}^D$.
Sample an index $I\sim\Unif(\{1,\dots,N\})$ and set $\mathbf{X}_1:=\mathbf{s}^{(I)}$.
Independently sample $\mathbf{X}_0\sim\Normal(\mathbf{0},\mathbf{I}_d)$ and interpolation coefficient $T\sim p_T$ supported on $(0,1)$ with density $p_T(t)$.
Define the noised samples
\begin{equation}
\mathbf{X}:=(1-T)\mathbf{X}_0+T\mathbf{X}_1.
\label{eq:obsY}
\end{equation}
\end{definition}

Here, $p_T(t)$ is a user-chosen time-sampling distribution~\cite{karras2022elucidating,lee2024improving,lipman2024flow}. Our training pair realization $(\mathbf{x}^{(i)}, \mathbf{s}^{(i)}_{\mathrm{data}})$ is drawn from the distribution $(\mathbf{X}, \mathbf{X}_1)$.

For the 2D toy model, we find minibatch nearest-target replacement helpful, replacing the target $\mathbf{X}_1$ with the closest target within the minibatch.
However, \Cref{sec:analysis} shows that this approach can induce mode collapse in high-dimensional spaces.
Even without target replacement, we can learn an accurate distance surrogate in high-dimensional image spaces; we provide quantitative results in~\Cref{sec:exp}.

From a denoising perspective, our losses encourage a local behavior: the loss minimizer relies more heavily on closer targets $\mathbf{s}_{\mathrm{data}}$.
We present the complete analysis in~\Cref{sec:analysis}.

We extend the previous losses by replacing $\nabla u$ with $\mathbf{v}$ and setting $\mathrm{denoise}_{\theta}(\mathbf{x}^{(i)}) := \mathbf{x}^{(i)} - u_{\theta}(\mathbf{x}^{(i)}) \mathbf{v}_{\theta}(\mathbf{x}^{(i)})$, and then the previous losses become:
\begin{equation}
\begin{aligned}
L_{\mathrm{OS}}
&=
\frac{1}{n}
\sum_{i=1}^{n}
\frac{
    \big\|
    \mathrm{denoise}_{\theta}(\mathbf{x}^{(i)})
    - \mathbf{s}^{(i)}_{\mathrm{data}}
    \big\|_2^{2}
}{
    \big\|
    \mathbf{x}^{(i)} - \mathbf{s}^{(i)}_{\mathrm{data}}
    \big\|_2^{2} + \epsilon
},
\\
L_{\mathrm{DE}}
&=
\frac{1}{n}
\sum_{i=1}^{n}
\Bigg\|
\mathbf{v}_{\theta}(\mathbf{x}^{(i)}) -
\frac{
    \mathbf{x}^{(i)} - \mathbf{s}_{\mathrm{data}}^{(i)}
}{
    \sqrt{
        \big\|\mathbf{x}^{(i)} - \mathbf{s}_{\mathrm{data}}^{(i)}\big\|_2^2 + c_0
    }
}
\Bigg\|_2^{2}.
\end{aligned}
\label{eq:losses}
\end{equation}

The constants $\epsilon,\, c_0>0$ not only avoid division by zero and the resulting numerical instability but also discourage excessive sensitivity near the data manifold.
We still expect $u_\theta$ to follow a smoothed distance as in~\Cref{eq:analytical}, and we verify this in~\Cref{fig:dist_error}.

For class-conditional generation tasks, we use the class label $y \in \{1,\dots,K\}$ to select and learn a class-specific field $u_\theta(\mathbf{x}, y)$ and $\mathbf{v}_\theta(\mathbf{x}, y)$.

Our final training loss is the combination of the one-step loss and directional eikonal loss, as shown below:

\begin{equation}
    L = \lambda_1 \, L_{\mathrm{OS}} + \lambda_2 \, L_{\mathrm{DE}}
\end{equation}

where $\lambda_1, \lambda_2 > 0$ are hyperparameters set by the user.

\noindent\textbf{Inference.} When performing inference, starting from an initial noise sample $\mathbf{x}_0 \sim p_0(\mathbf{x})$, the sphere tracing (ST) update can flexibly incorporate both distance and direction information:
\begin{equation}
    \mathbf{x}_{i+1} = \mathbf{x}_i - \eta \,u_\theta(\mathbf{x}_i)\,\mathbf{v}_\theta(\mathbf{x}_i).
    \label{eq:image_st}
\end{equation}

Our method also supports gradient descent (GD):
\begin{equation}
    \mathbf{x}_{i+1} = \mathbf{x}_i - \eta\,\,\mathbf{v}_\theta(\mathbf{x}_i).
    \label{eq:st}
\end{equation}

For class-conditional generation, we independently sample the class label $y \in \{1,\dots,K\}$ as an additional model input and fix it during the iterations.

\section{Analysis}
\label{sec:analysis}

\subsection{Empirical Study}

\noindent\textbf{Mode collapse and condition mismatch.}
In 2D toy experiments, we match each interpolated point to its Euclidean nearest training image.
This matching can be done approximately within a minibatch or exactly over the full dataset.
However, in image space this rule tends to favor low-contrast images when starting from Gaussian noise.
To quantify this bias, we measure the coverage rate on CIFAR-10 with 50k training images, defined as the fraction of training images that become the nearest neighbor of at least one interpolated point. We bin the interpolation coefficient $t\in[0,1]$ uniformly; in each bin, we draw 50k Gaussian noise samples, form linear interpolations, and compute nearest neighbors in the training set.
\Cref{fig:mode_collapse_curve} shows the full curve.

Near the noise endpoint, the coverage rate drops to $0.17\%$, and the top $8$ images account for $92.8\%$ of all nearest-neighbor assignments as shown in~\Cref{fig:mode_collapse_examples}.

\begin{figure}[t]
  \centering
  \setlength{\tabcolsep}{0pt} %
  \begin{tabular}{@{}cccccccc@{}}
    \includegraphics[width=\dimexpr\linewidth/8\relax]{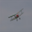} &
    \includegraphics[width=\dimexpr\linewidth/8\relax]{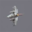} &
    \includegraphics[width=\dimexpr\linewidth/8\relax]{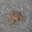} &
    \includegraphics[width=\dimexpr\linewidth/8\relax]{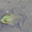} &
    \includegraphics[width=\dimexpr\linewidth/8\relax]{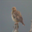} &
    \includegraphics[width=\dimexpr\linewidth/8\relax]{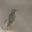} &
    \includegraphics[width=\dimexpr\linewidth/8\relax]{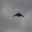} &
    \includegraphics[width=\dimexpr\linewidth/8\relax]{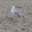}
  \end{tabular}
    \caption{These $8$ images are the nearest neighbors for $92.8\%$ of noise-end samples. The leftmost one alone accounts for $62.2\%$ due to its grayness and low contrast.}
  \label{fig:mode_collapse_examples}
\end{figure}

\citet{radovanovic2010hubs} theoretically characterize this high-dimensional hubness, and \citet{frosio2018statistical} find it from an image-denoising perspective, indicating a severe risk of mode collapse under nearest-neighbor matching.

Fortunately, we can perform minibatch re-pairing without replacement, using shorter pairings as an explicit criterion and spreading assignments across all targets.
Interestingly, optimal-transport-based re-matching schemes~\cite{tong2023improving, pooladian2023multisample} arrive at essentially the same strategy and empirically improve generation quality.
However, once additional conditions (e.g., class labels) are introduced, re-matching can alter the posterior at a given noise location, creating a train--inference mismatch that hurts performance~\cite{cheng2025curse}.

\noindent\textbf{Posterior minimizer.}
We thus face a trade-off: distance information is valuable for disambiguating the target and improving quality, yet relying too heavily on nearest neighbors induces hubness and can exacerbate train--inference mismatch.

In the time-unconditional setting, the posterior minimizer induced by standard flow matching (FM) can be misleading without concentration on closer neighbors: at the same location $\mathbf{x}$, it may point away from high-density regions of the target distribution, whereas one-step loss and directional eikonal loss yield more stable directions by concentrating on closer targets~(\Cref{fig:failure_fm,fig:paths}).

\begin{figure}[t]
    \centering
    \includegraphics[width=0.56\linewidth]{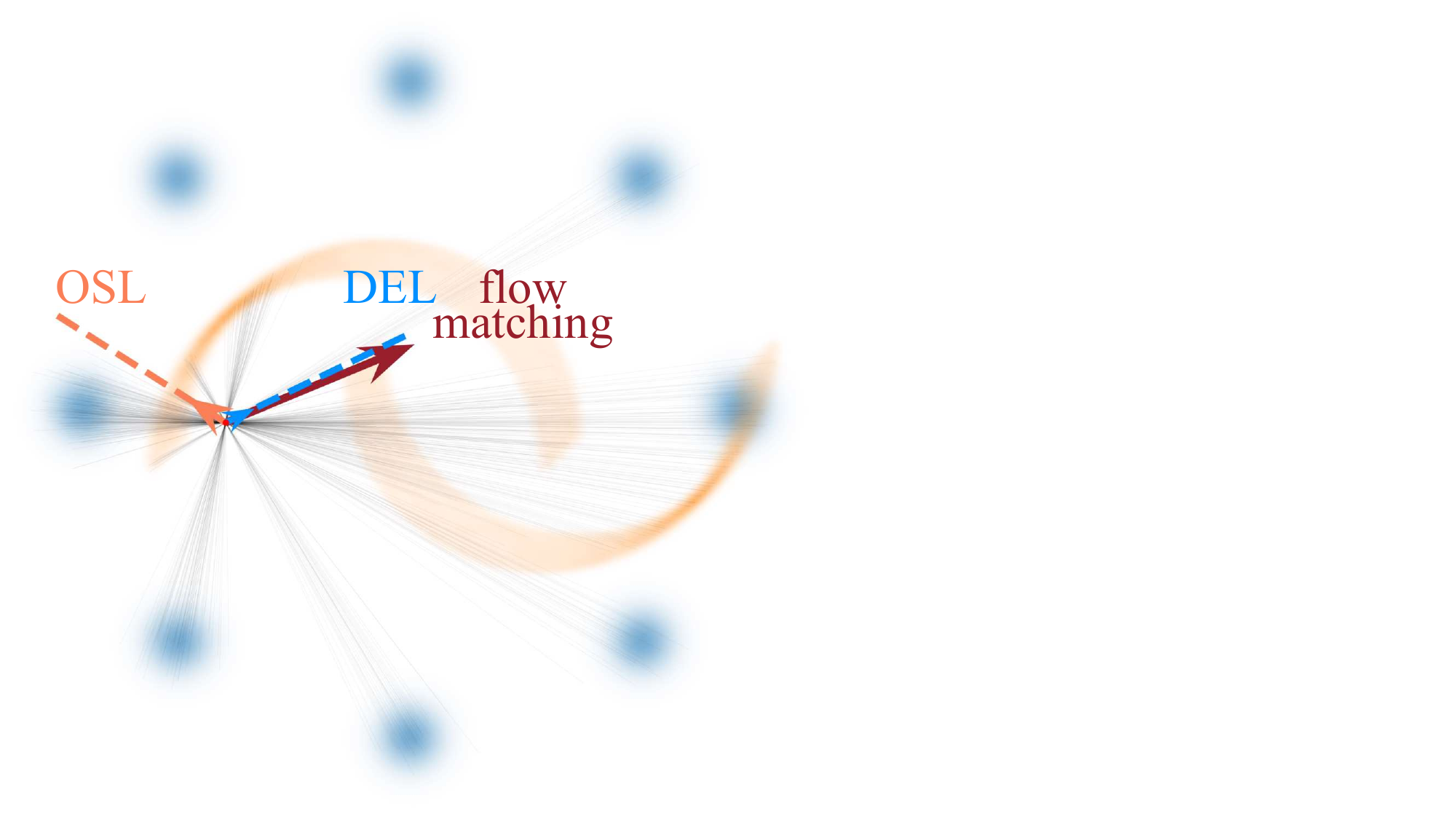}%
    \raisebox{0.3\height}{%
      \includegraphics[width=0.43\linewidth]{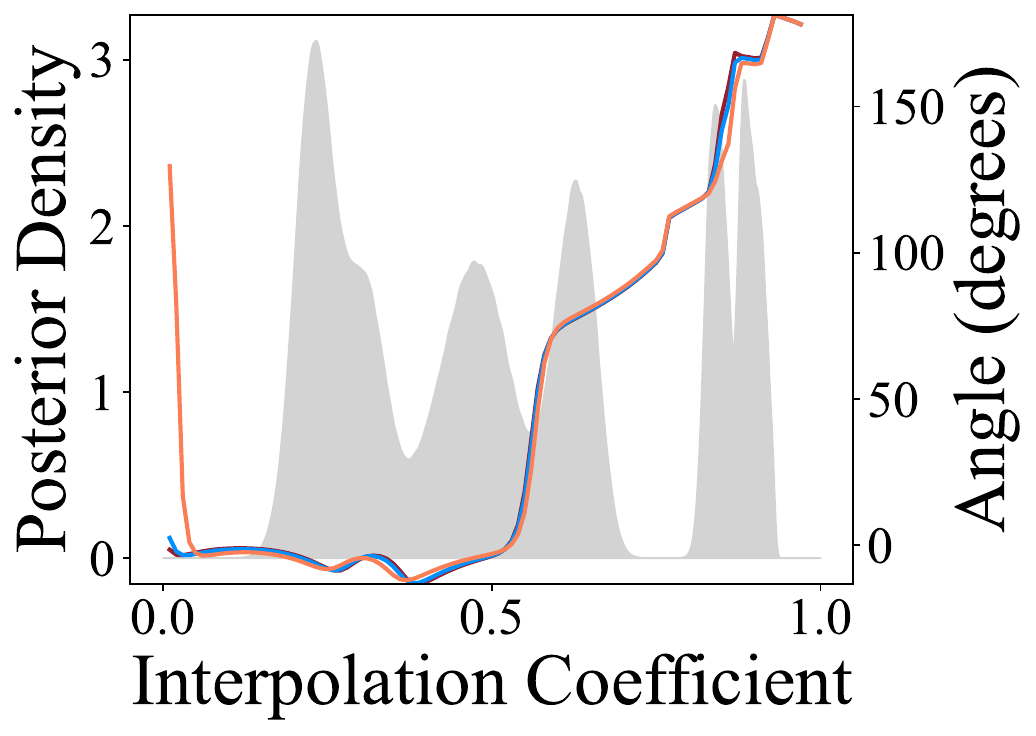}%
    }
    \caption{\textbf{Flow matching does not always help denoising.} We compare the minimizer vectors induced by flow matching (\textcolor[HTML]{971E2A}{red}), directional eikonal loss (\textcolor[HTML]{008FFF}{blue}), and one-step loss (\textcolor[HTML]{FA7F57}{orange}) at the same location and show the posterior over random matches from 8-Gaussians to two-moons conditioned on this point. The right panel shows the posterior of the interpolation coefficient $t$ (gray) and azimuth angles of each minimizer at that position conditioned on different values of $t$.}
    \label{fig:failure_fm}
\end{figure}

\begin{figure}[t]
    \centering
    \includegraphics[width=0.56\linewidth]{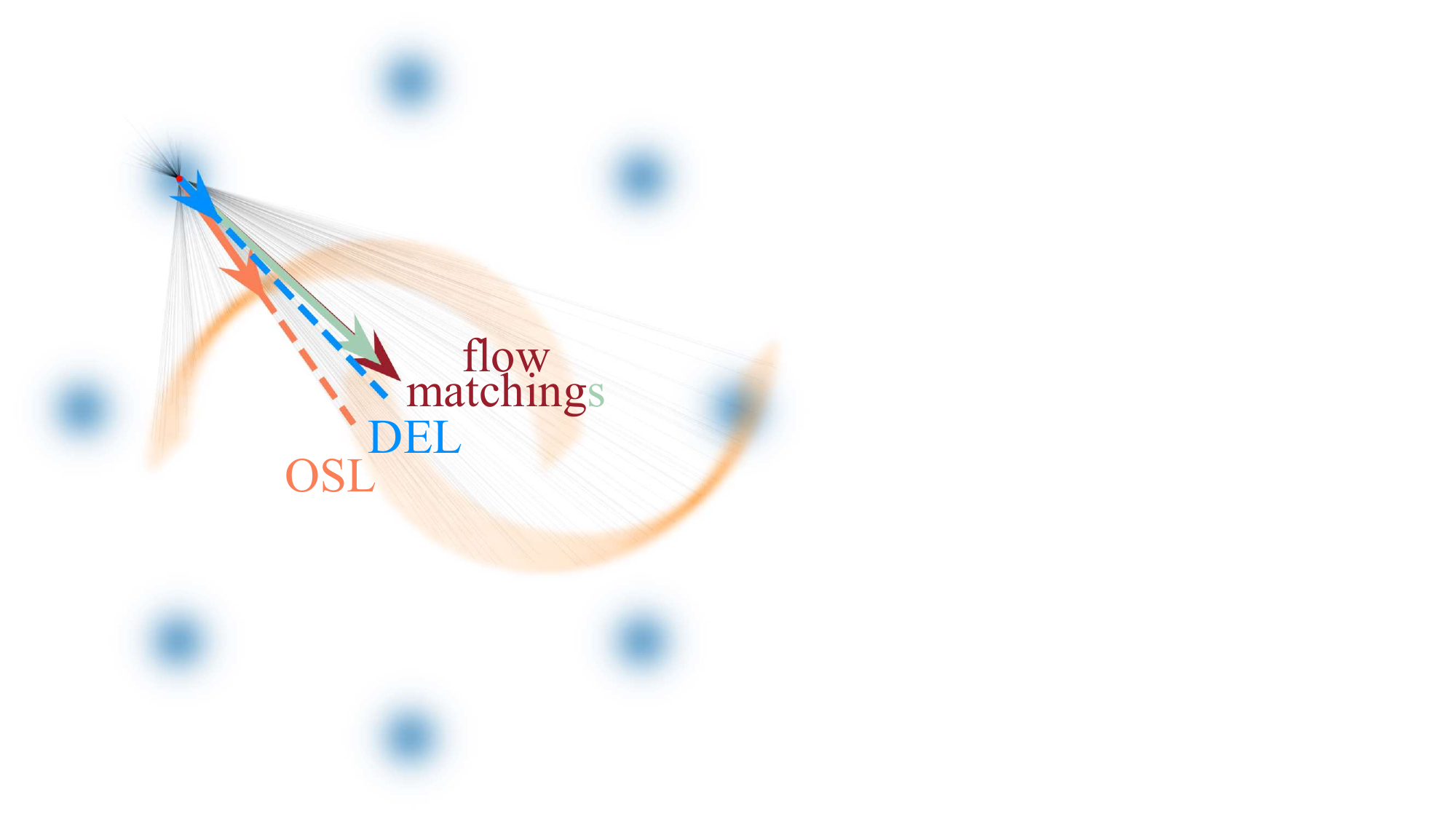}%
    \raisebox{0.3\height}{%
      \includegraphics[width=0.43\linewidth]{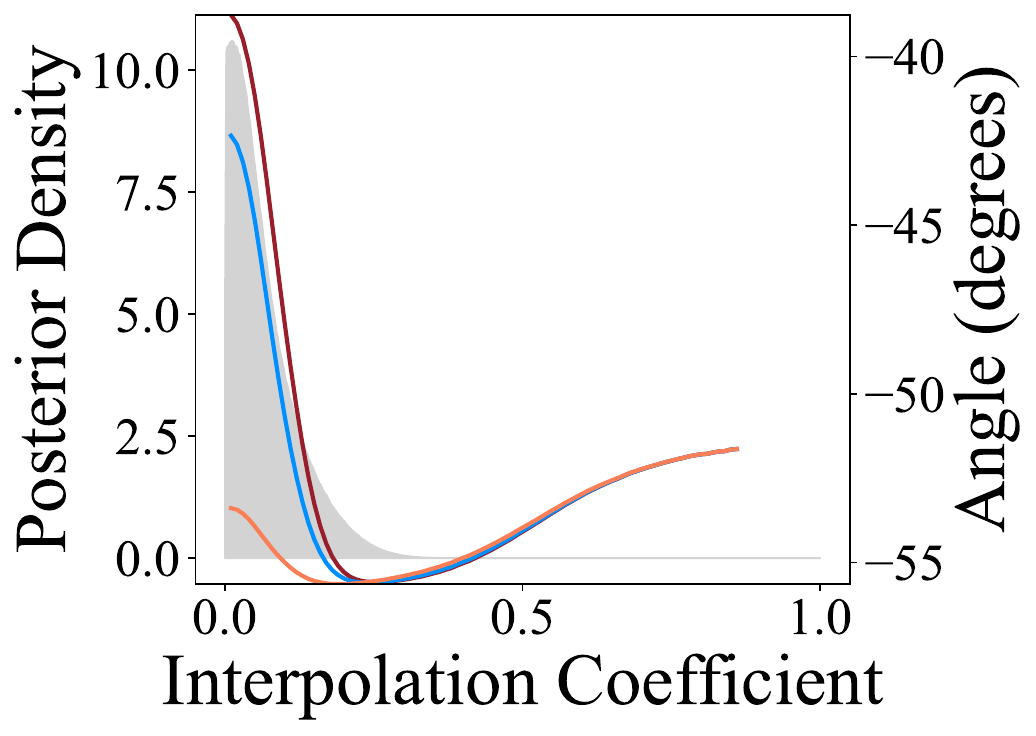}%
    }
    \caption{\textbf{Flow matching minimizers can be biased toward the data mean, whereas our losses intentionally emphasize closer targets.} Even with the $(1-t)^{-2}$ reweighting, the reweighted minimizer (\textcolor[HTML]{A3CEB3}{green}) remains very close to standard flow matching (\textcolor[HTML]{971E2A}{red}). In contrast, one-step loss (\textcolor[HTML]{FA7F57}{orange}) and directional eikonal loss (\textcolor[HTML]{008FFF}{blue}) yield minimizers that are less affected by the data mean, leading to more effective early-stage denoising on real images (see~\Cref{fig:fid-compare}). Right: posterior of $t$ (gray) and conditioned azimuths of the minimizers.}
    \label{fig:failure_reweighted}
\end{figure}

This failure mode is rooted in posterior ambiguity. Conditioning on $\mathbf{X}=\mathbf{x}$ while marginalizing the interpolation coefficient $T$ mixes match pairs from different time steps, so the posterior $(\mathbf{X}_0,\mathbf{X}_1)\mid \mathbf{X}=\mathbf{x}$ becomes multi-modal over heterogeneous directions and targets. Consequently, the squared-loss minimizer
$\mathbf{v}^{*}(\mathbf{x})=\mathbb{E}\!\left[\mathbf{X}_1-\mathbf{X}_0\mid \mathbf{X}=\mathbf{x}\right]$
averages incompatible displacements, which can produce a direction that does not correspond to any meaningful denoising trajectory. Conditioning additionally on $t$ collapses the posterior to a coherent match, and the minimizer becomes well-defined and interpretable~(\Cref{fig:time-conditioning}).

A natural modification is to reweight the flow-matching loss by $(1-t)^{-2}$ to emphasize larger $t$.
However, this reweighted loss still tends to yield directions that point toward the data mean, as shown in~\Cref{fig:failure_reweighted}.
Because its minimizer initially biases toward the geometric center of the target distribution, it does not induce locality (i.e., focus on closer targets) in the early stage, even in the absence of multi-modality.

By contrast, one-step loss explicitly prioritizes closer targets, while directional eikonal loss offers an adjustable intermediate concentration between one-step loss and standard flow matching. We formalize this locality property in the next subsection and show that it translates into lower FID in~\Cref{sec:exp} and~\Cref{fig:fid-compare}.
We extend this experiment to 8D in~\Cref{subsec:8d} and show we are qualitatively superior.

\begin{figure}[t]
  \centering

  \begin{minipage}[t]{0.5\linewidth}
    \vspace{0pt}\centering

    \subfloat[$t=0.25$\label{fig:fixedt1}]{
      \includegraphics[height=1.6cm]{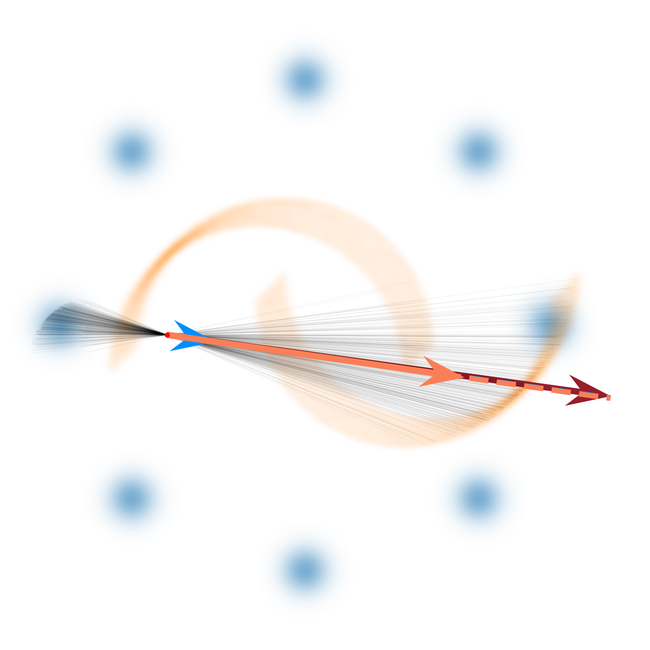}
    }\hspace{-1em}
    \subfloat[$t=0.90$\label{fig:fixedt2}]{
      \includegraphics[height=1.6cm]{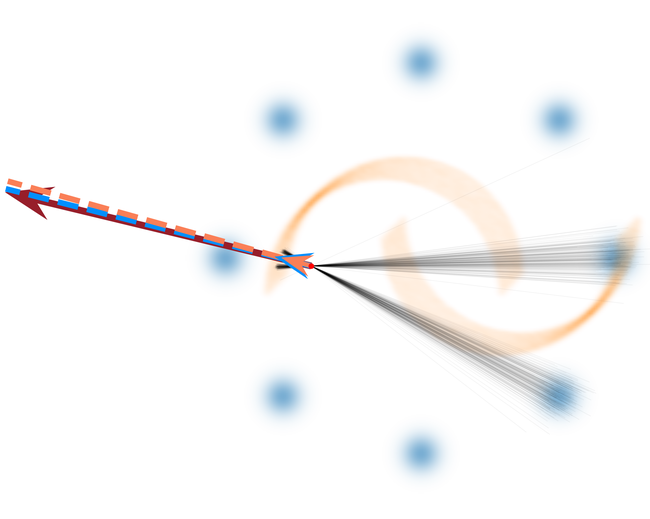}
    }

    \captionsetup{width=\linewidth} 
    \vspace{-0.2em}
    \captionof{figure}{With an extra condition on the interpolation coefficient, the directions become meaningful and consistent.}
    \label{fig:time-conditioning}
  \end{minipage}
  \hspace{0.02\linewidth} 
  \begin{minipage}[t]{0.4\linewidth}
    \vspace{0pt}\centering

    \includegraphics[height=1.6cm]{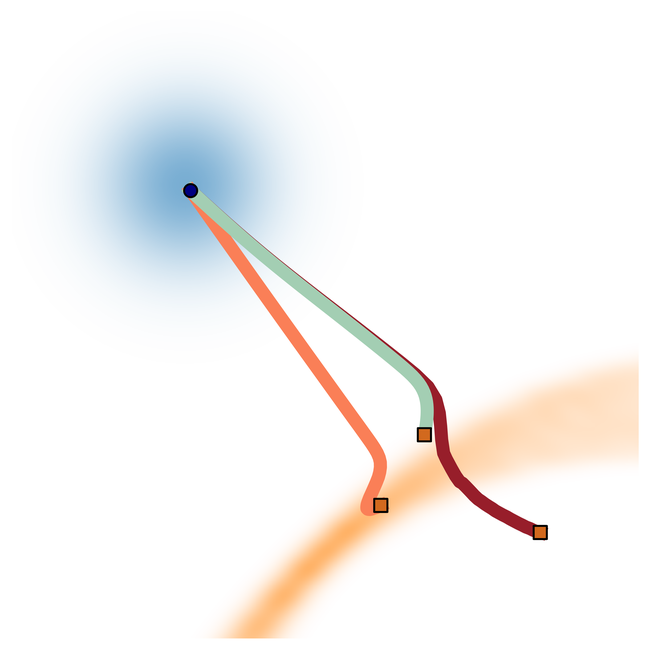}

    \captionsetup{width=\linewidth}
    \vspace{-0.2em}
    \captionof{figure}{Denoising paths with minimizers of OSL (\textcolor[HTML]{FA7F57}{orange}), FM loss (\textcolor[HTML]{971E2A}{red}), or reweighted FM loss (\textcolor[HTML]{A3CEB3}{green}).}

    \label{fig:paths}
  \end{minipage}
\end{figure}

\subsection{Properties of Our Loss Design}

\noindent\textbf{(i) Flow matching loss vs.\ directional eikonal loss.} Time-unconditional flow matching loss makes longer pairs dominate the direction, whereas directional eikonal loss is length-neutral.\\
\noindent\textbf{(ii) One-step loss.} One-step loss explicitly concentrates supervision on nearby targets as a feature via inverse distance reweighting, thereby reducing the time-marginalized target ambiguity.

Full mathematical statements and proofs are deferred to~\Cref{app:properties}.
Inverse distance reweighting in~\Cref{eq:os_discrete} also makes the learned ``distance'' explicit: under one-step loss,  the model learns a displacement, with supervision biased toward closer targets via inverse distance weighting.

\subsection{Exploratory Analysis of an Image Dataset}

We identify key similarities between the 2D toy setting and image space, and quantitatively demonstrate the advantage of one-step loss in~\Cref{fig:fid-compare}.

First, \Cref{fig:unbalanced} illustrates that the flow matching loss can induce highly imbalanced sample weights on CIFAR-10 because of the pairwise length factor
$\ell:=\|\mathbf{X}_1-\mathbf{X}_0\|_2$ in~\Cref{thm:dir_fm_vs_de}.

\begin{figure}[t]
  \centering
  \begin{subfigure}[t]{0.49\columnwidth}
    \centering
    \includegraphics[width=\linewidth]{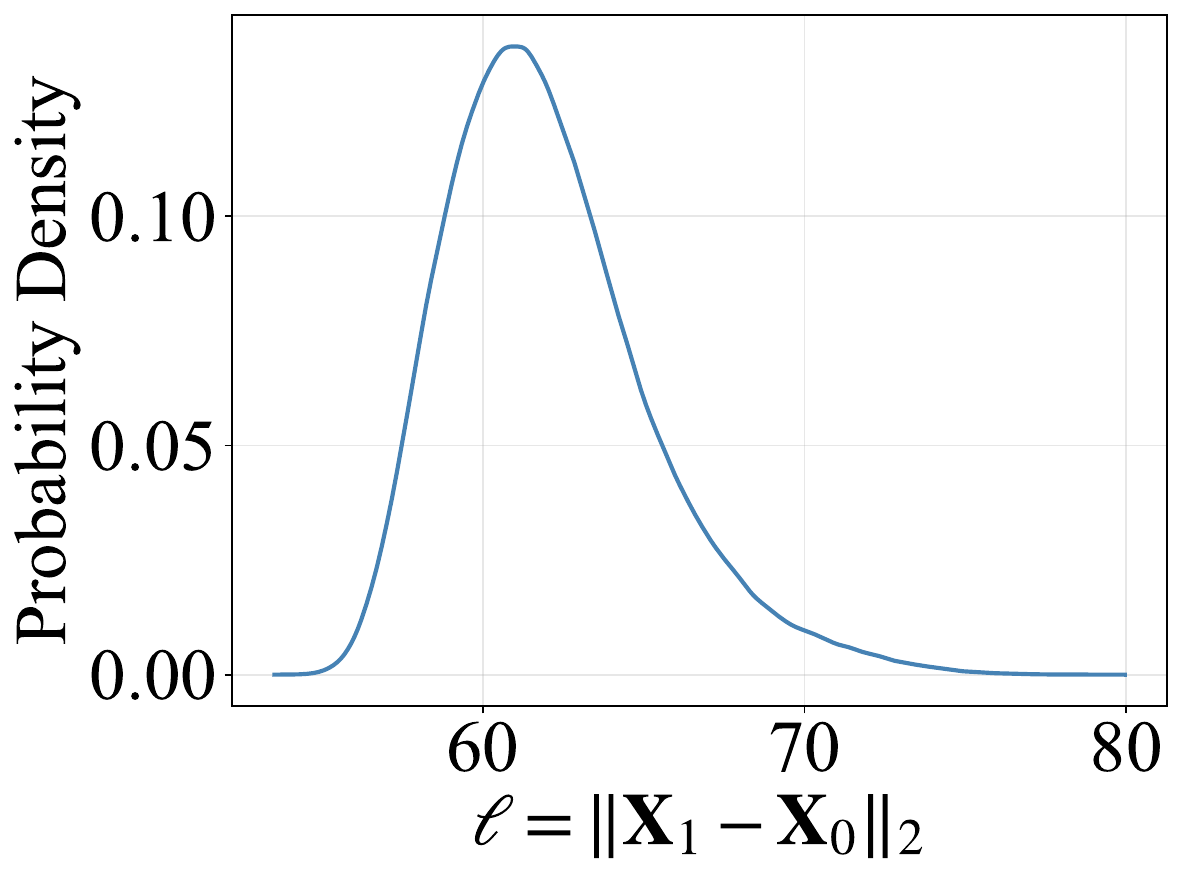}
    \caption{The pairwise length factor $\ell$ varies widely from $50$ to $80$, acting as an implicit random reweighting in denoising.}
  \end{subfigure}\hfill
  \begin{subfigure}[t]{0.49\columnwidth}
    \centering
    \includegraphics[width=\linewidth]{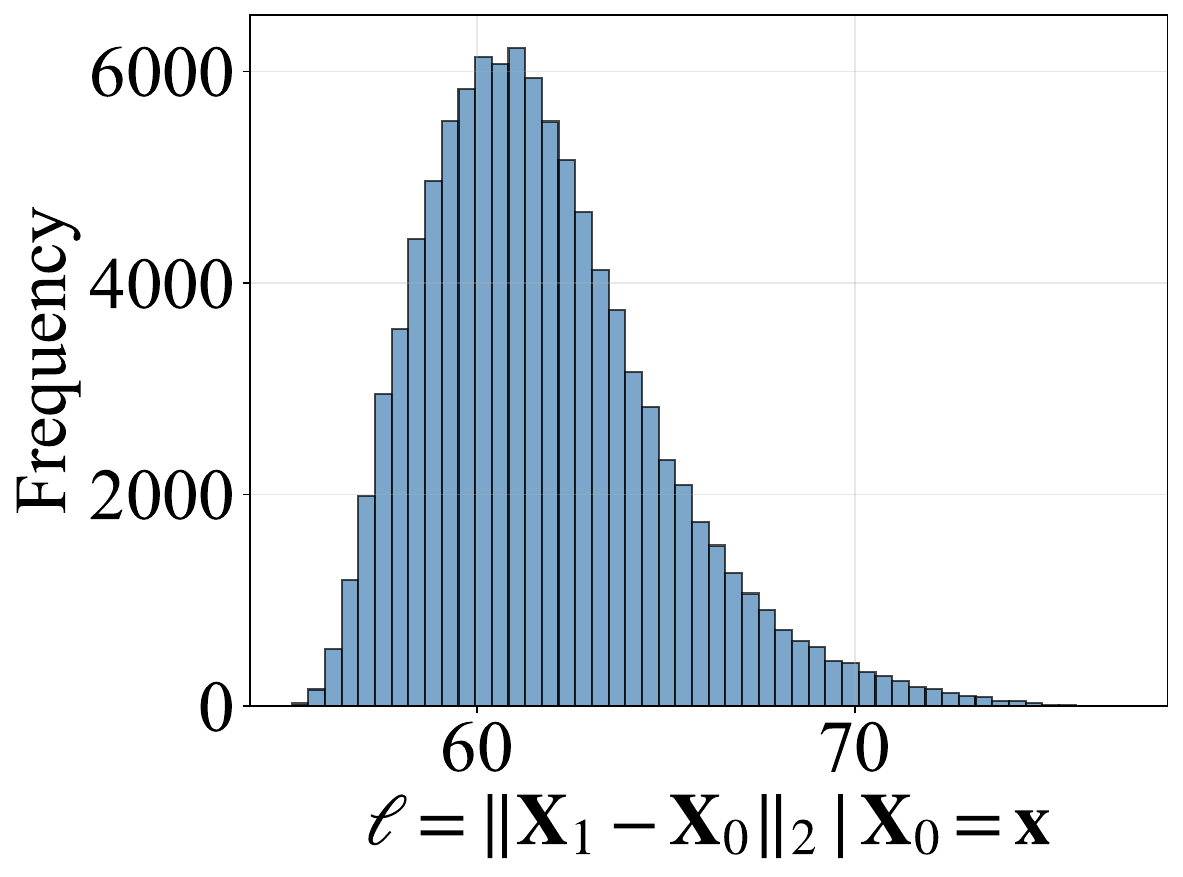}
    \caption{For a fixed noise sample $\mathbf{x}$, distances from real images to $\mathbf{x}$ also exhibit high variance.}
  \end{subfigure}
  \caption{The length factor $\ell$ is high-variance and strongly right-skewed, so the flow matching loss implicitly upweights longer pairs (see~\Cref{thm:dir_fm_vs_de}). In contrast, our losses reduce this length-induced bias.}
  \label{fig:unbalanced}
\end{figure}

\begin{figure}[t]
  \centering
  \begin{minipage}[t]{0.39\columnwidth}
    \vspace{0pt}\centering
    \includegraphics[width=\linewidth]{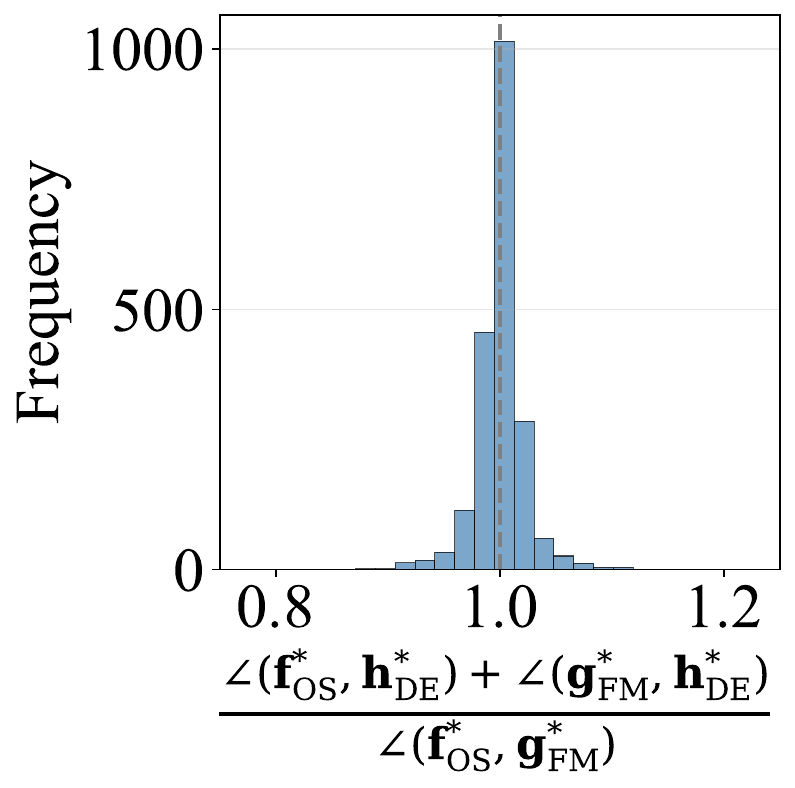}

    \captionsetup{type=figure,width=\linewidth}
    \vspace{-1em}
    \caption{An additive relation like the 2D angles is observed with a mean 1.0003 in image space.}
    \label{fig:verify-equa}
  \end{minipage}\hfill
  \begin{minipage}[t]{0.59\columnwidth}
    \vspace{0pt}\centering
    \includegraphics[width=\linewidth]{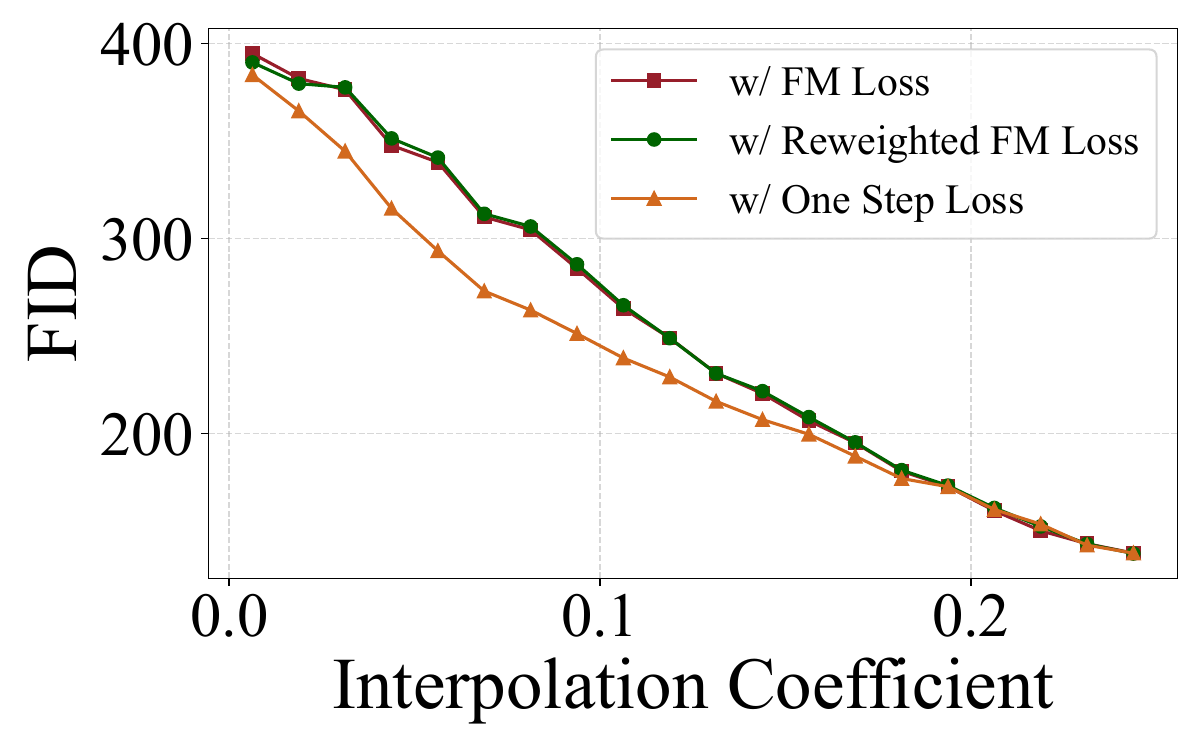}

    \captionsetup{type=figure,width=\linewidth}
    \vspace{-0.6em}
    \caption{With all other variables controlled, one-step loss provides a denoising direction that achieves markedly lower FID, corroborating~\Cref{fig:failure_reweighted}.}
    \label{fig:fid-compare}
  \end{minipage}
\end{figure}

Second, fixing the same condition $t=0$, we sample different $\mathbf{X} =\mathbf{X}_0 \sim \Normal(\mathbf{0},\mathbf{I}_d) $ and numerically compute their optimal denoising directions given by directional eikonal loss, one-step loss, and flow matching by visiting all $\mathbf{X}_1 \in \mathcal{D}$:
$\mathbf{h}_{\mathrm{DE}}^{*}$, $\mathbf{f}_{\mathrm{OS}}^{*}:=\widehat{\mathbf{s}}_{\theta}^{\,*}-\mathbf{x}$, and $\mathbf{g}_{\mathrm{FM}}^{*}$, respectively.
As in the 2D case, $\mathbf{h}_{\mathrm{DE}}^{*}$ lies between $\mathbf{f}_{\mathrm{OS}}^{*}$ and $\mathbf{g}_{\mathrm{FM}}^{*}$.
We further empirically verify the additive angle relation
$\angle(\mathbf{f}_{\mathrm{OS}}^{*},\mathbf{g}_{\mathrm{FM}}^{*})
=\angle(\mathbf{f}_{\mathrm{OS}}^{*},\mathbf{h}_{\mathrm{DE}}^{*})
+\angle(\mathbf{g}_{\mathrm{FM}}^{*},\mathbf{h}_{\mathrm{DE}}^{*})$
in~\Cref{fig:verify-equa} using $2048$ samples of $\mathbf{x}\sim\Normal(\mathbf{0},\mathbf{I}_d)$.

Finally, with all other settings fixed, one-step loss achieves substantially lower FID than flow matching in~\Cref{fig:fid-compare}, consistent with the failure case in~\Cref{fig:failure_reweighted} (see~\Cref{subsec:ablation}).

\section{Experiments}
\label{sec:exp}

\begin{table}[t]
\centering
\captionsetup{skip=6pt} %
\caption{2D point-cloud distance metrics between $10$k generated samples and $10$k target two-moons samples for the 8-Gaussians $\rightarrow$ two-moons task (lower is better).}
\label{tab:2d_metrics}

\small
\setlength{\tabcolsep}{6pt}
\renewcommand{\arraystretch}{1.05}

\begin{tabular}{>{\raggedright\arraybackslash}p{0.45\columnwidth}c r r r}
\toprule
\textbf{Method} & \textbf{w/o $t$} & \textbf{W2 $\downarrow$} & \textbf{HD $\downarrow$} & \textbf{CD $\downarrow$} \\
\midrule
FM~\cite{lipman2022flow}               & \texttimes  & 0.577 & 1.337 & 0.026 \\
OT-CFM~\cite{tong2023improving}        & \texttimes  & \textbf{0.218} & 1.233 & 0.019 \\
SB-CFM~\cite{tong2023improving}        & \texttimes  & \textbf{0.218} & 1.807 & 0.028 \\
AM~\cite{neklyudov2023action}          & \texttimes  & 0.581 & 2.154 & 0.140 \\
AM Swish (as above)                    & \texttimes  & 0.442 & 1.719 & 0.033 \\
EM~\cite{balcerak2025energy}           & \checkmark  & 0.523 & 1.279 & 0.031 \\
EqM~\cite{wang2025equilibrium}         & \checkmark  & 1.433 & 2.034 & 0.260 \\
Distance Marching (ours)               & \checkmark  & 1.435 & \textbf{0.605} & \textbf{0.005} \\
\bottomrule
\multicolumn{5}{l}{\scriptsize W2: Wasserstein-2; HD: Hausdorff distance; CD: Chamfer distance.}
\end{tabular}
\end{table}

\subsection{Mechanism study in 2D}

Using our losses, we train a simple 2D scalar network to learn a distance-like field, as shown in~\Cref{fig:2d_motivation}, demonstrating the potential of distance fields for data generation.
The network follows the exact losses in~\Cref{subsec:dfmodeling} and performs sphere tracing~(\Cref{eq:image_st}) with Markov chain Monte Carlo (MCMC).
This distance-like field guides samples to achieve the lowest Hausdorff and Chamfer distances in~\Cref{tab:2d_metrics}, suggesting that an accurate distance-like field makes such transport possible by providing a reliable geometric signal that keeps trajectories on the support, fills in uncovered regions, and avoids drifting off manifold.
As shown in~\Cref{app:2d_exp}, other energy-based baselines do not recover an energy landscape that provides a reliable geometric signal and consequently underperform in the metrics regardless of whether MCMC is used.

\subsection{Datasets}
We evaluate the image generation performance on CIFAR-10~\cite{krizhevsky2009learning} and ImageNet-256~\cite{deng2009imagenet}.
On ImageNet, we use class-conditional generation to assess semantics; the class label selects a class-specific target manifold (i.e., different level sets) for our model.

\subsection{Implementation}
We adopt a U-Net backbone from \citet{lipman2024flow} for CIFAR-10 generation and a transformer-based backbone from \citet{ma2024sit} for class-conditional ImageNet generation.
We keep their original model implementations to ensure that no other architectural differences influence the results, as shown in~\Cref{fig:architecture}.
To output $u_{\theta}(\mathbf{x})$, we introduce a lightweight distance-prediction network (two-layer CNN; relative parameter increase of 0.009\%--0.19\%).
For more details, see~\Cref{sec:exp_set}.

\begin{figure}[t]
  \centering
  \includegraphics[width=0.9\linewidth]{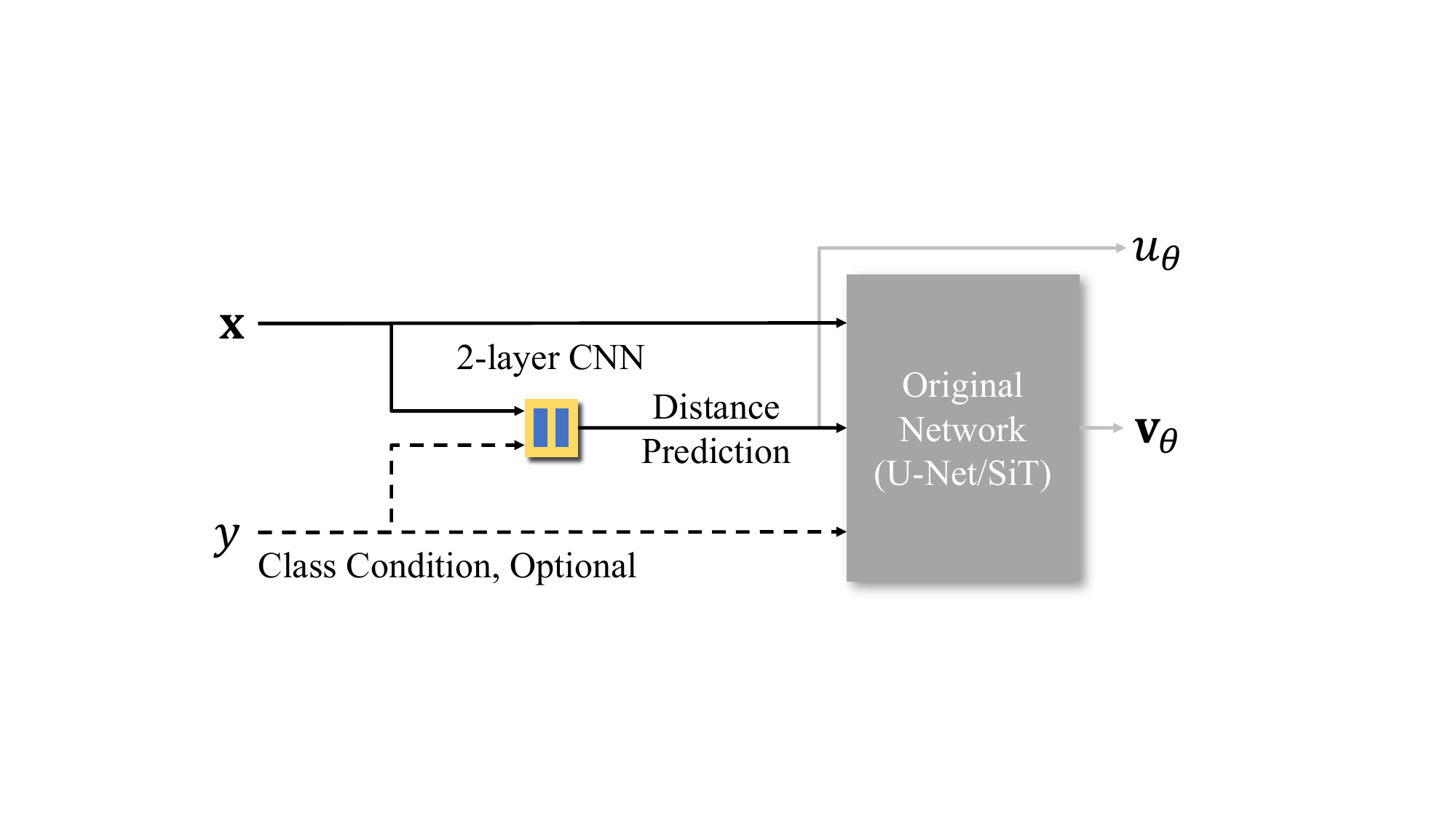}
  \caption{Image generation pipeline. We predict $\mathbf{v}_\theta$ directly without enforcing $\mathbf{v}_\theta=\nabla u_\theta$, reducing computation cost and instability while preserving the key properties in~\Cref{sec:analysis}.}
  \label{fig:architecture}
\end{figure}

\subsection{Unconditional CIFAR-10 Generation}
\label{subsec:cifar10}

\newcolumntype{L}{>{\raggedright\arraybackslash}X}
\newcommand{\darrow}{\ensuremath{\downarrow}}

\begin{table}[t]
\caption{FID on unconditional CIFAR-10 generation. Bold and underline: best; bold: second-best.}
\centering
\small
\setlength{\tabcolsep}{6pt}
\renewcommand{\arraystretch}{1.10}

\begin{tabularx}{\columnwidth}{@{}L c r@{}}
\toprule
\textbf{Method} & \textbf{\mbox{w/o $t$}} & \textbf{FID}~\darrow \\
\midrule
NCSN++~\cite{song2020score}                                & \texttimes & 2.45 \\
DDPM++~\cite{kim2021soft}                                  & \texttimes & 3.45 \\
FM~\cite{lipman2022flow}                        & \texttimes & 6.35 \\
EDM~\cite{karras2022elucidating} & \texttimes & \underline{\textbf{1.99}} \\
1-RF FM~\cite{liu2022flow}  & \texttimes & 2.53 \\
Cooperative DRL-large~\cite{zhu2023learning}              & \texttimes & 3.68 \\
CLEL-large~\cite{lee2023guiding}                           & \checkmark & 8.61 \\
uEDM~\cite{sun2025noise}                                   & \checkmark & \textbf{2.23} \\
EM~\cite{balcerak2025energy}                  & \checkmark & 3.34 \\
EqM~\cite{wang2025equilibrium}            & \checkmark & 3.36 \\
Distance Marching (ours, gradient descent)                  & \checkmark & 2.54 \\
Distance Marching (ours, sphere tracing)                    & \checkmark & \textbf{2.23} \\
\bottomrule
\end{tabularx}
\label{tab:cifar-10}
\end{table}

\dm offers a simple yet effective time-unconditional framework for CIFAR-10 image generation, as shown in~\Cref{tab:cifar-10}.
\Cref{fig:uncurated_cifar10} shows uncurated samples.

Our method ranks first (tied) among time-unconditional methods and first among energy-based models.
Compared to the other tied time-unconditional model, uEDM~\cite{sun2025noise}, its sample quality drops substantially when scaling to class-conditional ImageNet~(\Cref{tab:fid_comp}), which we attribute to its delicate coefficient schedule.

\begin{figure}[t]
    \centering
    \includegraphics[width=0.85\linewidth]{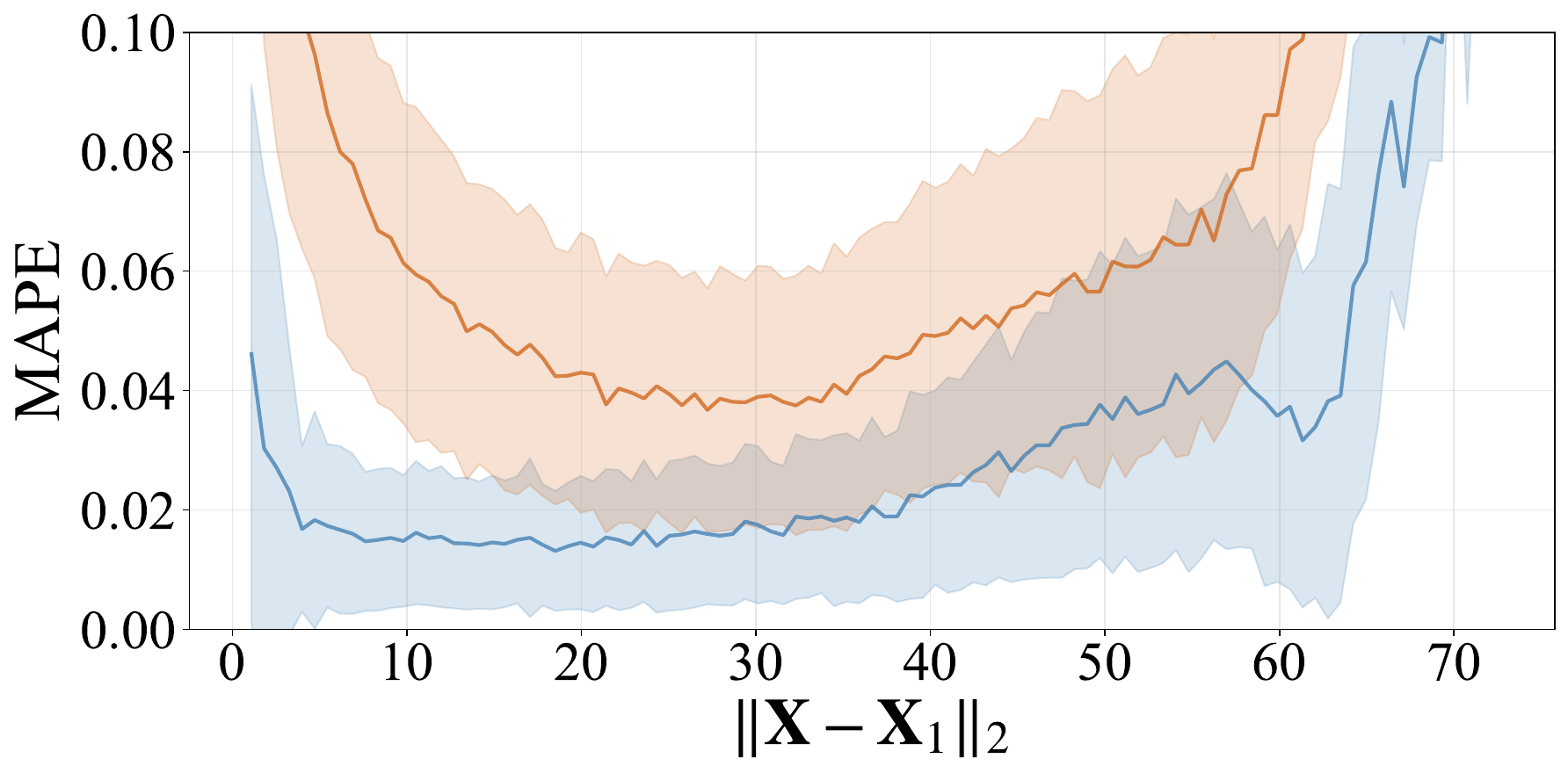}
    \caption{Distance estimation MAPE. Blue: $u_\theta$ vs.\ $\sqrt{\|\mathbf{x}-\mathbf{x}_1\|_2^2 + c_0}$. Orange: $\|u_\theta\mathbf{v}_\theta\|$ vs.\ $\|\mathbf{x}-\mathbf{x}_1\|_2$. Shaded: $\pm 1$ std within each bin.}
    \label{fig:dist_error}
\end{figure}

\noindent\textbf{Distance estimate evaluation.}
We further evaluate the accuracy of distance estimation~(\Cref{fig:dist_error}).
Even with a tiny distance head $u_\theta$, the estimation error remains below 5\% for most samples when using smoothed distance $\sqrt{\|\mathbf{x}-\mathbf{x}_1\|_2^2 + c_0}$ as the reference.
We observe systematic underestimation when $\|\mathbf{x} - \mathbf{x}_1\|$ is large, consistent with our earlier analysis that training is dominated by shorter paths.

\begin{table}[t]
\caption{FID on class-conditional ImageNet generation with similar model sizes and same NFE (250).}
\centering
\small
\setlength{\tabcolsep}{6pt}
\renewcommand{\arraystretch}{1.10}

\begin{tabularx}{\columnwidth}{@{}L c r@{}}
\toprule
\textbf{Method} & \textbf{w/o $t$} & \textbf{FID}~\darrow \\
\midrule

DiT-B/2~\cite{peebles2023scalable}                             & \texttimes & 43.47 \\
SiT-B/2 (standard FM)~\cite{ma2024sit}                        & \texttimes & 36.68 \\
SiT-B/2 (best; $w_t^{\mathrm{KL},\,\eta}\!$ SDE)~\cite{ma2024sit}                               & \texttimes & 33.02 \\
uEDM~\cite{sun2025noise}                                      & \checkmark & 40.80 \\
Equilibrium Matching-B/2~\cite{wang2025equilibrium}           & \checkmark & 32.85 \\

Distance Marching-B/2 (ours, sphere tracing)                  & \checkmark & 33.44 \\
Distance Marching-B/2 (ours, gradient descent)                & \checkmark & \textbf{32.16} \\
\bottomrule
\end{tabularx}

\label{tab:imagenet}
\end{table}

\begin{figure}[t]
  \centering

  \begin{subfigure}[t]{0.48\linewidth}
    \centering
    \includegraphics[width=\linewidth]{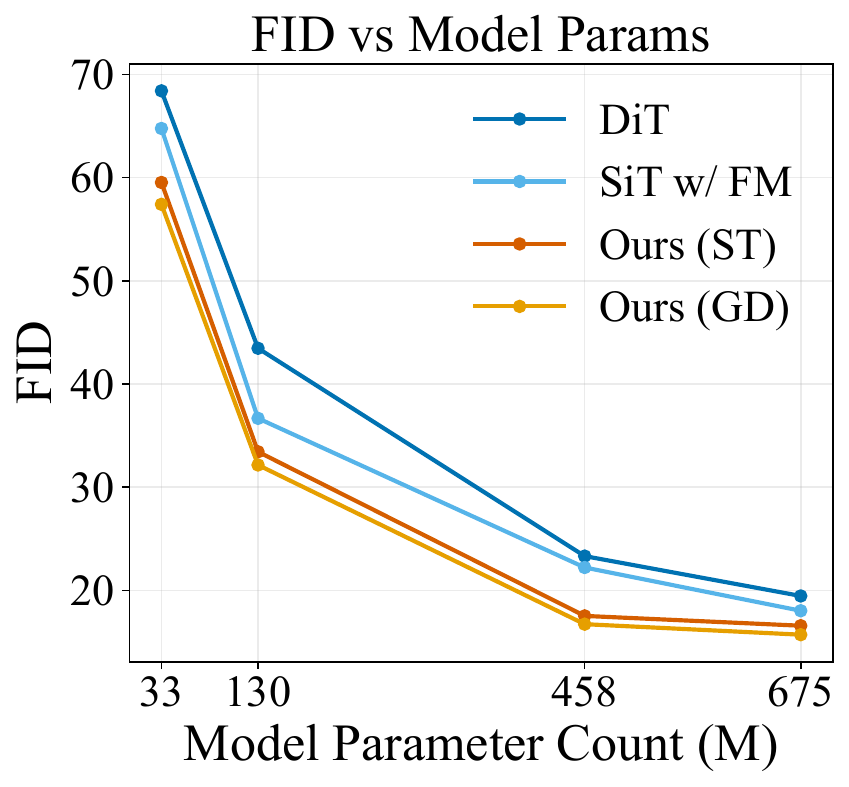}
  \end{subfigure}\hfill
  \begin{subfigure}[t]{0.48\linewidth}
    \centering
    \includegraphics[width=\linewidth]{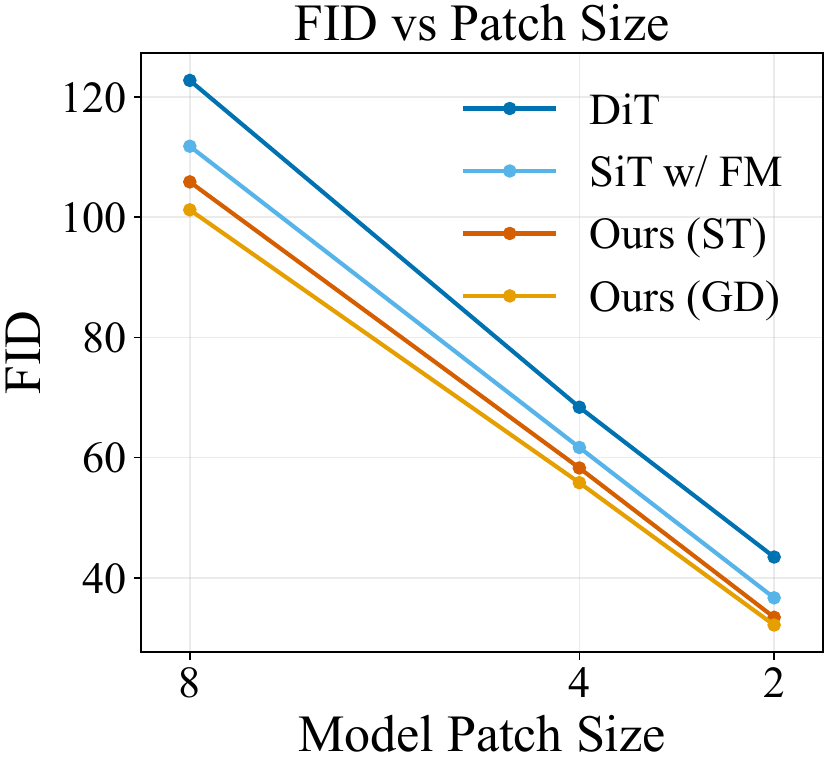}
  \end{subfigure}

  \caption{\dm consistently outperforms DiT and SiT across model sizes and patch sizes, demonstrating strong scalability.
  Compared to standard flow matching SiT with time input, our method reduces FID by 9.5\%--24.7\% and 13.6\% on average.}
  \label{fig:different_sizes}
\end{figure}

\subsection{Time-unconditional ImageNet Generation}

\Cref{tab:imagenet} shows that our method scales well to class-conditional ImageNet generation with the lowest FID.
Across all backbone sizes, our method achieves 13.6\% lower FID on average than flow matching, which requires time input whereas ours does not~(\Cref{fig:different_sizes}).

\dm also supports classifier-free guidance (CFG) settings for ImageNet experiments; we present samples in~\Cref{fig:curated_imagenet_grid}.

\noindent\textbf{Comparison between Gradient Descent and Sphere Tracing.}
Although sphere tracing yields a slightly worse final result on ImageNet at 250 steps, it converges faster than gradient descent~(\Cref{fig:faster}).
This result also empirically supports convergence without structurally constraining $\mathbf{v}_\theta$ to be the gradient of a scalar potential.

\begin{figure}
    \centering
    \includegraphics[width=0.99\linewidth]{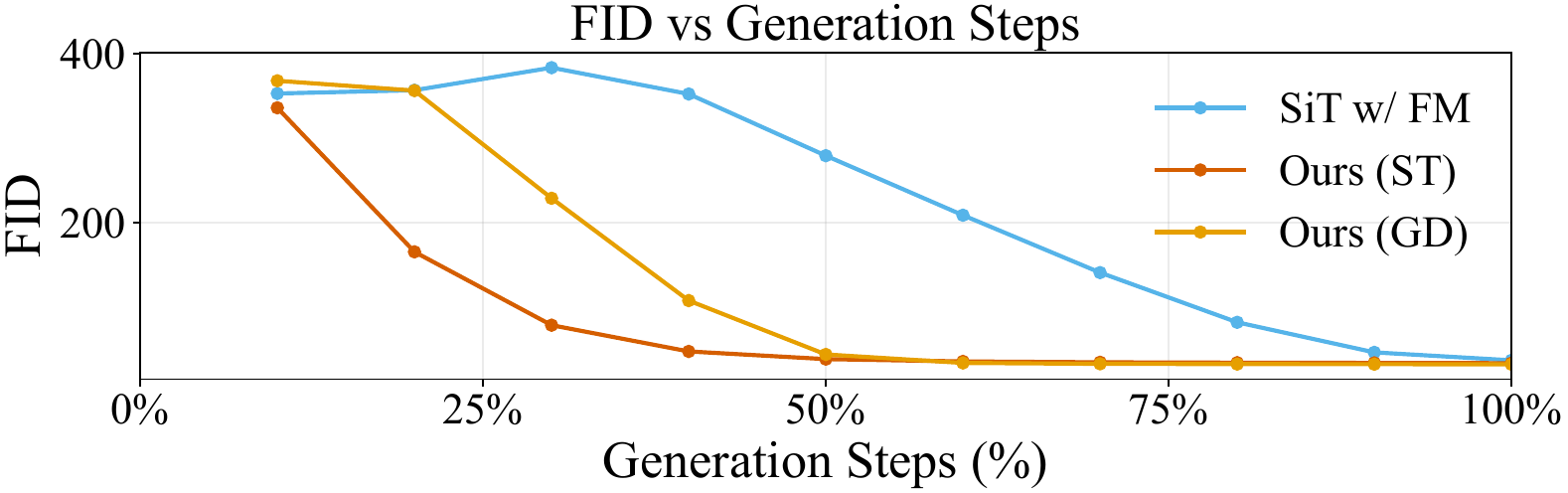}

\newcommand{\labw}{0.07\linewidth}
\newcommand{\thumbw}{\dimexpr(\linewidth-\labw)/11\relax}
\newcommand{\rowlabel}[1]{%
  \vbox to \thumbw{%
    \vss
    \hbox to \labw{\hss\footnotesize #1\hss}%
    \vss
  }%
}

\noindent\vbox{\offinterlineskip
  \hbox to \linewidth{%
    \rowlabel{ST:}%
    \includegraphics[width=\thumbw,height=\thumbw]{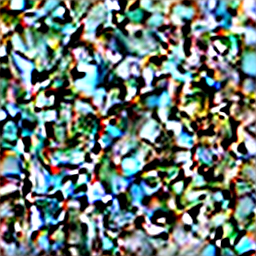}%
    \includegraphics[width=\thumbw,height=\thumbw]{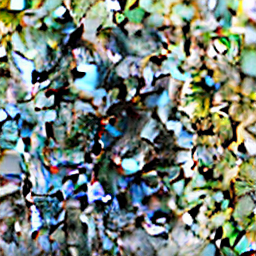}%
    \includegraphics[width=\thumbw,height=\thumbw]{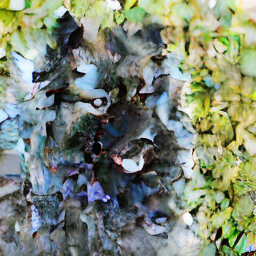}%
    \includegraphics[width=\thumbw,height=\thumbw]{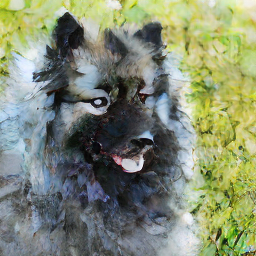}%
    \includegraphics[width=\thumbw,height=\thumbw]{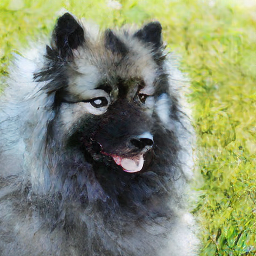}%
    \includegraphics[width=\thumbw,height=\thumbw]{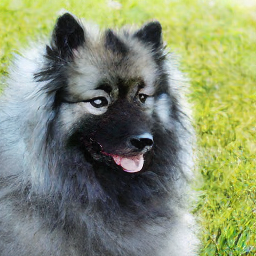}%
    \includegraphics[width=\thumbw,height=\thumbw]{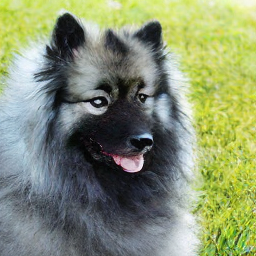}%
    \includegraphics[width=\thumbw,height=\thumbw]{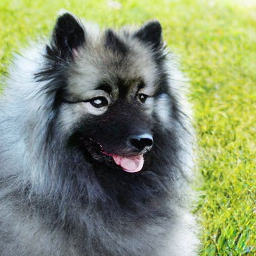}%
    \includegraphics[width=\thumbw,height=\thumbw]{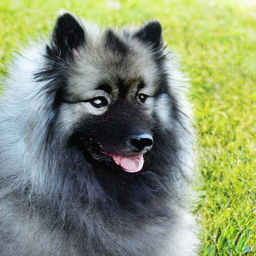}%
    \includegraphics[width=\thumbw,height=\thumbw]{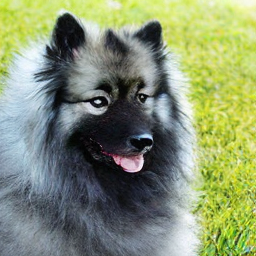}%
    \includegraphics[width=\thumbw,height=\thumbw]{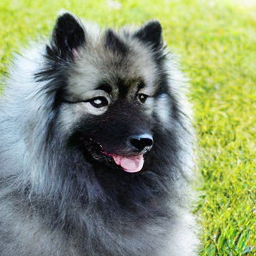}%
  }%
  \hbox to \linewidth{%
    \rowlabel{GD:}%
    \includegraphics[width=\thumbw,height=\thumbw]{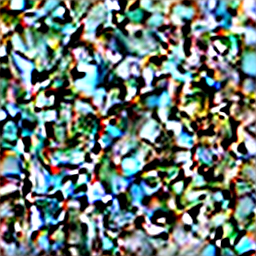}%
    \includegraphics[width=\thumbw,height=\thumbw]{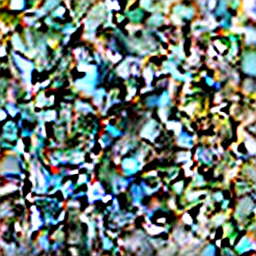}%
    \includegraphics[width=\thumbw,height=\thumbw]{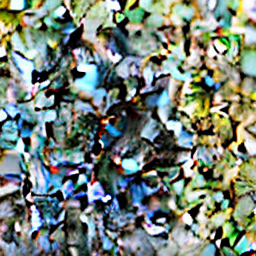}%
    \includegraphics[width=\thumbw,height=\thumbw]{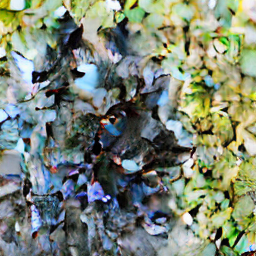}%
    \includegraphics[width=\thumbw,height=\thumbw]{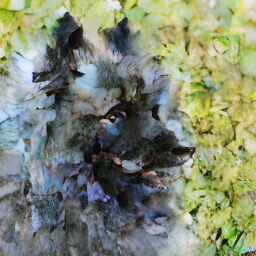}%
    \includegraphics[width=\thumbw,height=\thumbw]{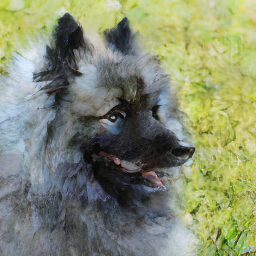}%
    \includegraphics[width=\thumbw,height=\thumbw]{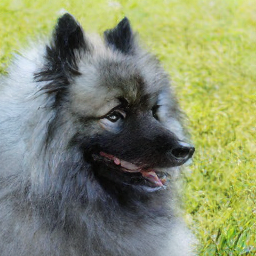}%
    \includegraphics[width=\thumbw,height=\thumbw]{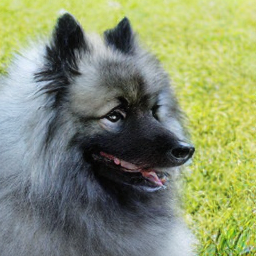}%
    \includegraphics[width=\thumbw,height=\thumbw]{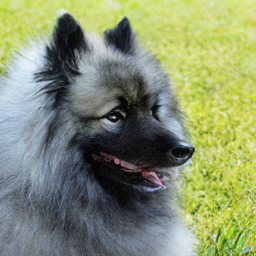}%
    \includegraphics[width=\thumbw,height=\thumbw]{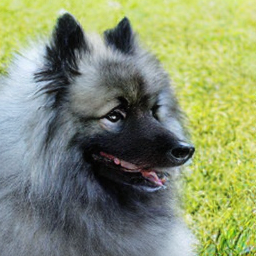}%
    \includegraphics[width=\thumbw,height=\thumbw]{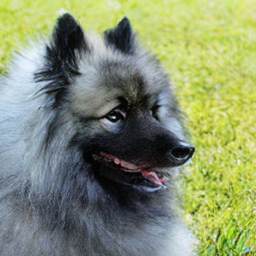}%
  }%
  \hbox to \linewidth{%
    \rowlabel{FM:}%
    \includegraphics[width=\thumbw,height=\thumbw]{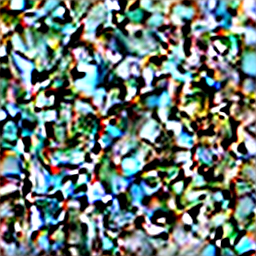}%
    \includegraphics[width=\thumbw,height=\thumbw]{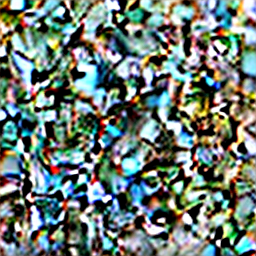}%
    \includegraphics[width=\thumbw,height=\thumbw]{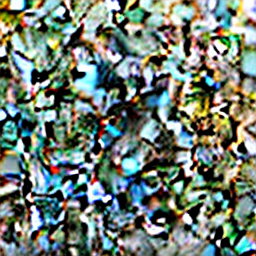}%
    \includegraphics[width=\thumbw,height=\thumbw]{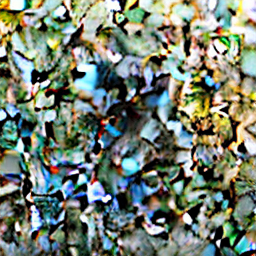}%
    \includegraphics[width=\thumbw,height=\thumbw]{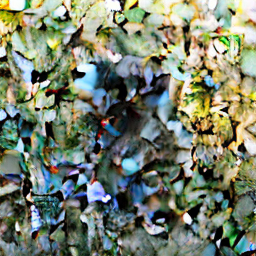}%
    \includegraphics[width=\thumbw,height=\thumbw]{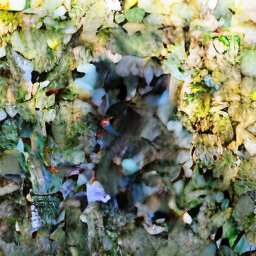}%
    \includegraphics[width=\thumbw,height=\thumbw]{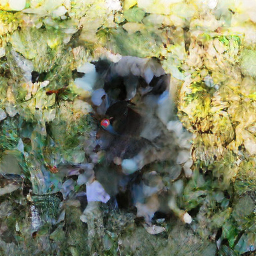}%
    \includegraphics[width=\thumbw,height=\thumbw]{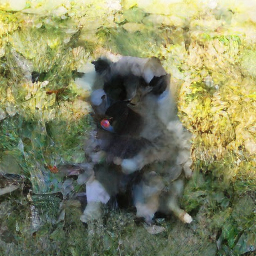}%
    \includegraphics[width=\thumbw,height=\thumbw]{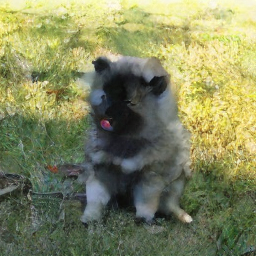}%
    \includegraphics[width=\thumbw,height=\thumbw]{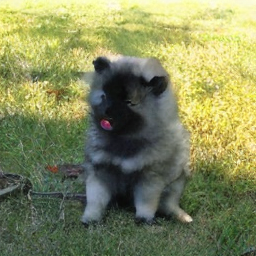}%
    \includegraphics[width=\thumbw,height=\thumbw]{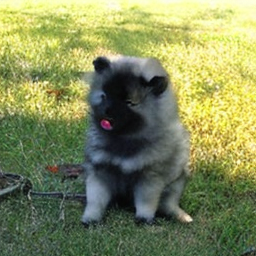}%
  }%
}

    \caption{With step sizes tuned for best 250-step performance, gradient descent and sphere tracing produce perceptually better images earlier than flow matching (SiT+dopri5).
    We reach flow matching's final FID within $<\!60\%$ of the steps.
    FID uses no CFG (B/2 base); sequences use CFG$=4$ (XL/2 base, every 10\% steps).
    See~\Cref{app:gpc} for more details.}
    \label{fig:faster}
\end{figure}

\subsection{Ablation Study--Denoising Direction Evaluation}
\label{subsec:ablation}

We verify the effect of locality on denoising quality~(\Cref{fig:failure_reweighted}) by comparing FID across objectives while fixing the model architecture and training setup and changing only the loss.

\Cref{fig:fid-compare} evaluates denoising quality at early times using the generation process in~\Cref{def:genproc}. The denoising update depends on the objective: under one-step loss, the network predicts an additive correction that is directly added to the input, whereas for flow matching objectives we rescale the output by $1-t$ before applying the update. We exclude directional eikonal loss because it does not define an explicit denoising destination~(\Cref{fig:verify-equa}).

After training all models for the same number of epochs until convergence, one-step loss yields lower FID, consistent with the benefit of learning from more local targets.
\Cref{fig:full_fid_comparison} provides full curves.

Additional ablations are provided in~\Cref{app:ablation}: verify mode collapse and condition mismatch; train without directioal eikonal loss; set $c_0=\epsilon=0$; and remove the denominator in one-step loss.

\subsection{Applications of the Distance Estimation}

\noindent\textbf{Adaptive image generation.}
Our scalar predictor $u_\theta(\mathbf{x})$ not only suggests an appropriate denoising step length but also serves as a practical stopping signal.
When $u_\theta$ increases consistently over successive updates, further denoising is no longer making progress and the process should terminate.
Based on this observation, we propose an adaptive gradient descent sampler that stops once $u_\theta$ has increased for 10 consecutive steps~(\Cref{tab:adaptive}).
It reduces the average number of steps by $29.7\%$, improving sFID and IS over the original gradient descent sampler, at the cost of a small FID increase; the resulting FID remains better than all other baselines.

\noindent\textbf{Out-of-distribution (OOD) detection.}
Our distance predictor $u_\theta(\mathbf{x})$ achieves the best OOD detection performance in~\Cref{tab:ood} despite being two orders of magnitude smaller than prior energy-based models.
$u_\theta(\mathbf{x})$ assigns lower values to in-distribution samples and yields strong separation.
We compare against PixelCNN++~\cite{salimans2017pixelcnn++}, GLOW~\cite{kingma2018glow}, IGEBM~\cite{du2019implicit}, and EqM~\cite{wang2025equilibrium}.

\begin{table}[t]
\caption{Adaptive stopping for gradient descent sampling using $u_\theta$ on ImageNet generation.}
\label{tab:adaptive}
\centering
\small
\setlength{\tabcolsep}{6pt}
\renewcommand{\arraystretch}{1.05}

\begin{tabularx}{\columnwidth}{@{}p{0.2\columnwidth} c c c c c@{}}
\toprule
\textbf{Method} & $\eta$ & \textbf{Avg. Steps$\downarrow$} & \textbf{FID$\downarrow$} & \textbf{sFID$\downarrow$} & \textbf{IS$\uparrow$} \\
\midrule
original GD  & 0.64 & 250.0 & \textbf{32.16} & 7.52 & 45.16 \\
adaptive GD  & 0.64 & \textbf{175.8} & 32.73 & \textbf{6.70} & \textbf{45.86} \\
\bottomrule
\end{tabularx}
\end{table}

\begin{table}[!htbp]
\caption{OOD detection AUROC$\uparrow$ on CIFAR-10 (in-distribution) with SVHN, Textures, and Constant as OOD datasets. Baseline results from~\citet{yoon2023energy} and ~\citet{wang2025equilibrium}.}
\label{tab:ood}
\centering
\small
\setlength{\tabcolsep}{6pt}
\renewcommand{\arraystretch}{1.10}

\begin{tabularx}{\columnwidth}{@{}p{0.14\columnwidth} c c c c c@{}}
\toprule
\textbf{Method} & \textbf{Params} & \textbf{SVHN} & \textbf{Textures} & \textbf{Constant} & \textbf{Avg.} \\
\midrule
PixelCNN++ & 54M  & 0.32 & 0.33 & 0.71 & 0.45 \\
GLOW & 44M  & 0.24 & 0.27 & --   & 0.26   \\
IGEBM      & 19M  & 0.63 & 0.48 & 0.39 & 0.50 \\
EqM        & 130M & 0.55 & \textbf{0.49} & \textbf{1.00} & 0.68 \\
Ours       & 0.10M& \textbf{0.94} & \textbf{0.49} & 0.95 & \textbf{0.79} \\
\bottomrule
\end{tabularx}
\end{table}

\section{Conclusion}
\label{sec:conclusion}

We introduced \dm (DM), a distance-field view of time-unconditional generation with a time-independent denoiser.
This view yields inference rules naturally compatible with standard gradient descent and also enables sphere tracing.
Our distance-inspired objectives reduce target ambiguity by emphasizing closer targets.
This yields denoising directions better aligned with the data manifold.
Across architectures and scales, we achieve state-of-the-art quality among time-unconditional models.
On class-conditional ImageNet, it also surpasses strong time-conditioned baselines.
We hope this work promotes distance-field modeling as a principled lens for designing time-unconditional objectives and inference in high-dimensional generative modeling.
It may also inspire new directions for time-conditioned models and few-step generation.
Finally, as future work, a natural next step is to integrate practical, low-cost MCMC refinement, as suggested by our 2D setting, to further improve sample diversity.

\section*{Impact Statement}

This paper presents work whose goal is to advance the field of 
Machine Learning. There are many potential societal consequences 
of our work, none which we feel must be specifically highlighted here.

}

\bibliography{main}
\bibliographystyle{icml2025}

\newpage
\appendix
\onecolumn

\printappendixtoc

\startappendixtoc
\newcommand{\dd}{\,\mathrm{d}}
\newcommand{\Unif}{\mathrm{Unif}}
\newcommand{\Normal}{\mathcal{N}}

\section{Proofs and Derivations}
\subsection{Invariance of the One-Step Projection Under Radial Reparameterizations}
\label{app:projection_invariance}

This subsection proves the claim in \Cref{eq:analytical}: the one-step map
\(
\mathbf{x} \mapsto \mathbf{x} - d(\mathbf{x}) \nabla d(\mathbf{x})
\)
does not uniquely identify the underlying (unsigned) distance field.

\paragraph{Setup.}
Let $\mathcal{S}\subset \mathbb{R}^D$ be a nonempty closed set (e.g., a target manifold).
Define the (unsigned) distance-to-set function
\begin{equation}
d(\mathbf{x}) \coloneqq \min_{\mathbf{s}\in \mathcal{S}} \|\mathbf{x}-\mathbf{s}\|_2,
\qquad
g(\mathbf{x}) \coloneqq d(\mathbf{x})^2 = \min_{\mathbf{s}\in \mathcal{S}} \|\mathbf{x}-\mathbf{s}\|_2^2.
\label{eq:app_def_d_g}
\end{equation}
Let the (set-valued) closest-point projection be
\begin{equation}
\Pi_{\mathcal{S}}(\mathbf{x}) \coloneqq \arg\min_{\mathbf{s}\in \mathcal{S}} \|\mathbf{x}-\mathbf{s}\|_2.
\label{eq:app_projection_set}
\end{equation}
We say the closest-point projection is \emph{unique} at $\mathbf{x}$ if $\Pi_{\mathcal{S}}(\mathbf{x})=\{\mathbf{s}(\mathbf{x})\}$ for some single point $\mathbf{s}(\mathbf{x})\in \mathcal{S}$.
Uniqueness holds, for example, away from the medial axis of $\mathcal{S}$.

\paragraph{A brief form of Danskin's theorem.}
Consider $G(\mathbf{x})=\min_{\mathbf{s}\in \mathcal{S}} \phi(\mathbf{x},\mathbf{s})$ where $\mathcal{S}$ is compact and $\phi$ is continuous in $(\mathbf{x},\mathbf{s})$, differentiable in $\mathbf{x}$, and $\nabla_{\mathbf{x}}\phi$ is continuous.
If the minimizer is unique at $\mathbf{x}$, i.e., $\arg\min_{\mathbf{s}\in\mathcal{S}}\phi(\mathbf{x},\mathbf{s})=\{\mathbf{s}(\mathbf{x})\}$, then $G$ is differentiable at $\mathbf{x}$ and
\begin{equation}
\nabla G(\mathbf{x}) = \nabla_{\mathbf{x}} \phi(\mathbf{x}, \mathbf{s}(\mathbf{x})).
\label{eq:app_danskin}
\end{equation}
(When $\mathcal{S}$ is merely closed, the same conclusion holds locally under standard regularity assumptions ensuring existence/uniqueness of the minimizer and local Lipschitzness; we use the theorem only at points where the minimizer is unique and the gradient exists.)

\paragraph{Lemma 1 (Gradient of squared distance under unique projection).}
Assume $\Pi_{\mathcal{S}}(\mathbf{x})=\{\mathbf{s}(\mathbf{x})\}$. Then $g$ in \Cref{eq:app_def_d_g} is differentiable at $\mathbf{x}$ and
\begin{equation}
\nabla g(\mathbf{x}) = 2(\mathbf{x}-\mathbf{s}(\mathbf{x})).
\label{eq:app_grad_g}
\end{equation}

\begin{proof}
Let $\phi(\mathbf{x},\mathbf{s})=\|\mathbf{x}-\mathbf{s}\|_2^2$. Then $\nabla_{\mathbf{x}}\phi(\mathbf{x},\mathbf{s})=2(\mathbf{x}-\mathbf{s})$.
By Danskin's theorem \Cref{eq:app_danskin} and uniqueness of the minimizer, we obtain
$\nabla g(\mathbf{x})=\nabla_{\mathbf{x}}\phi(\mathbf{x},\mathbf{s}(\mathbf{x}))=2(\mathbf{x}-\mathbf{s}(\mathbf{x}))$.
That is, the gradient does not depend on the derivative of the minimizer $\mathbf{s}(\mathbf{x})$ with respect to $\mathbf{x}$.
\end{proof}

\paragraph{Proposition 1 (Family inducing the same one-step projection).}
Fix any constant $C\ge 0$ and any sign $\sigma\in\{+1,-1\}$, and define
\begin{equation}
\hat{d}(\mathbf{x}) \coloneqq \sigma\sqrt{g(\mathbf{x})+C}
= \sigma\sqrt{\|\mathbf{x}-\mathbf{s}(\mathbf{x})\|_2^2 + C}.
\label{eq:app_dhat}
\end{equation}
Assume $\Pi_{\mathcal{S}}(\mathbf{x})$ is unique and $\nabla g(\mathbf{x})$ exists.
At any point where $\hat{d}(\mathbf{x})\neq 0$ and $\nabla \hat{d}(\mathbf{x})$ exists, the one-step update recovers the same closest point:
\begin{equation}
\mathbf{x} - \hat{d}(\mathbf{x})\,\nabla \hat{d}(\mathbf{x}) = \mathbf{s}(\mathbf{x}).
\label{eq:app_same_projection}
\end{equation}
Consequently, the family in \Cref{eq:app_dhat} induces the same one-step projection behavior as in \Cref{eq:one_step} wherever the gradients exist.

\begin{proof}
By the chain rule applied to \Cref{eq:app_dhat}, for $\hat{d}(\mathbf{x})\neq 0$ we have
\begin{equation}
\nabla \hat{d}(\mathbf{x})
=
\sigma\cdot \frac{1}{2\sqrt{g(\mathbf{x})+C}}\,\nabla g(\mathbf{x}).
\label{eq:app_grad_dhat_1}
\end{equation}
Using $\hat{d}(\mathbf{x})=\sigma\sqrt{g(\mathbf{x})+C}$, we rewrite the prefactor as
\(
\sigma / \sqrt{g(\mathbf{x})+C} = 1/\hat{d}(\mathbf{x})
\),
hence
\begin{equation}
\nabla \hat{d}(\mathbf{x})
=
\frac{1}{2\hat{d}(\mathbf{x})}\,\nabla g(\mathbf{x}).
\label{eq:app_grad_dhat_2}
\end{equation}
Applying \Cref{eq:app_grad_g} from Lemma~1 yields
\begin{equation}
\nabla \hat{d}(\mathbf{x})
=
\frac{1}{2\hat{d}(\mathbf{x})}\cdot 2(\mathbf{x}-\mathbf{s}(\mathbf{x}))
=
\frac{\mathbf{x}-\mathbf{s}(\mathbf{x})}{\hat{d}(\mathbf{x})}.
\label{eq:app_grad_dhat_3}
\end{equation}
Multiplying both sides by $\hat{d}(\mathbf{x})$ gives
\(
\hat{d}(\mathbf{x})\,\nabla \hat{d}(\mathbf{x}) = \mathbf{x}-\mathbf{s}(\mathbf{x})
\),
which rearranges to \Cref{eq:app_same_projection}.
\end{proof}

\paragraph{Remark.}
The proposition shows that constraining only the one-step map
\(
\mathbf{x}\mapsto \mathbf{x}-u(\mathbf{x})\nabla u(\mathbf{x})
\)
(e.g., via the One-Step Loss) cannot uniquely determine the scalar field itself:
many distinct scalar fields $\hat{d}$ of the form \Cref{eq:app_dhat} yield the same closest-point projection wherever gradients exist.
This motivates adding additional constraints (e.g., Eikonal-type regularization) if one wishes to identify an actual distance field.

\subsection{Properties of Our Loss Design}
\label{app:properties}

Motivated by these observations and other effective rematching methods~\cite{liu2022flow, tong2023improving, pooladian2023multisample, tong2023simulation, lee2024improving, li2025improved, davtyan2025faster}, we next prove why our loss preferentially emphasizes closer supervision, sharpening the distribution of the denoising vector and improving the image quality.

\paragraph{(i) Time-unconditional flow matching is directionally pulled by longer pairs, while directional eikonal loss is length-neutral.}

\theoremstyle{definition}
\begin{theorem}
\label{thm:dir_fm_vs_de}
Under \cref{def:genproc}, fix a spatial location and work under this condition given
$\mathbf{X}=\mathbf{x}$.
We rewrite the conditional expectation as $\mathbb{E}_{\mathbf{x}}[\cdot]:=\mathbb{E}[\cdot\mid \mathbf{X}=\mathbf{x}]$.

\begin{enumerate}[label=(\roman*)]
\item{With the standard flow matching loss, the learned direction is length-weighted.}

Let $\mathbf{\Delta}:=\mathbf{X}_1-\mathbf{X}_0$, $\ell:=\|\mathbf{\Delta}\|_2$, and
\begin{equation}
\mathbf{u}
:=
\begin{cases}
\mathbf{\Delta}/\ell, & \ell>0,\\
\mathbf{0}, & \ell=0.
\end{cases}
\label{eq:unit_dir_fm}
\end{equation}
Consider the time-unconditional squared regression at $\mathbf{x}$:
\begin{equation}
\mathcal{J}_{\mathrm{FM}}(\mathbf{g})
:=
\mathbb{E}_{\mathbf{x}}\!\left[
\|\mathbf{g}-\mathbf{\Delta}\|_2^2
\right],
\ \mathbf{g}\in\mathbb{R}^D .
\label{eq:J_fm}
\end{equation}
Then $\mathcal{J}_{\mathrm{FM}}$ is strictly convex and has the unique minimizer
\begin{equation}
\mathbf{g}_{\mathrm{FM}}^{*}(\mathbf{x})
=
\mathbb{E}_{\mathbf{x}}\!\left[\mathbf{\Delta} \right]
=
\mathbb{E}_{\mathbf{x}}\!\left[\ell\,\mathbf{u} \right].
\label{eq:vstar_fm_dir}
\end{equation}
Therefore, the \emph{direction} of the fixed point $\mathbf{g}_{\mathrm{FM}}^{*}(\mathbf{x})$ is a
length-weighted average of denoising directions $\mathbf{u}$, so larger $\ell$ pairs exert proportionally larger influence for the final direction.

\item{With the directional eikonal loss, the learned direction uses a bounded distance weight.}
Let $\mathbf{X}_1$ denote the random target paired with $\mathbf{x}$ under our sampling, and define
\begin{equation}
r:=\|\mathbf{x}-\mathbf{X}_1\|_2,
\qquad
\mathbf{v}
:=
\begin{cases}
(\mathbf{x}-\mathbf{X}_1)/r, & r>0,\\
\mathbf{0}, & r=0.
\end{cases}
\label{eq:unit_dir_de}
\end{equation}
Here $\mathbf{v}$ has the same conditional distribution of $-\mathbf{u}$ given $\mathbf{X}=\mathbf{x}$.
Recall the directional eikonal target in \cref{eq:del} (with constant $c_0>0$):
\begin{equation}
\mathbf{v}'(\mathbf{x},\mathbf{X}_1)
: =
\frac{\mathbf{x}-\mathbf{X}_1}{\sqrt{r^2+c_0}}
=
\underbrace{\frac{r}{\sqrt{r^2+c_0}}}_{\eqqcolon\,\alpha(r)\in[0,1)}
\,\mathbf{v}.
\label{eq:dstar_dir_de}
\end{equation}
Consider the time-unconditional squared regression at $\mathbf{x}$:
\begin{equation}
\mathcal{J}_{\mathrm{DE}}(\mathbf{h})
:=
\mathbb{E}_{\mathbf{x}}\!\left[
\|\mathbf{h}-\mathbf{v}'\|_2^2
\right],
\  \mathbf{h}\in\mathbb{R}^D .
\label{eq:J_de}
\end{equation}
Then $\mathcal{J}_{\mathrm{DE}}$ is strictly convex and has the unique minimizer
\begin{equation}
\mathbf{h}_{\mathrm{DE}}^{*}(\mathbf{x})
=
\mathbb{E}_{\mathbf{x}}\!\left[\mathbf{v}' \right]
=
\mathbb{E}_{\mathbf{x}}\!\left[\alpha(r)\,\mathbf{v} \right].
\label{eq:gstar_de_dir}
\end{equation}
Since $\alpha(r)=r/\sqrt{r^2+c_0}$ saturates to $1$ as $r\to\infty$,
directional eikonal loss does \emph{not} indicate an unbalanced weights for longer paths at $\mathbf{x}$.

\end{enumerate}

\end{theorem}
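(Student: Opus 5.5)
Both parts are instances of one fact: for a square-integrable random vector $\mathbf{Y}$, the map $\mathbf{g}\mapsto\mathbb{E}_{\mathbf{x}}\|\mathbf{g}-\mathbf{Y}\|_2^2$ is a strictly convex quadratic minimized uniquely at $\mathbb{E}_{\mathbf{x}}[\mathbf{Y}]$. I will prove this once and then apply it with $\mathbf{Y}=\mathbf{\Delta}$ for (i) and $\mathbf{Y}=\mathbf{v}'$ for (ii).

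\textbf{Step 1 (integrability).} For $\mathbf{Y}=\mathbf{v}'$ there is nothing to check, since $\|\mathbf{v}'\|_2=\alpha(r)<1$. For $\mathbf{Y}=\mathbf{\Delta}$, I use that on $\{\mathbf{X}=\mathbf{x}\}$ we have $\mathbf{X}_0=(\mathbf{x}-T\mathbf{X}_1)/(1-T)$. Hence
\[
\mathbf{\Delta}=\frac{\mathbf{X}_1-\mathbf{x}}{1-T}.
\]
The dataset is finite, so $\mathbf{X}_1$ is bounded. Therefore $\mathbb{E}_{\mathbf{x}}\|\mathbf{\Delta}\|_2^2<\infty$ holds as soon as $\mathbb{E}_{\mathbf{x}}[(1-T)^{-2}]<\infty$.

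This is the step I expect to be the main obstacle. The conditional density of $T$ given $\mathbf{X}=\mathbf{x}$ is proportional to
\[
p_T(t)\sum_{i}(1-t)^{-D}\varphi\!\left(\frac{\mathbf{x}-t\mathbf{s}^{(i)}}{1-t}\right),
\]
where $\varphi$ is the standard Gaussian density. As $t\to1$ the Gaussian factor decays like $\exp(-c/(1-t)^2)$ whenever $\mathbf{x}\neq\mathbf{s}^{(i)}$. This decay kills any polynomial blow-up in $(1-t)^{-1}$, so the moment is finite. I will state this as the standing assumption that the conditional second moment is finite, which holds for $\mathbf{x}$ off the finite dataset.

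\textbf{Step 2 (bias--variance decomposition).} Write $\boldsymbol{\mu}=\mathbb{E}_{\mathbf{x}}[\mathbf{Y}]$ and expand the square around $\boldsymbol{\mu}$. The cross term vanishes because $\mathbb{E}_{\mathbf{x}}[\mathbf{Y}-\boldsymbol{\mu}]=\mathbf{0}$, which gives
\[
\mathcal{J}(\mathbf{g})=\|\mathbf{g}-\boldsymbol{\mu}\|_2^2+\mathbb{E}_{\mathbf{x}}\|\mathbf{Y}-\boldsymbol{\mu}\|_2^2 .
\]
The Hessian of this expression is $2\mathbf{I}$, so $\mathcal{J}$ is strictly convex. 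Its unique minimizer is $\boldsymbol{\mu}$.

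\textbf{Step 3 (rewriting the minimizers).} For (i), $\mathbf{\Delta}=\ell\mathbf{u}$ holds pointwise, including the case $\ell=0$ by the convention in \Cref{eq:unit_dir_fm}. Therefore $\mathbf{g}_{\mathrm{FM}}^*=\mathbb{E}_{\mathbf{x}}[\ell\mathbf{u}]$. Writing this as $\mathbb{E}_{\mathbf{x}}[\ell]\cdot\mathbb{E}_{\mathbf{x}}[\ell\mathbf{u}]/\mathbb{E}_{\mathbf{x}}[\ell]$ exhibits its direction as the average of $\mathbf{u}$ under the length-tilted measure $\ell\,\mathrm{d}P/\mathbb{E}_{\mathbf{x}}[\ell]$. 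This is the interpretive claim that longer pairs carry more weight.

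For (ii), note that on $\{\mathbf{X}=\mathbf{x}\}$ we have $\mathbf{x}-\mathbf{X}_1=(1-T)(\mathbf{X}_0-\mathbf{X}_1)=-(1-T)\mathbf{\Delta}$. Since $1-T>0$, the unit vector $\mathbf{v}$ equals $-\mathbf{u}$ almost surely, which justifies the remark on the conditional law of $\mathbf{v}$. Then $\mathbf{v}'=\alpha(r)\mathbf{v}$ by the definition in \Cref{eq:dstar_dir_de}, so $\mathbf{h}_{\mathrm{DE}}^*=\mathbb{E}_{\mathbf{x}}[\alpha(r)\mathbf{v}]$.

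Finally, $\alpha(r)=r/\sqrt{r^2+c_0}$ is increasing in $r$, takes values in $[0,1)$, and tends to $1$ as $r\to\infty$. So the tilting weight in (ii) is bounded and nearly constant for large $r$, unlike the unbounded factor $\ell$ in (i). This gives the qualitative conclusion of part (ii).
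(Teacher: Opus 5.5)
Your argument is correct and matches the paper's approach. The paper gives no separate proof of this theorem; it relies on the fact that the squared-loss minimizer is the conditional mean, which it derives for \Cref{thm:os_weighted_posterior_mean} by setting the gradient of the convex quadratic to zero, and your completing-the-square step reproduces this while also giving strict convexity directly. You additionally supply two details the paper omits, both valid: finiteness of $\mathbb{E}_{\mathbf{x}}\|\mathbf{\Delta}\|_2^2$ via $\mathbf{\Delta}=(\mathbf{X}_1-\mathbf{x})/(1-T)$ for $\mathbf{x}$ off the dataset, and the pathwise identity $\mathbf{x}-\mathbf{X}_1=-(1-T)\mathbf{\Delta}$, which gives $\mathbf{v}=-\mathbf{u}$ almost surely rather than merely in law.
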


From this theorem, we observe that the standard flow matching loss is biased toward directions induced by longer matching pairs, which does not provide a consistent way to resolve the ambiguity in the posterior expectation.
From a denoising perspective, as we move closer to the target along the path, the denoising direction should become increasingly certain to a consistent direction toward that target, or the trajectory gets drifted away as shown in~\Cref{fig:paths}.

\textbf{(ii) One-step loss reduces target ambiguity by focusing on closer data points.}

The above shows directional eikonal loss avoids strong dominance from far/long pairs. We next show one-step loss goes further by explicitly reweighting supervision toward closer targets.
From previous derivation, standard flow matching loss even attach higher weights when the denoising vector $\mathbf{\Delta}$ is longer.
In the following paragraphs, we compare our loss with a weighted version $\mathbb{E}\!\left[
(1-T)^{-2}\,\|\mathbf{v}_\theta(\mathbf{x})-\mathbf{\Delta}\|_2^2
\,\middle|\,
\mathbf{X}=\mathbf{x}
\right]$ to show our superiority.

\begin{theorem}[Posterior on index]
\label{thm:posteriorindex}
Under \cref{def:genproc}, for any $\mathbf{X}=\mathbf{x}\in\mathbb{R}^D$ and we rewrite the conditional probability as $\mathbb{P}_{\mathbf{x}}[\cdot]:=\mathbb{P}[\cdot\mid \mathbf{X}=\mathbf{x}]$, the posterior distribution of $I$ is categorical with
\begin{equation}
\mathbb{P}_{\mathbf{x}}(I=i)
=
\frac{W_i(\mathbf{x})}{\sum_{j=1}^N W_j(\mathbf{x})}.
\label{eq:postI}
\end{equation}
\begin{equation}
W_i(\mathbf{x})
:=
\int_0^1
p_T(t)\,(1-t)^{-d}
\exp\!\left(
-\frac{\|\mathbf{x}-t\mathbf{s}^{(i)}\|_2^2}{2(1-t)^2}
\right)\dd t .
\label{eq:Wi}
\end{equation}
\end{theorem}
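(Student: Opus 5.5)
We prove this with Bayes' rule applied to the joint law of $(I,T,\mathbf{X})$ from \cref{def:genproc}. We then integrate out $T$.

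\textbf{Step 1: conditional density of $\mathbf{X}$.} Condition on $I=i$ and $T=t$. Then $\mathbf{X}=(1-t)\mathbf{X}_0+t\mathbf{s}^{(i)}$ is an affine image of a standard Gaussian, since $\mathbf{X}_0$ is independent of $(I,T)$. So
\[
\mathbf{X}\mid(I=i,T=t)\sim\Normal\!\big(t\mathbf{s}^{(i)},(1-t)^2\mathbf{I}_d\big),
\]
with density
\[
p(\mathbf{x}\mid i,t)=(2\pi)^{-d/2}(1-t)^{-d}\exp\!\Big(-\frac{\|\mathbf{x}-t\mathbf{s}^{(i)}\|_2^2}{2(1-t)^2}\Big).
\]
This density is well defined because $t\in(0,1)$ gives $1-t>0$. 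Here $d=D$ is the ambient dimension.

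\textbf{Step 2: integrate out $T$.} The variables $I$ and $T$ are independent, with $\mathbb{P}(I=i)=1/N$ and $T$ having density $p_T$. The joint density of $(I,\mathbf{X})$ is therefore
\[
p(i,\mathbf{x})=\frac{1}{N}\int_0^1 p_T(t)\,p(\mathbf{x}\mid i,t)\dd t=\frac{(2\pi)^{-d/2}}{N}\,W_i(\mathbf{x}).
\]
Tonelli's theorem justifies exchanging the sum and the integral, since every integrand is nonnegative.

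\textbf{Step 3: Bayes' rule.} The marginal density of $\mathbf{X}$ is $p(\mathbf{x})=\sum_j p(j,\mathbf{x})$. Dividing gives $\mathbb{P}_{\mathbf{x}}(I=i)=p(i,\mathbf{x})/p(\mathbf{x})$. In this ratio the constants $(2\pi)^{-d/2}$ and $1/N$ cancel, which leaves exactly \Cref{eq:postI}.

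\textbf{Main technical point: the division is legitimate.} We need $0<\sum_j W_j(\mathbf{x})<\infty$ so that the ratio is defined for every $\mathbf{x}$.
\begin{itemize}
\item Positivity: the integrand in \Cref{eq:Wi} is strictly positive on $(0,1)$, and $p_T$ has positive mass there. Hence $W_i(\mathbf{x})>0$.
\item Finiteness near $t\to1$: the factor $(1-t)^{-d}$ blows up. However, if $\mathbf{x}\neq\mathbf{s}^{(i)}$, the exponential decays faster than any power of $1-t$. If $\mathbf{x}=\mathbf{s}^{(i)}$, the integrand can diverge; in that case the conditional law is still a mixture of densities, so $p(\mathbf{x})$ is finite for Lebesgue-a.e.\ $\mathbf{x}$.
\end{itemize}
I would therefore state the result as holding for almost every $\mathbf{x}$, which is the standard meaning of a conditional probability given a continuous variable. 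Alternatively, it holds for every $\mathbf{x}\notin\mathcal{D}$ whenever $p_T$ is bounded.
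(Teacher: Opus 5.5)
Your proposal is correct and follows the same route as the paper's proof: the Gaussian density of $\mathbf{X}$ given $(I,T)$, marginalizing $T$ against $p_T$, Bayes' rule with the uniform prior on $I$, and cancellation of $(2\pi)^{-d/2}$. Your well-definedness discussion goes beyond the paper, which does not address it; one refinement is that the boundedness assumption on $p_T$ is unnecessary, because for $\mathbf{x}\notin\mathcal{D}$ the factor $(1-t)^{-d}\exp\bigl(-\|\mathbf{x}-t\mathbf{s}^{(i)}\|_2^2/(2(1-t)^2)\bigr)$ extends continuously to $[0,1]$ with value $0$ at $t=1$, so integrability of $p_T$ alone gives $W_i(\mathbf{x})<\infty$.
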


\begin{proof}
Fix $i\in\{1,\dots,N\}$ and condition on $(I=i,T=t)$. Then
\begin{equation}
\mathbf{X} = (1-t)\mathbf{X}_0 + t\mathbf{s}^{(i)},
\qquad \mathbf{X}\mid(I=i,T=t)\sim
\Normal\!\Big(t\mathbf{s}^{(i)},\, (1-t)^2\mathbf{I}_d\Big).
\label{eq:Ycond}
\end{equation}
Hence its density is
\begin{equation}
\begin{split}
p(\mathbf{x}\mid I=i,T=t)
&=(2\pi)^{-d/2}(1-t)^{-d}
\exp\!\left(
-\frac{\|\mathbf{x}-t\mathbf{s}^{(i)}\|_2^2}{2(1-t)^2}
\right).
\end{split}
\label{eq:lik}
\end{equation}
By marginalizing $T$,
\begin{equation}
p(\mathbf{x}\mid I=i)
=
\int_0^1 p_T(t)\,p(\mathbf{x}\mid I=i,T=t)\dd t.
\label{eq:margT}
\end{equation}
By Bayes' rule and the uniform prior $\mathbb{P}(I=i)=1/N$,
\begin{equation}
\mathbb{P}_{\mathbf{x}}(I=i)
=
\frac{p(\mathbf{x}\mid I=i)}{\sum_{j=1}^N p(\mathbf{x}\mid I=j)}.
\label{eq:bayes}
\end{equation}
Finally, the factor $(2\pi)^{-d/2}$ is common to all $i$ in \cref{eq:lik} and cancels after normalization, yielding \cref{eq:postI,eq:Wi}.
\end{proof}

\begin{theorem}[Bayes-optimal update under \cref{eq:osl}]
\label{thm:os_weighted_posterior_mean}
Let $\epsilon>0$. Consider the population version of \cref{eq:osl}, and fix $\mathbf{X}=\mathbf{x}\in\mathbb{R}^D$.
Let $\mathbf{X}_1$ denote the random target $\mathbf{s}_{\mathrm{data}}$ paired with $\mathbf{x}$, with conditional law $\mathbf{X}_1\mid\mathbf{X}=\mathbf{x}$.
Define
\[
\widehat{\mathbf{s}}_{\theta}(\mathbf{x})
:=
\mathbf{x}-u_{\theta}(\mathbf{x})\,\nabla u_{\theta}(\mathbf{x}).
\]
Assume the optimization reaches the global infimum of the population risk.
Then any globally optimal solution must satisfy, for (almost every) fixed $\mathbf{x}$,
\begin{align}
\widehat{\mathbf{s}}_{\theta}^{\,*}(\mathbf{x})
&=
\frac{
\mathbb{E}_{\mathbf{x}}\!\left[
\frac{\mathbf{X}_1}{\|\mathbf{x}-\mathbf{X}_1\|_2^2+\epsilon}
\right]
}{
\mathbb{E}_{\mathbf{x}}\!\left[
\frac{1}{\|\mathbf{x}-\mathbf{X}_1\|_2^2+\epsilon}
\right]
}.
\label{eq:os_bayes}
\end{align}
\end{theorem}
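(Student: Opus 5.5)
The plan is to reduce the population risk to a family of pointwise weighted least-squares problems, one for each fixed $\mathbf{x}$. First I will write the population version of \cref{eq:osl} as an iterated expectation,
\[
\mathcal{L}(\theta)=\mathbb{E}_{\mathbf{X}}\Big[\,\mathbb{E}_{\mathbf{X}}\big[\,w(\mathbf{X},\mathbf{X}_1)\,\|\widehat{\mathbf{s}}_{\theta}(\mathbf{X})-\mathbf{X}_1\|_2^2\,\big]\Big],
\]
with weight $w(\mathbf{x},\mathbf{s}):=1/(\|\mathbf{x}-\mathbf{s}\|_2^2+\epsilon)$. The outer expectation is over the marginal of $\mathbf{X}$, and the inner one is $\mathbb{E}_{\mathbf{x}}[\cdot]$ evaluated at $\mathbf{x}=\mathbf{X}$. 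The point is that $\widehat{\mathbf{s}}_\theta(\mathbf{x})$ depends only on $\mathbf{x}$, so it acts as a free vector $\mathbf{a}\in\mathbb{R}^D$ inside the inner expectation. The weight depends on $\mathbf{x}$ and on the random target, but not on $\theta$.

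Second, I will study the inner objective $\Phi_{\mathbf{x}}(\mathbf{a}):=\mathbb{E}_{\mathbf{x}}[w\,\|\mathbf{a}-\mathbf{X}_1\|_2^2]$. Since $\epsilon>0$, the weight satisfies $0<w\le 1/\epsilon$. The dataset is finite, so $\mathbf{X}_1\mid\mathbf{X}=\mathbf{x}$ is the categorical law from \Cref{thm:posteriorindex}, and every quantity involved is finite. Expanding the square gives
\[
\Phi_{\mathbf{x}}(\mathbf{a})=\mathbb{E}_{\mathbf{x}}[w]\,\|\mathbf{a}\|_2^2-2\,\mathbf{a}^\top\mathbb{E}_{\mathbf{x}}[w\mathbf{X}_1]+\mathbb{E}_{\mathbf{x}}[w\|\mathbf{X}_1\|_2^2].
\]
The leading coefficient $\mathbb{E}_{\mathbf{x}}[w]$ is strictly positive, so $\Phi_{\mathbf{x}}$ is strictly convex. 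Setting its gradient to zero gives the unique minimizer $\mathbf{a}^*(\mathbf{x})=\mathbb{E}_{\mathbf{x}}[w\mathbf{X}_1]/\mathbb{E}_{\mathbf{x}}[w]$, which is \cref{eq:os_bayes}. Completing the square also gives
\[
\Phi_{\mathbf{x}}(\mathbf{a})=\Phi_{\mathbf{x}}(\mathbf{a}^*)+\mathbb{E}_{\mathbf{x}}[w]\,\|\mathbf{a}-\mathbf{a}^*\|_2^2.
\]

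Third, I will pass from pointwise to global optimality. For every $\theta$ we have $\mathcal{L}(\theta)\ge\mathbb{E}_{\mathbf{X}}[\Phi_{\mathbf{X}}(\mathbf{a}^*(\mathbf{X}))]$. As in the paper's setting, I take this lower bound to be the global infimum of the population risk, which means the function class is rich enough to realize $\mathbf{a}^*$. Under that assumption, a global optimum has
\[
\mathcal{L}(\theta)-\inf\mathcal{L}=\mathbb{E}_{\mathbf{X}}\big[\mathbb{E}_{\mathbf{X}}[w]\,\|\widehat{\mathbf{s}}_\theta(\mathbf{X})-\mathbf{a}^*(\mathbf{X})\|_2^2\big]=0.
\]
The integrand is nonnegative and its weight $\mathbb{E}_{\mathbf{x}}[w]$ is strictly positive, so $\widehat{\mathbf{s}}_\theta^{\,*}=\mathbf{a}^*$ for almost every $\mathbf{x}$.

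I expect the main obstacle to be this last step. It needs a measurable selection of $\mathbf{a}^*(\mathbf{x})$, and it needs the lower bound to coincide with the infimum. Measurability should follow from the explicit ratio form: by \Cref{thm:posteriorindex} both expectations are finite sums with continuous weights $W_i$. The remaining subtlety is the identification of the infimum over the class $\{\mathbf{x}-u\nabla u\}$ with the unrestricted pointwise bound, which I will state as the realizability assumption implicit in the hypothesis "the optimization reaches the global infimum". Everything else is routine.
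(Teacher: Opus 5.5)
Your proposal is correct and follows the paper's route: fix $\mathbf{x}$, treat $\widehat{\mathbf{s}}_\theta(\mathbf{x})$ as a free vector in the inverse-distance-weighted least-squares objective, use $\epsilon>0$ to get a strictly positive bounded weight, and solve the first-order condition. Your completing-the-square identity and explicit realizability assumption make rigorous the passage from the global infimum to pointwise a.e.\ optimality, which the paper leaves implicit. That assumption is a real one here: $\mathbf{x}-u_\theta\nabla u_\theta=\nabla\big(\tfrac12\|\mathbf{x}\|_2^2-\tfrac12 u_\theta^2\big)$ is always a gradient field, while the pointwise minimizer $\mathbf{a}^*$ is not guaranteed to be one.
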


\begin{proof}
Consider the pointwise objective induced by \cref{eq:osl} at fixed $\mathbf{X}=\mathbf{x}$:
\[
\mathcal{J}_{\mathbf{x}}(\mathbf{y})
:=
\mathbb{E}_{\mathbf{x}}\!\left[
\frac{\|\mathbf{y}-\mathbf{X}_1\|_2^2}{\|\mathbf{x}-\mathbf{X}_1\|_2^2+\epsilon}
\right],
\qquad \mathbf{y}\in\mathbb{R}^D.
\]
Since $\epsilon>0$, the weight $w(\mathbf{X}_1) := (\|\mathbf{x}-\mathbf{X}_1\|_2^2+\epsilon)^{-1}$ is strictly positive and finite, and $\mathcal{J}_{\mathbf{x}}(\mathbf{y})$ is a convex quadratic function of $\mathbf{y}$.
Expanding and differentiating under the expectation gives
\[
\nabla_{\mathbf{y}}\mathcal{J}_{\mathbf{x}}(\mathbf{y})
=
2\,\mathbb{E}_{\mathbf{x}}\!\left[
\frac{\mathbf{y}-\mathbf{X}_1}{\|\mathbf{x}-\mathbf{X}_1\|_2^2+\epsilon}
\right].
\]
Setting the gradient to zero yields
\[
\mathbb{E}_{\mathbf{x}}\!\left[
\frac{\mathbf{y}}{\|\mathbf{x}-\mathbf{X}_1\|_2^2+\epsilon}
\right]
=
\mathbb{E}_{\mathbf{x}}\!\left[
\frac{\mathbf{X}_1}{\|\mathbf{x}-\mathbf{X}_1\|_2^2+\epsilon}
\right].
\]
Because $\mathbf{y}$ is deterministic given $\mathbf{x}$, it can be pulled out on the left and then dividing by the (strictly positive) scalar denominator yields \cref{eq:os_bayes}.
\end{proof}

\begin{corollary}[Discrete posterior over dataset indices]
\label{cor:os_discrete}
In the setting where $\mathbf{X}_1\mid\mathbf{X}=\mathbf{x}$ is supported on $\{\mathbf{s}^{(1)},\dots,\mathbf{s}^{(N)}\}$ with posterior masses
$\pi_i(\mathbf{x}) := \mathbb{P}_{\mathbf{x}}(I=i)$,
\begin{equation}
\label{eq:os_discrete}
\widehat{\mathbf{s}}_{\theta}^{\,*}(\mathbf{x})
=
\frac{
\sum_{i=1}^N \pi_i(\mathbf{x})\,\omega_i(\mathbf{x})\,\mathbf{s}^{(i)}
}{
\sum_{i=1}^N \pi_i(\mathbf{x})\,\omega_i(\mathbf{x})
},
\ \text{where }\,
\omega_i(\mathbf{x})
:=
\frac{1}{\|\mathbf{x}-\mathbf{s}^{(i)}\|_2^2+\epsilon}.
\end{equation}

\end{corollary}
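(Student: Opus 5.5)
The corollary follows by specializing \Cref{thm:os_weighted_posterior_mean} to a discrete conditional law. First I would invoke \Cref{thm:os_weighted_posterior_mean}, which states that any global optimum satisfies \cref{eq:os_bayes}. That formula expresses $\widehat{\mathbf{s}}_{\theta}^{\,*}(\mathbf{x})$ as the ratio of two conditional expectations: the weighted mean $\mathbb{E}_{\mathbf{x}}[w(\mathbf{X}_1)\mathbf{X}_1]$ divided by the mass $\mathbb{E}_{\mathbf{x}}[w(\mathbf{X}_1)]$, where $w(\mathbf{X}_1)=(\|\mathbf{x}-\mathbf{X}_1\|_2^2+\epsilon)^{-1}$.

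Next I would evaluate both expectations under the hypothesis that $\mathbf{X}_1\mid\mathbf{X}=\mathbf{x}$ is supported on $\{\mathbf{s}^{(1)},\dots,\mathbf{s}^{(N)}\}$. Under \cref{def:genproc} we have $\mathbf{X}_1=\mathbf{s}^{(I)}$, so the event $\{\mathbf{X}_1=\mathbf{s}^{(i)}\}$ is determined by $I$. For any measurable $f$,
\[
\mathbb{E}_{\mathbf{x}}[f(\mathbf{X}_1)]=\sum_{i=1}^N \pi_i(\mathbf{x})\, f(\mathbf{s}^{(i)}),
\]
with $\pi_i(\mathbf{x})=\mathbb{P}_{\mathbf{x}}(I=i)$ given explicitly by \Cref{thm:posteriorindex}. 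I would apply this identity twice: with $f(\mathbf{s})=\mathbf{s}/(\|\mathbf{x}-\mathbf{s}\|_2^2+\epsilon)$ for the numerator, and with $f(\mathbf{s})=1/(\|\mathbf{x}-\mathbf{s}\|_2^2+\epsilon)$ for the denominator. This gives $\sum_i\pi_i\omega_i\mathbf{s}^{(i)}$ and $\sum_i\pi_i\omega_i$, which is exactly \cref{eq:os_discrete}.

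The only point that needs care is making sure the ratio is well defined. Each $\omega_i(\mathbf{x})$ lies in $(0,1/\epsilon]$ because $\epsilon>0$. The masses $\pi_i(\mathbf{x})$ sum to one, so at least one is positive, and hence the denominator is strictly positive. In fact \Cref{thm:posteriorindex} shows every $W_i(\mathbf{x})>0$, since the integrand there is positive on $(0,1)$. Duplicate data points, where $\mathbf{s}^{(i)}=\mathbf{s}^{(j)}$ for some $i\neq j$, cause no trouble: summing over indices rather than distinct values handles them automatically.

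I expect no real obstacle. The one subtlety is the ``almost every $\mathbf{x}$'' qualifier inherited from \Cref{thm:os_weighted_posterior_mean}, and I would simply carry it over to the corollary.
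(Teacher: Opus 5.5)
Your proposal is correct and matches the paper's implicit argument (the paper gives no separate proof): you substitute the discrete posterior $\pi_i(\mathbf{x})=\mathbb{P}_{\mathbf{x}}(I=i)$ into both conditional expectations of \cref{eq:os_bayes}, and $\epsilon>0$ guarantees the denominator is positive. One small imprecision: the integrand defining $W_i(\mathbf{x})$ is positive only where $p_T(t)>0$, not on all of $(0,1)$, but $W_i(\mathbf{x})>0$ still follows because $p_T$ integrates to one, and your argument via $\sum_i\pi_i(\mathbf{x})=1$ already suffices.
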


\begin{corollary}[Pointwise Bayes solution for time-unconditional weighted regression]
\label{lem:bayes_weighted_regression}
Let $(\mathbf{X},T,\mathbf{\Delta})$ be random variables with $X\in\mathbb{R}^D$, $T\in(0,1)$, and $\mathbf{\Delta}\in\mathbb{R}^D$.
Let $w:(0,1)\to(0,\infty)$ be measurable with
$\mathbb{E}[w(T)\mid \mathbf{X}=\mathbf{x}]\in(0,\infty)$.
For fixed $x\in\mathbb{R}^D$, consider
\[
\mathcal{J}_\mathbf{x}(\mathbf{y})
:=
\mathbb{E}_{\mathbf{x}}\!\left[
w(T)\,\|\mathbf{y}-\mathbf{\Delta}\|_2^2
\right],
\qquad \mathbf{y}\in\mathbb{R}^D.
\]
Then $\mathcal{J}_\mathbf{x}(\mathbf{y})$ is strictly convex in $\mathbf{y}$ and admits the unique minimizer
\begin{equation}
\mathbf{y}^{*}(\mathbf{x})
=
\frac{\mathbb{E}_{\mathbf{x}}[w(T)\,\mathbf{\Delta}]}{\mathbb{E}_{\mathbf{x}}[w(T)]} .
\label{eq:bayes_vstar}
\end{equation}
Equivalently,
\begin{equation}
\label{eq:q_def}
\mathbf{y}^{*}(\mathbf{x})
=
\int_0^1 q(t\mid \mathbf{x})\,
\mathbb{E}[\mathbf{\Delta}\mid \mathbf{X}=\mathbf{x},T=t]\;\dd t,
\qquad
q(t\mid \mathbf{x})\propto p(t\mid \mathbf{x})\,w(t).
\end{equation}
\end{corollary}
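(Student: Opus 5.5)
The plan is to follow the proof of \Cref{thm:os_weighted_posterior_mean}: a pointwise weighted least-squares argument, then a tower-property rewrite to get the mixture form \eqref{eq:q_def}. Fix $\mathbf{x}$ and write $m:=\mathbb{E}_{\mathbf{x}}[w(T)]\in(0,\infty)$, which is assumed. I also assume $\mathbb{E}_{\mathbf{x}}[w(T)\|\mathbf{\Delta}\|_2^2]<\infty$ (implicitly needed for $\mathcal{J}_\mathbf{x}$ to be finite); then $\mathbb{E}_{\mathbf{x}}[w(T)\|\mathbf{\Delta}\|_2]<\infty$ by Cauchy--Schwarz, so $\mathbf{b}:=\mathbb{E}_{\mathbf{x}}[w(T)\mathbf{\Delta}]$ is well defined.

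\textbf{Step 1: strict convexity.} Expand the square inside the expectation:
\[
\mathcal{J}_\mathbf{x}(\mathbf{y}) = m\|\mathbf{y}\|_2^2 - 2\,\mathbf{y}^\top\mathbf{b} + \mathbb{E}_{\mathbf{x}}[w(T)\|\mathbf{\Delta}\|_2^2].
\]
This is a quadratic in $\mathbf{y}$ with Hessian $2m\mathbf{I}$ and $m>0$, so it is strictly convex.

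\textbf{Step 2: the minimizer.} Setting $\nabla_{\mathbf{y}}\mathcal{J}_\mathbf{x}=2m\mathbf{y}-2\mathbf{b}=0$ gives the unique minimizer $\mathbf{y}^*=\mathbf{b}/m$, which is \eqref{eq:bayes_vstar}. This is the same computation as in \Cref{thm:os_weighted_posterior_mean}, with the distance weight replaced by $w(T)$.

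\textbf{Step 3: the mixture form.} Apply the tower property conditioning on $T$, under the conditional law given $\mathbf{X}=\mathbf{x}$:
\[
\mathbb{E}_{\mathbf{x}}[w(T)\mathbf{\Delta}]=\int_0^1 p(t\mid\mathbf{x})\,w(t)\,\mathbb{E}[\mathbf{\Delta}\mid\mathbf{X}=\mathbf{x},T=t]\,\mathrm{d}t,
\qquad
m=\int_0^1 p(t\mid\mathbf{x})\,w(t)\,\mathrm{d}t .
\]
Dividing the first integral by $m$ yields \eqref{eq:q_def}, with normalized density $q(t\mid\mathbf{x})=p(t\mid\mathbf{x})w(t)/m$.

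\textbf{Main obstacle: measure-theoretic justification of Step 3.} The posterior density $p(t\mid\mathbf{x})$ must exist; under \Cref{def:genproc} it does, via Bayes with the Gaussian likelihood \eqref{eq:lik}, exactly as in \Cref{thm:posteriorindex}. The regular conditional expectation $\mathbb{E}[\mathbf{\Delta}\mid\mathbf{X}=\mathbf{x},T=t]$ must be measurable in $t$, and the identities should be stated for almost every $\mathbf{x}$. I would handle both points by working with the joint density of $(\mathbf{X},T)$ and applying Fubini, using the integrability assumption above.
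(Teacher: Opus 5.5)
Your proposal is correct and follows essentially the same route as the paper. The weighted least-squares first-order condition, inherited from the proof of \Cref{thm:os_weighted_posterior_mean}, gives \eqref{eq:bayes_vstar}; the tower property then turns numerator and denominator into $t$-integrals against $p(t\mid\mathbf{x})\,w(t)$, which yields \eqref{eq:q_def}. Your Hessian $2m\mathbf{I}$ argument for strict convexity and your integrability assumption $\mathbb{E}_{\mathbf{x}}[w(T)\|\mathbf{\Delta}\|_2^2]<\infty$ make explicit what the paper leaves implicit; without that assumption $\mathcal{J}_\mathbf{x}\equiv\infty$ and the minimizer is not unique.
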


\begin{proof}
Fix $\mathbf{X}=\mathbf{x}$.
The convexity and the minimizer \cref{eq:bayes_vstar} follow from the previous proof. 
We only justify \cref{eq:q_def}.

By the tower property and measurability of $w(T)$ w.r.t.\ $T$,
\[
\mathbb{E}_{\mathbf{x}}[w(T)\mathbf{\Delta}]
=
\mathbb{E}_{\mathbf{x}}\!\big[w(T)\,\mathbb{E}[\mathbf{\Delta}\mid \mathbf{X}=\mathbf{x},T]\big].
\]
Let $p(t\mid \mathbf{x})$ be the conditional density of $T$ given $\mathbf{X}=\mathbf{x}$. Converting the outer expectation into an integral yields
\begin{equation}
\label{eq:Ex_alpha_ints}
\mathbb{E}_{\mathbf{x}}[w(T)\mathbf{\Delta}]
=
\int_0^1 w(t)\,p(t\mid \mathbf{x})\,
\mathbb{E}[\mathbf{\Delta}\mid \mathbf{X}=\mathbf{x},T=t]\;\dd t,
\qquad
\mathbb{E}_{\mathbf{x}}[w(T)]
=
\int_0^1 w(t)\,p(t\mid \mathbf{x})\;\dd t.
\end{equation}
Substituting these into \cref{eq:bayes_vstar} and defining
\[
q(t\mid \mathbf{x})
:=
\frac{w(t)\,p(t\mid \mathbf{x})}{\int_0^1 w(\tau)\,p(\tau\mid \mathbf{x})\,\dd \tau}
\ \ \Big(\propto\ p(t\mid \mathbf{x})\,w(t)\Big),
\]
we obtain
\[
\mathbf{y}^*(\mathbf{x})
=
\int_0^1 q(t\mid \mathbf{x})\;
\mathbb{E}[\mathbf{\Delta}\mid \mathbf{X}=\mathbf{x},T=t]\;\dd t,
\]
which proves \cref{eq:q_def}.
\end{proof}

\begin{remark}[Effective time-weight under flow matching and why it does not directly penalize $\|\mathbf{x}-\mathbf{s}_i\|_2$]
\label{rem:fm_time_weight_x0}
Consider the interpolation model
$\mathbf{X}=(1-T)\mathbf{X}_0+T\mathbf{X}_1$ with $\mathbf{X}_0\sim\Normal(\mathbf{0},\mathbf{I}_d)$, $T\sim p_T$ on $(0,1)$,
and $\mathbf{X}_1\in\{\mathbf{s}^{(1)},\dots,\mathbf{s}^{(N)}\}$ uniformly.
For the flow-matching-style target $\mathbf{\Delta}:=\mathbf{X}_1-\mathbf{X}_0$ and time weight $w(t)=(1-t)^{-2}$,
\cref{lem:bayes_weighted_regression} implies that, at fixed $\mathbf{X}=\mathbf{x}$,
different $t$ values contribute according to the effective density
\begin{equation}
q(t\mid \mathbf{x})\propto p(t\mid \mathbf{x})\,(1-t)^{-2}.
\label{eq:q_fm}
\end{equation}
Moreover, by Bayes' rule,
\begin{equation}
p(t\mid \mathbf{x})
\propto
p_T(t)\,\sum_{i=1}^N
(1-t)^{-d}\exp\!\left(
-\frac{\|\mathbf{x}-t\mathbf{s}^{(i)}\|_2^2}{2(1-t)^2}
\right),
\label{eq:pt_given_x_mix}
\end{equation}
hence
\begin{equation}
q(t\mid \mathbf{x})
\propto
p_T(t)\,\sum_{i=1}^N
(1-t)^{-(d+2)}\exp\!\left(
-\frac{\|\mathbf{x}-t\mathbf{s}^{(i)}\|_2^2}{2(1-t)^2}
\right).
\label{eq:qtx_explicit}
\end{equation}
Crucially, the geometric term depends on $\|\mathbf{x}-t\mathbf{s}^{(i)}\|_2$ (distance to the segment $\{t\mathbf{s}^{(i)}:t\in[0,1]\}$), rather than $\|\mathbf{x}-\mathbf{s}^{(i)}\|_2$.
In particular, when $\mathbf{x}$ is directly drawn from $\Normal(\mathbf{0},\mathbf{I}_d)$, the effective density $q(t \mid \mathbf{x})$ is typically biased toward smaller $t$. In this regime, $t\mathbf{s}^{(i)}\approx \mathbf{0}$ for all $i$, so the exponent in \cref{eq:qtx_explicit} becomes nearly $i$-independent: $\|\mathbf{x}-t\mathbf{s}^{(i)}\|_2\approx \|\mathbf{x}\|_2$.
Thus, far data points $\mathbf{s}^{(i)}$ can still explain the same contribution via small $t$, and the flow matching time-weight $(1-t)^{-2}$ reweights time rather than introducing an explicit additional penalty in $\|\mathbf{x}-\mathbf{s}^{(i)}\|_2$.
This contrasts with distance-reweighted objectives (e.g., \cref{eq:osl}), which directly downweight far $\mathbf{s}^{(i)}$ through factors such as $(\|\mathbf{x}-\mathbf{s}^{(i)}\|_2^2+\epsilon)^{-1}$.
\end{remark}

\clearpage

\section{Experiment Settings}
\label{sec:exp_set}

In our experiments, we report FID computed using $50$k generated samples unless otherwise specified.

To output $u_{\theta}(\mathbf{x})$, we introduce a lightweight distance prediction network implemented as a two-layer CNN with $10{,}177$ parameters for CIFAR-10 ($+0.018\%$ over the original model) and $62{,}801$ parameters for ImageNet ($+0.0093\% \sim 0.19\%$ over different backbone sizes) as shown in~\Cref{fig:distpredictor}.

For ImageNet generation, we follow SiT~\cite{ma2024sit} and other latent diffusion frameworks: we train and sample in the latent space of a pretrained VAE~\cite{rombach2022high}, and keep the VAE encoder/decoder frozen throughout all experiments.

\begin{figure}[!htbp]
    \centering
    \includegraphics[width=0.85\linewidth]{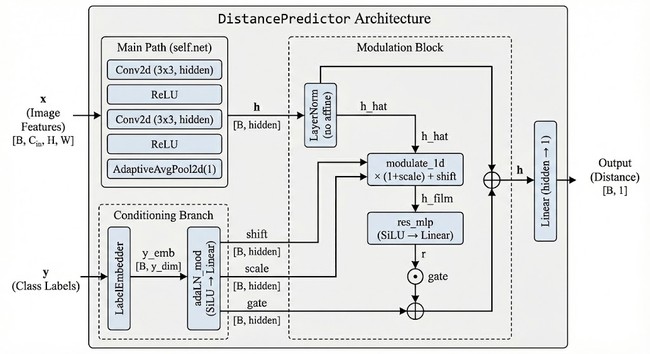}
    \caption{Distance predictor architecture.}
    \label{fig:distpredictor}
\end{figure}

\begin{table}[t]
\caption{Configuration for the CIFAR-10 \dm run.
U-Net architecture entries follow the CIFAR-10 preset used in the codebase~\cite{lipman2024flow}.
Our losses in~\Cref{eq:losses} supervise a quasi-normalized denoising direction, which induces a different target scale than flow matching. We therefore rescale the network output $\mathbf{v}_\theta$ to align with the loss-defined magnitude.}
\centering
\small
\setlength{\tabcolsep}{6pt}
\renewcommand{\arraystretch}{1.10}
\begin{tabular}{l c}
\toprule
\textbf{Setting} & \textbf{Value} \\

\midrule
\multicolumn{2}{l}{\textbf{U-Net Architecture}} \\
input size & $3 \times 32 \times 32$ \\
base channels & 128 \\
channel multipliers & [2, 2, 2] \\
ResBlocks per level & 4 \\
attention resolutions & [2] \\
attention heads & 1 \\
head channels & 256 \\
dropout & 0.3 \\
time embedding max period & 1,000,000 \\
$\mathbf{v}_\theta(\cdot)$ output rescaling & 1/120.0 \\
$\mathbf{v}_\theta(\cdot)$ distance input rescaling & 1.0 \\

\midrule
\multicolumn{2}{l}{\textbf{Distance Predictor}} \\
input channels & 3 \\
convolution layer 1 & Conv2d(3$\to$32, $3\times 3$, padding=1) $\to$ ReLU \\
convolution layer 2 & Conv2d(32$\to$32, $3\times 3$, padding=1) $\to$ ReLU \\
final layer & AdaptiveAvgPool2d(1) $\to$ Linear(32$\to$1) \\

\midrule
\multicolumn{2}{l}{\textbf{Training}} \\
GPUs & 4 Nvidia A100 (80G)\\
batch size per GPU & 128 \\
optimizer         & AdamW \\
optimizer $\beta_1$ & 0.9 \\
optimizer $\beta_2$ & 0.95 \\
weight decay      & 0.01 \\
learning rate & $1 \times 10^{-4}$ \\
lr schedule       & constant \\
lr warmup         & none \\
interpolation coefficient distribution & default skewed  distribution~\cite{lipman2024flow} \\
training epochs & 3000 \\
test epoch & 1810 \\
$\lambda_1$ & 0.1 \\
$\lambda_2$ & 1.0 \\
$\epsilon$ in OSL & 4.0 \\
$c_0$ in DEL & 4.0 \\

\midrule
\multicolumn{2}{l}{\textbf{Inference}} \\
sphere tracing step size & 0.031 \\
gradient descent step size & 0.46 \\
steps & 250 \\

\bottomrule
\end{tabular}

\label{tab:dm_cifar10_config}
\end{table}

\begin{table}[t]
\caption{Configuration for the ImageNet $256 \times 256$ \dm run. Transformer architecture entries follow the SiT preset used in the codebase~\cite{ma2024sit}.}
\centering
\small
\begin{tabular}{l *{4}{>{\centering\arraybackslash}m{0.1\linewidth}}}
\toprule
model & S/2 & B/2 & L/2 & XL/2 \\
\midrule
\multicolumn{5}{l}{\textbf{SiT Architecture}} \\
latent space size & \multicolumn{4}{c}{$4 \times 32 \times 32$} \\
params (M)  & 33  & 130 & 458  & 675 \\
depth       & 12  & 12  & 24   & 28  \\
hidden dim  & 384 & 768 & 1024 & 1152 \\
patch size  & 2   & 2   & 2    & 2   \\
heads       & 6   & 12  & 16   & 16  \\
$\mathbf{v}_\theta(\cdot)$ output rescaling & \multicolumn{4}{c}{1/120.0} \\
$\mathbf{v}_\theta(\cdot)$ distance input rescaling  & \multicolumn{4}{c}{1/100.0} \\
\midrule
\multicolumn{5}{l}{\textbf{Distance Predictor}} \\
input channels & \multicolumn{4}{c}{4}\\
convolution layer 1 & \multicolumn{4}{c}{Conv2d(4$\to$64, $3\times 3$, padding=1) $\to$ ReLU}\\
convolution layer 2 & \multicolumn{4}{c}{Conv2d(64$\to$64, $3\times 3$, padding=1) $\to$ ReLU }\\
pooling & \multicolumn{4}{c}{AdaptiveAvgPool2d(1) }\\
normalization & \multicolumn{4}{c}{LayerNorm(64, elementwise\_affine=False) }\\
label embedding & \multicolumn{4}{c}{LabelEmbedder(num\_classes, 16) }\\
adaLN modulation & \multicolumn{4}{c}{SiLU $\to$ Linear(16$\to$192) $\to$ chunk(shift, scale, gate) }\\
residual MLP & \multicolumn{4}{c}{SiLU $\to$ Linear(64$\to$64) }\\
modulation formula & \multicolumn{4}{c}{$h = h + \text{gate} \odot \text{res\_mlp}(\text{modulate\_1d}(\text{LN}(h), \text{shift}, \text{scale}))$ }\\
final layer & \multicolumn{4}{c}{Linear(64$\to$1)} \\
\midrule
\multicolumn{5}{l}{\textbf{Training}} \\
GPUs  & \multicolumn{4}{c}{4 Nvidia A100 (80G)} \\
batch size per GPU  & \multicolumn{4}{c}{64} \\
optimizer         & \multicolumn{4}{c}{AdamW} \\
optimizer $\beta_1$ & \multicolumn{4}{c}{0.9} \\
optimizer $\beta_2$ & \multicolumn{4}{c}{0.999} \\
weight decay      & \multicolumn{4}{c}{0.0} \\
learning rate (lr)& \multicolumn{4}{c}{$1 \times 10^{-4}$} \\
lr schedule       & \multicolumn{4}{c}{constant} \\
lr warmup         & \multicolumn{4}{c}{none} \\
interpolation coefficient distribution & \multicolumn{4}{c}{$\mathrm{Uniform}(0,0.999)$} \\
training class dropout & \multicolumn{4}{c}{0.1} \\
training epochs       & \multicolumn{4}{c}{80} \\
test epoch            & \multicolumn{4}{c}{80} \\
$\lambda_1$ & \multicolumn{4}{c}{1.0} \\
$\lambda_2$ & \multicolumn{4}{c}{30.0} \\
$\epsilon$ in OSL & \multicolumn{4}{c}{100.0} \\
$c_0$ in DEL & \multicolumn{4}{c}{100.0} \\
\midrule
\multicolumn{5}{l}{\textbf{Inference Hyperparameters}} \\
sphere tracing step size & \multicolumn{4}{c}{0.022} \\
gradient descent step size & 0.64 & 0.64 & 0.64 & 0.68 \\
steps   & \multicolumn{4}{c}{250} \\
\bottomrule
\end{tabular}

\label{tab:dm_imagenet_configs}
\end{table}

\clearpage

\section{Experiment Outcomes}

\subsection{2D Comparison with Energy-Based Models}
\label{app:2d_exp}

From an energy-based modeling perspective, the model learns the energy gradient, so generation can always be viewed as performing a form of optimization~\cite{permenter2023interpreting,balcerak2025energy, wang2025equilibrium}.
While previous papers have pointed out this connection, their energy landscape modeling is often very rough; as a result, the learned landscape is unreliable even in 2D, and the generation does not obtain any advantage when evaluated by Wasserstein-2 distance (W2), Hausdorff distance (HD), or Chamfer distance (CD).
This also helps explain why their high-dimensional performance is ad hoc and difficult to transfer across architectures.

In our 2D experiments, we borrow the architecture to generate our samples with our losses from 2D TorchCFM~\cite{tong2023improving}.
Using our distance field, after jumping the first deterministic sphere tracing step, we run a batched Hamiltonian Monte Carlo (HMC) sampler to produce the final samples.
Concretely, we sample an auxiliary momentum $\mathbf{p}\sim\mathcal{N}(\mathbf{0}, m\mathbf{I})$ with $m=1$ and define the tempered target distribution
$\pi(\mathbf{x}) \propto \exp\!\left(-U(\mathbf{x})/\sigma^2\right)$ with $\sigma=0.25$.
Each HMC proposal is generated by $L=5$ leapfrog steps with step size $\epsilon=0.2$,
where the potential gradient is scaled as $\nabla U(\mathbf{x})/\sigma^2$.
We then apply a Metropolis--Hastings accept/reject correction using the Hamiltonian
$H(\mathbf{x},\mathbf{p}) = U(\mathbf{x})/\sigma^2 + \|\mathbf{p}\|_2^2/(2m)$.
We set $t_{\mathrm{span}}=\mathrm{linspace}(0,1,18)$ and record $18$ states in total; in our implementation this corresponds to $16$ HMC proposals after the initial deterministic step.
With 18 time points and 5 leapfrog steps per HMC proposal, the trajectory sampler performs 97 NFE (1 initial step and 16 HMC steps $\times$ 6 evaluations each).
This matches the 100-NFE setting used for other models, and we further show that our advantage over other energy-based models does not come from MCMC, but from accurate geometric signals.

\begin{figure}[!htbp]
  \centering
  \includegraphics[width=0.75\linewidth]{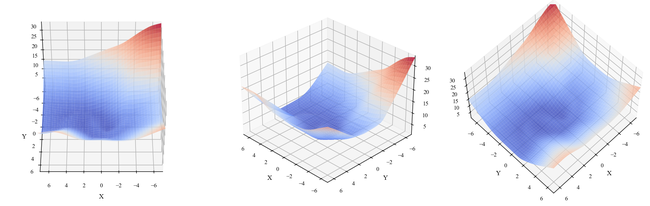}
  \caption{Energy Matching fails to capture fine-grained energy variations near the data manifold and is inconsistent across two locations that are equally distant from the data manifold.}
  \label{fig:energy-matching}
  \vspace{-10pt}
\end{figure}

We first use the open-source 2D toy implementation of Energy Matching~\cite{balcerak2025energy} to generate its 2D samples.
We follow exactly the same experimental setup and consider the same generation task from 8-Gaussians to two-moons.
The resulting level sets and energy landscape are visualized in~\Cref{fig:energy-matching}.
Their method relies on OT coupling~\cite{tong2023improving} and uses a standard ODE solver for energy minimization.
However, the learned energy landscape is inaccurate, and the resulting 2D generation does not obtain advantages when evaluated by W2, HD, or CD.

\begin{figure}[!htbp]
  \centering
  \includegraphics[width=0.75\linewidth]{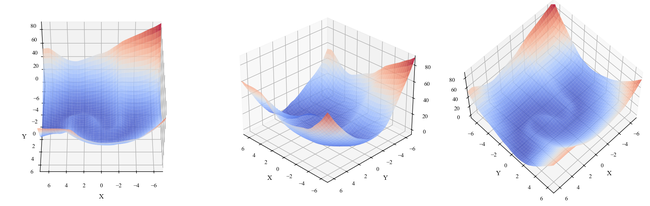}
  \caption{Equilibrium Matching~\cite{wang2025equilibrium} learns $4\times$ flow matching denoising vectors from source to target, with a decaying coefficient when $t>0.8$; for visualization clarity, we divide the negative gradient vectors by $4$ when rendering. Equilibrium Matching misses fine-grained energy structure near the data manifold and is inconsistent across two locations that are equally distant from the data manifold.}
  \label{fig:eqm}
  \vspace{-10pt}
\end{figure}

We next consider Equilibrium Matching~\cite{wang2025equilibrium}.
Since it does not release a 2D model, we implement its objective by replacing our loss with theirs within the same 2D framework.
In short, their learning objective corresponds to a rescaled flow matching vector field, and thus naturally inherits the issues discussed above.
In 2D, this rescaling further amplifies the problem: the bottom of the inferred energy landscape becomes noisy, and the landscape remains inconsistent at two corner locations that are equally distant from the data manifold.
We also try to reproduce their 2D generation using gradient-descent-based sampling, as they mentioned, but do not obtain a satisfactory result.
In~\Cref{fig:eqm_samples} and~\Cref{tab:2d_metrics_eqm}, we report three generation attempts: (i) Gradient descent only; (ii) Gradient descent + Unadjusted Langevin Algorithm (ULA); and (iii) Gradient descent + HMC.

\begin{figure}[!htbp]
  \centering

  \begin{subfigure}[t]{0.155\linewidth}
    \centering
    \includegraphics[width=\linewidth]{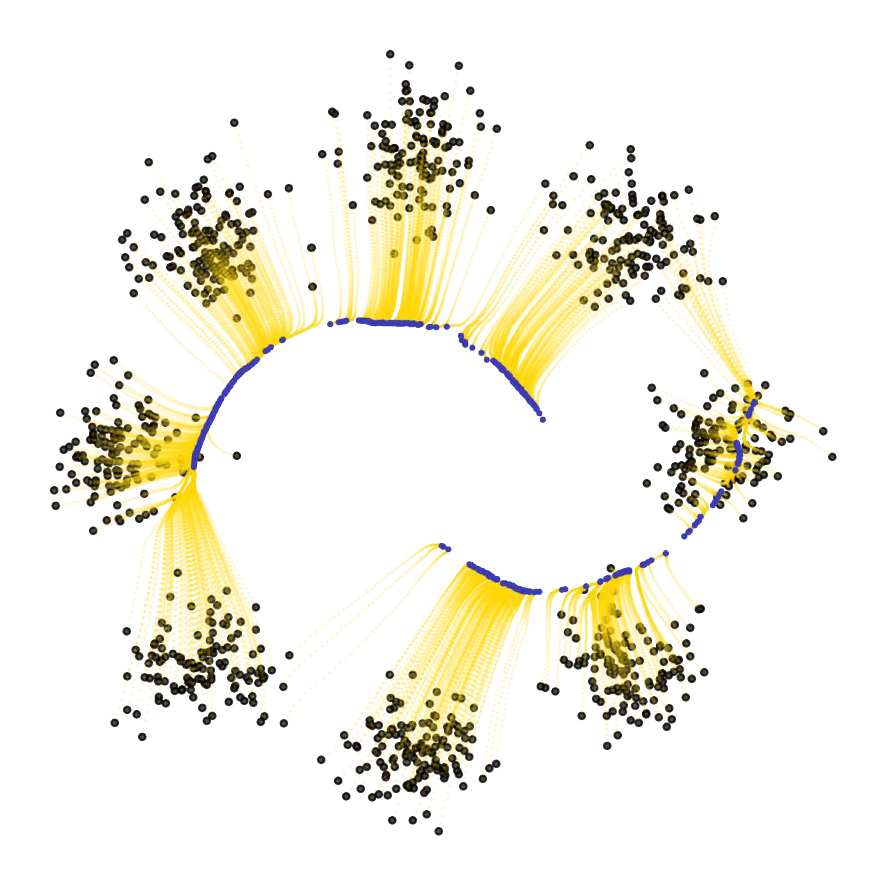}
    \caption{Gradient descent only: no exploration for low-potential areas.}
  \end{subfigure}\hspace{0.6em}%
  \begin{subfigure}[t]{0.155\linewidth}
    \centering
    \includegraphics[width=\linewidth]{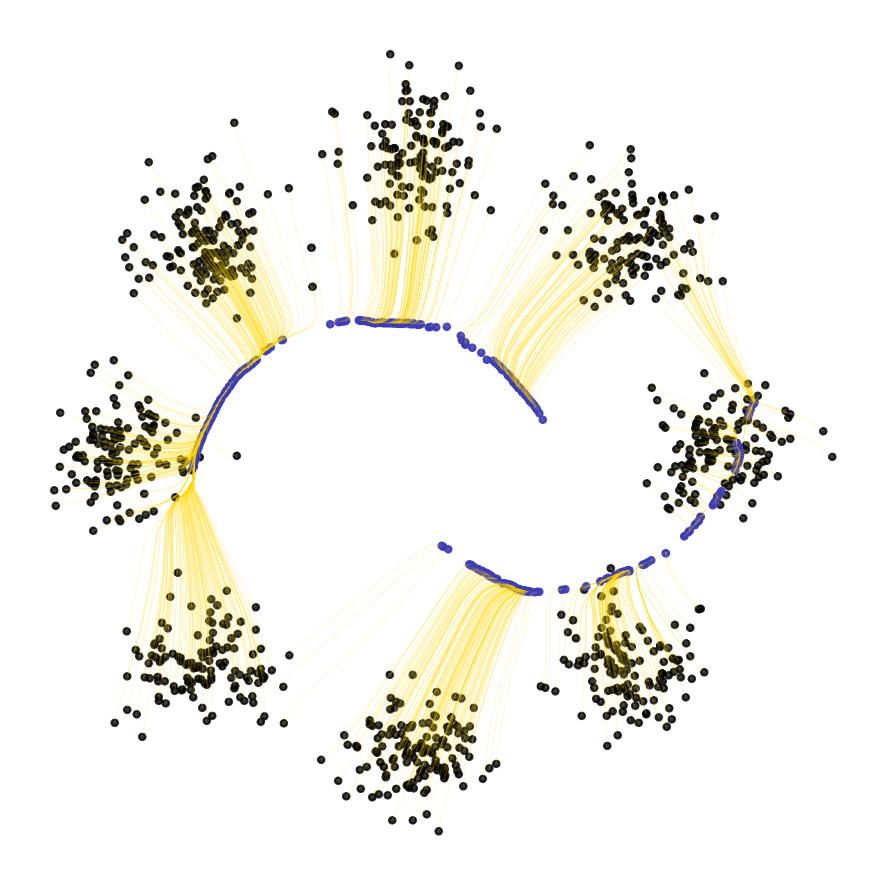}
    \caption{Gradient descent only trajectories.}
  \end{subfigure}\hspace{0.6em}%
  \begin{subfigure}[t]{0.155\linewidth}
    \centering
    \includegraphics[width=\linewidth]{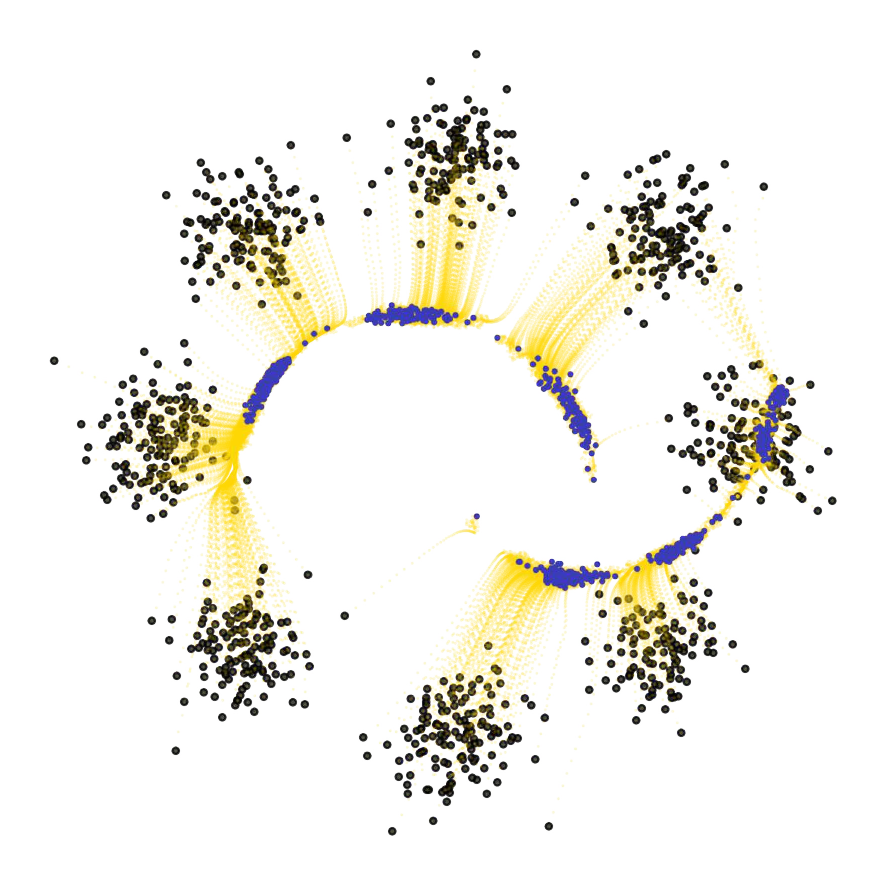}
    \caption{Gradient descent and ULA: insufficient exploration for low-potential areas.}
  \end{subfigure}\hspace{0.6em}%
  \begin{subfigure}[t]{0.155\linewidth}
    \centering
    \includegraphics[width=\linewidth]{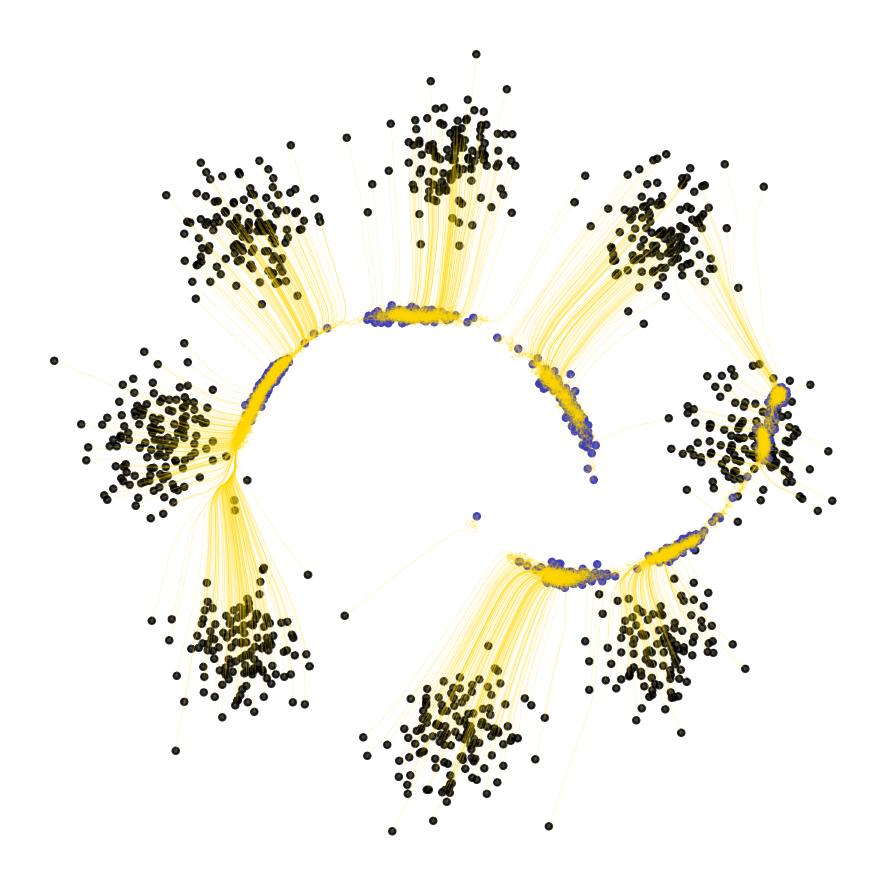}
    \caption{Gradient descent and ULA trajectories.}
  \end{subfigure}\hspace{0.6em}%
  \begin{subfigure}[t]{0.155\linewidth}
    \centering
    \includegraphics[width=\linewidth]{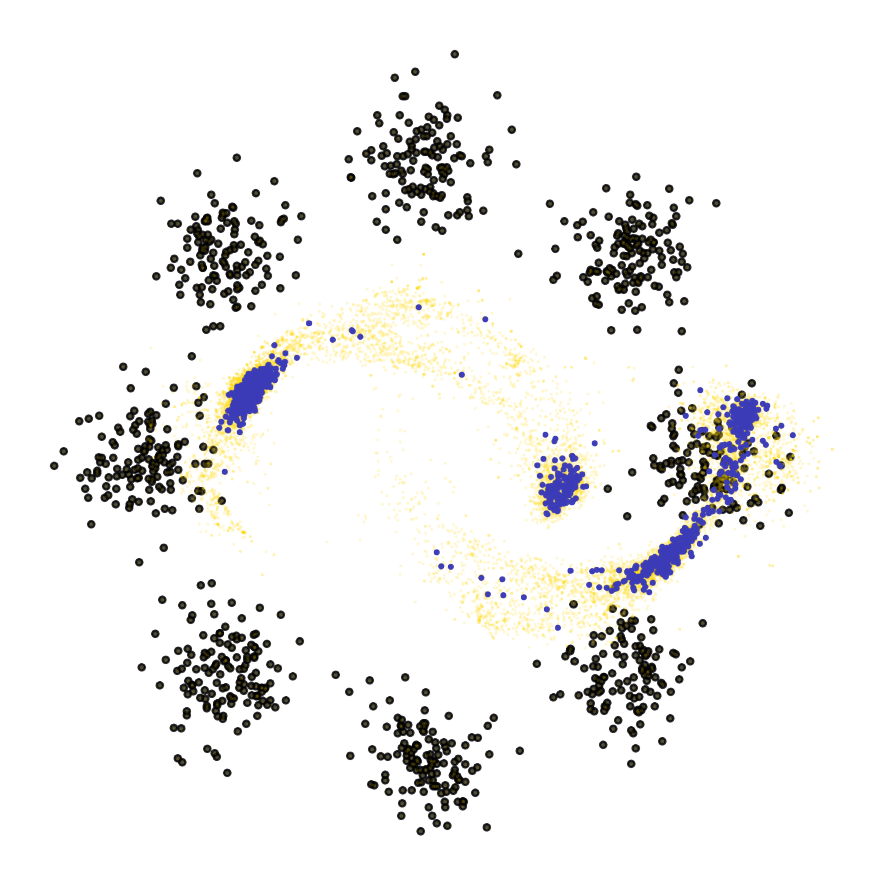}
    \caption{Gradient descent and HMC: sufficient exploration reveals the inaccuracy of the energy modeling.}
  \end{subfigure}\hspace{0.6em}%
  \begin{subfigure}[t]{0.155\linewidth}
    \centering
    \includegraphics[width=\linewidth]{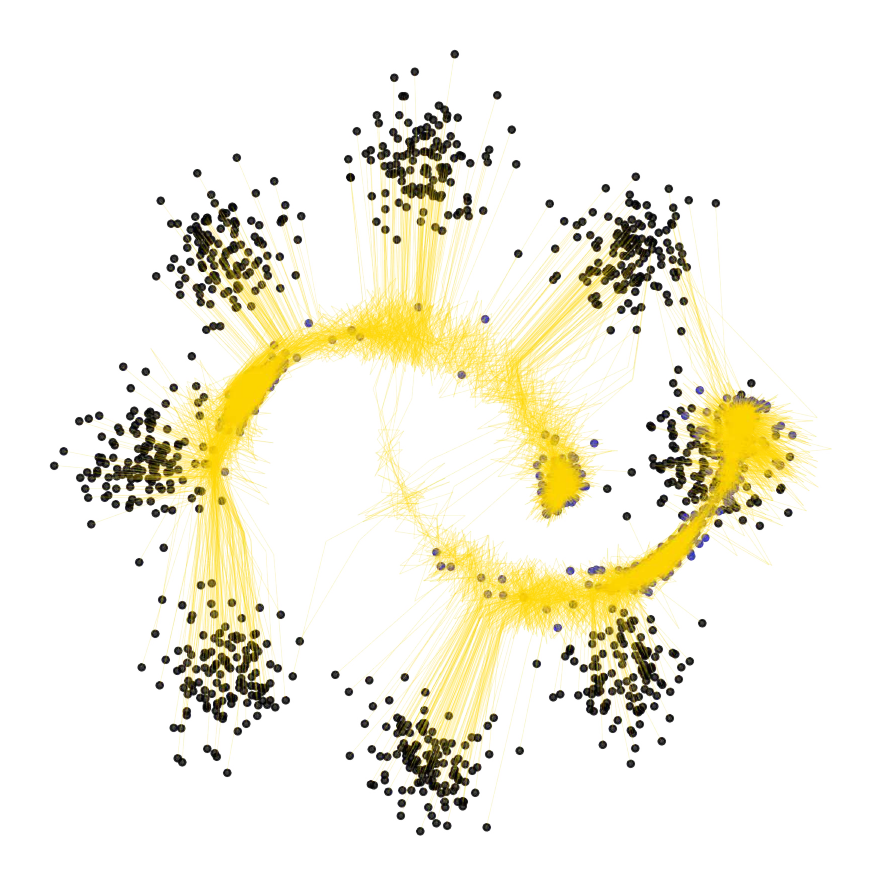}
    \caption{Gradient descent and HMC trajectories.}
  \end{subfigure}

  \caption{We compare different generation methods on the Equilibrium Matching model, but none of them yields competitive results on this task. Steps are recorded as: init (black), mid (yellow), and final (blue).}
  \vspace{-10pt}
  \label{fig:eqm_samples}
\end{figure}

\begin{table}[!htbp]
\centering
\captionsetup{skip=6pt} %
\caption{2D point-cloud distance metrics of~\Cref{fig:eqm_samples} between $10$k generated samples and $10$k target two-moons samples for the 8-Gaussians $\rightarrow$ two-moons task (lower is better). We report the original gradient descent as their best in~\Cref{tab:2d_metrics}.}
\label{tab:2d_metrics_eqm}

\small
\setlength{\tabcolsep}{6pt}
\renewcommand{\arraystretch}{1.05}

\begin{tabular}{>{\raggedright\arraybackslash}p{0.5\columnwidth}rrr}
\toprule
\textbf{Method} & \textbf{W2 $\downarrow$} & \textbf{HD $\downarrow$} & \textbf{CD $\downarrow$} \\
\midrule
Equilibrium Matching (Original GD) & 1.433 & 2.034 & 0.260 \\
Equilibrium Matching (GD+ULA) & 1.757 & 1.982 & 0.317 \\
Equilibrium Matching (GD+HMC) & 2.073 & 1.680 & 0.037 \\
Distance Marching (ours) & 1.435 & 0.605 & 0.005 \\
\bottomrule
\multicolumn{4}{l}{\scriptsize W2: Wasserstein-2; HD: Hausdorff distance; CD: Chamfer distance.}
\end{tabular}
\end{table}

Crucially, 2D MCMC refinements (e.g., ULA/HMC) need faithful energies and gradients as a prerequisite.
If the learned landscape is mis-specified or noisy near the data manifold, the chain may mix poorly or drift toward spurious modes, so MCMC typically fails to improve W2/HD/CD and can even worsen them in practice.

As a comparison, we also present our level sets in~\Cref{fig:2d_motivation} and the corresponding energy landscape in~\Cref{fig:dm_landscape}.
It should also be noted that these methods do not constrain the energy itself but only its gradient, whereas we directly model a distance-like field; consequently, our framework also supports sphere tracing, which they do not provide.

\begin{figure}[!htbp]
  \centering

  \begin{subfigure}[t]{0.62\linewidth}
    \centering
    \includegraphics[width=\linewidth]{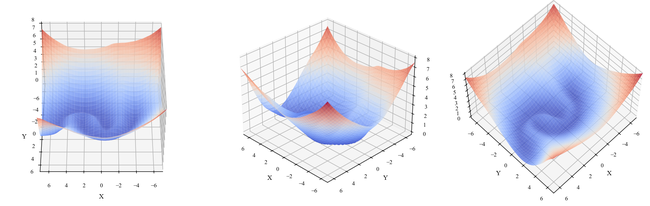}
    \caption{Our losses provide consistent energy modeling with respect to the distance to the data manifold.}
    \label{fig:dm_landscape_landscape}
  \end{subfigure}\hspace{0.9em}%
  \begin{minipage}[t]{0.33\linewidth}
    \vspace{-10.0em} %
    \centering
    \begin{subfigure}[t]{0.47\linewidth}
      \centering
      \includegraphics[width=\linewidth]{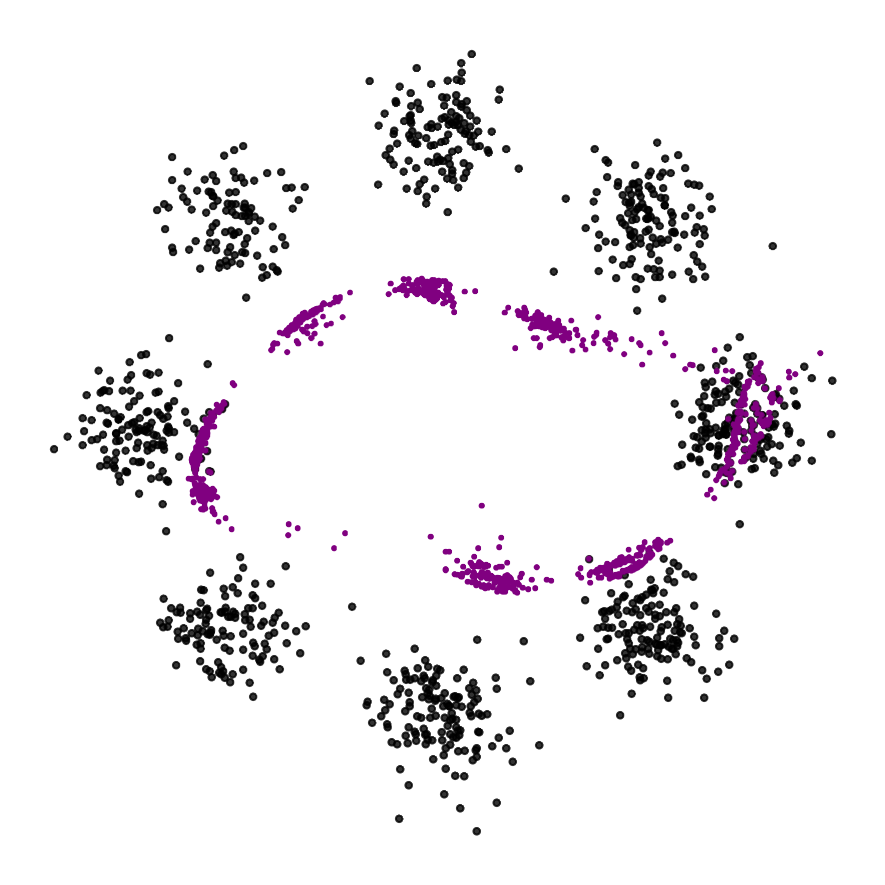}
      \caption{Steps: init (black) and first jump (purple) with sphere tracing.}
      \label{fig:dm_landscape_first}
    \end{subfigure}\hspace{0.6em}%
    \begin{subfigure}[t]{0.47\linewidth}
      \centering
      \includegraphics[width=\linewidth]{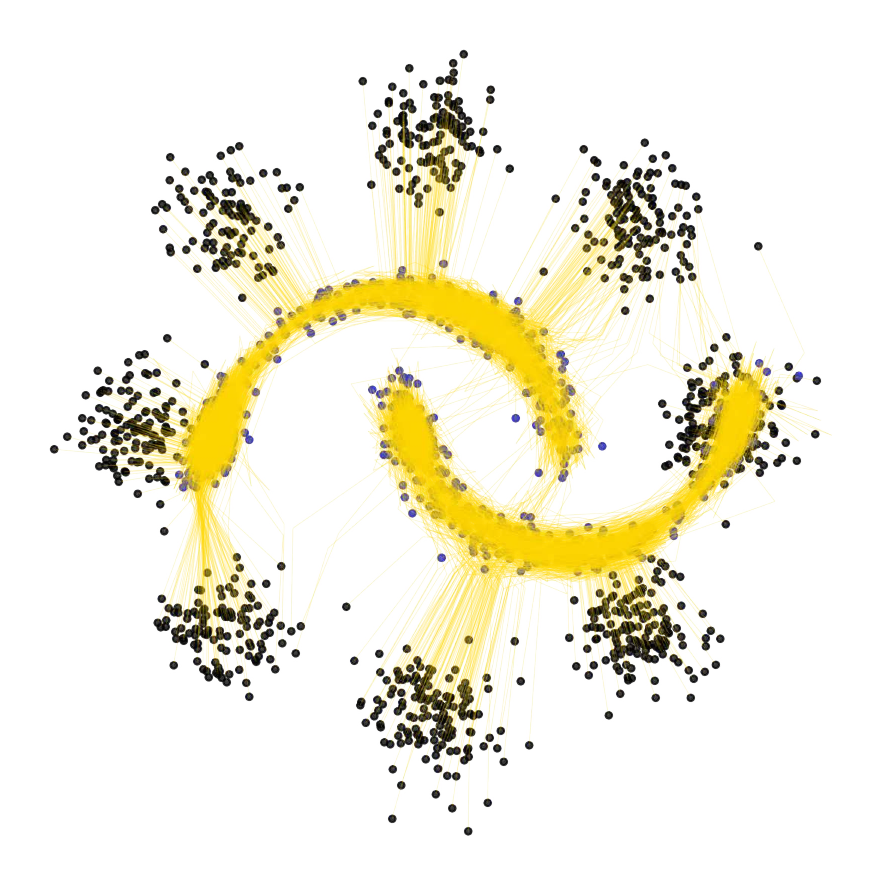}
      \caption{Step connections (yellow) showing trajectories.}
      \label{fig:dm_landscape_lines}
    \end{subfigure}
  \end{minipage}

  \caption{In our 2D toy model, we use a single sphere tracing step to jump to the target region, then Hamiltonian Monte Carlo explores low-potential areas.
  Final samples are presented in~\Cref{fig:2d_motivation}.
  We avoid meandering near the initialization and achieve the lowest Chamfer and Hausdorff distances in \Cref{tab:2d_metrics} with meaningful energy landscape.}
  \label{fig:dm_landscape}
  \vspace{-10pt}
\end{figure}

\subsection{8D Comparison with Flow Matching Losses}
\label{subsec:8d}

To bridge the gap between 2D toy settings and image datasets, we use an 8D Gaussian mixture model to validate our previous analysis.

Here, we use a mixture of six Gaussians: the initial distribution contains two components, and the target distribution contains four.
To reduce estimator variance, we adopt bidirectional importance sampling to propose matchings between the initial and target components that pass through a given position.
With this sampling strategy, we observe empirical convergence in neighborhoods around the Gaussian components.
For visualization, we apply Fisher’s linear discriminant analysis (LDA) to project the 8D space to 2D (see~\Cref{fig:gmm}).
LDA yields a linear projection matrix, enabling a faithful 2D visualization of the projected geometry.
We keep the same projection matrix for all the following visualizations.

\begin{figure}[!htbp]
    \centering
    \includegraphics[width=0.4\linewidth]{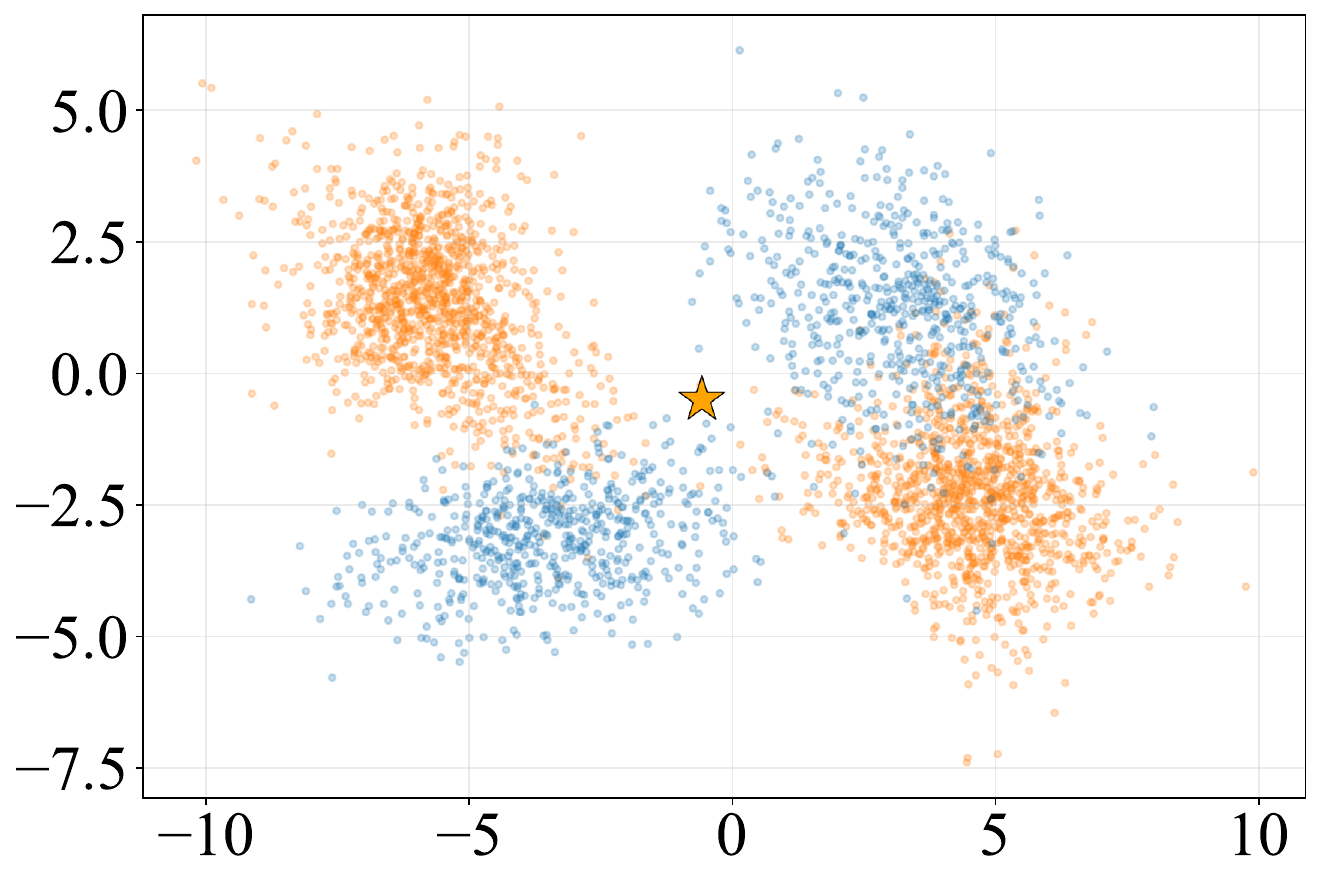}
    \caption{We use LDA to visualize the 8D initial distribution (\textcolor[HTML]{008FFF}{blue}) and the target distribution (\textcolor[HTML]{FA7F57}{orange}) in 2D.
    The data mean of the target distribution is marked using the orange star.}
    \label{fig:gmm}
\end{figure}

We first show that flow matching losses are biased toward the data mean, whereas our losses focus on closer points in~\Cref{fig:posterior_matches_2x2inline}.
Because the initial distribution has two components, we visualize the minimizers of each loss at the mean of each component.
These vectors minimize each loss under the posterior distribution conditioned on the test position.

\begin{figure}[!htbp]
    \centering

    \begin{subfigure}[t]{0.49\linewidth}
        \centering
        \includegraphics[width=0.57\linewidth]{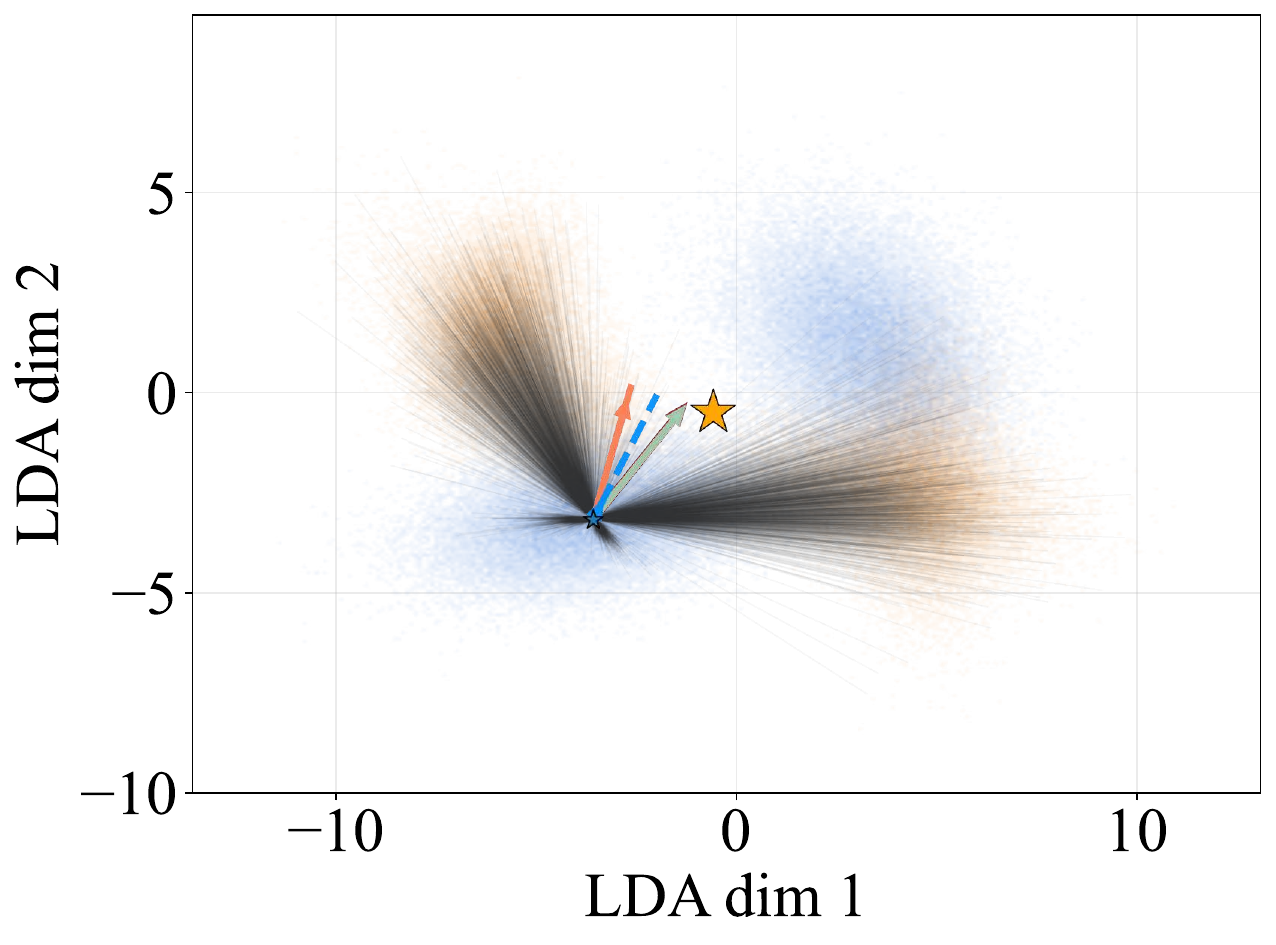}\hfill
        \includegraphics[width=0.43\linewidth]{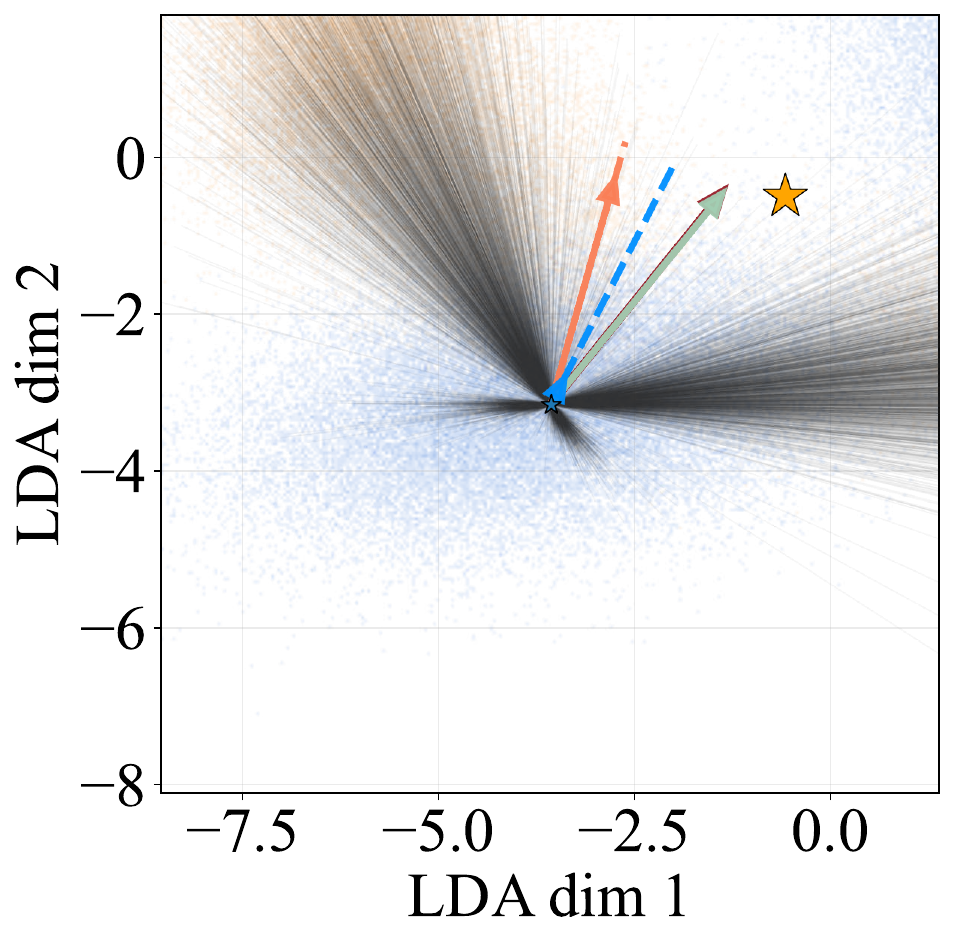}
        \label{fig:posterior_y4}
    \end{subfigure}
    \hfill
    \begin{subfigure}[t]{0.49\linewidth}
        \centering
        \includegraphics[width=0.57\linewidth]{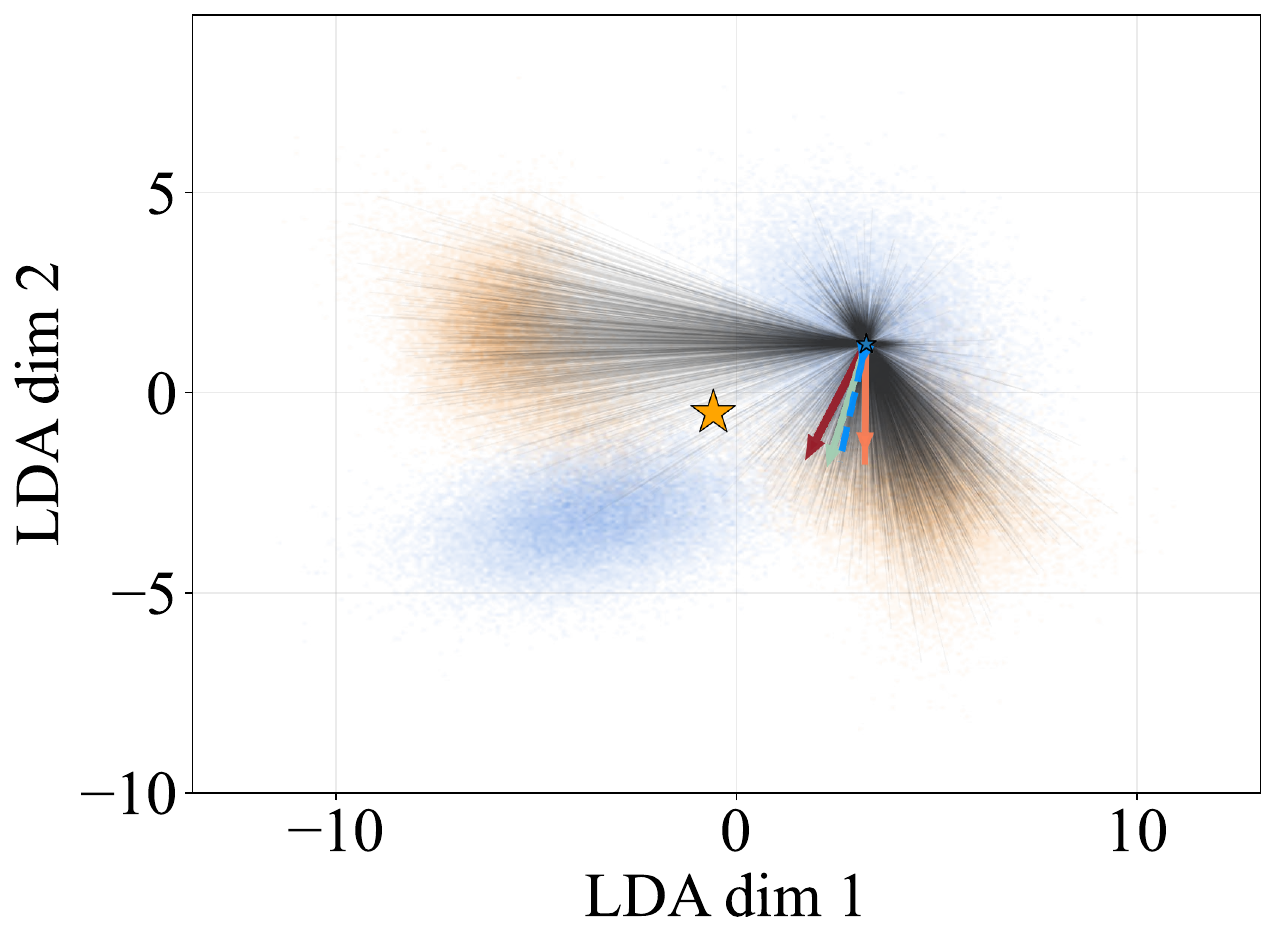}\hfill
        \includegraphics[width=0.43\linewidth]{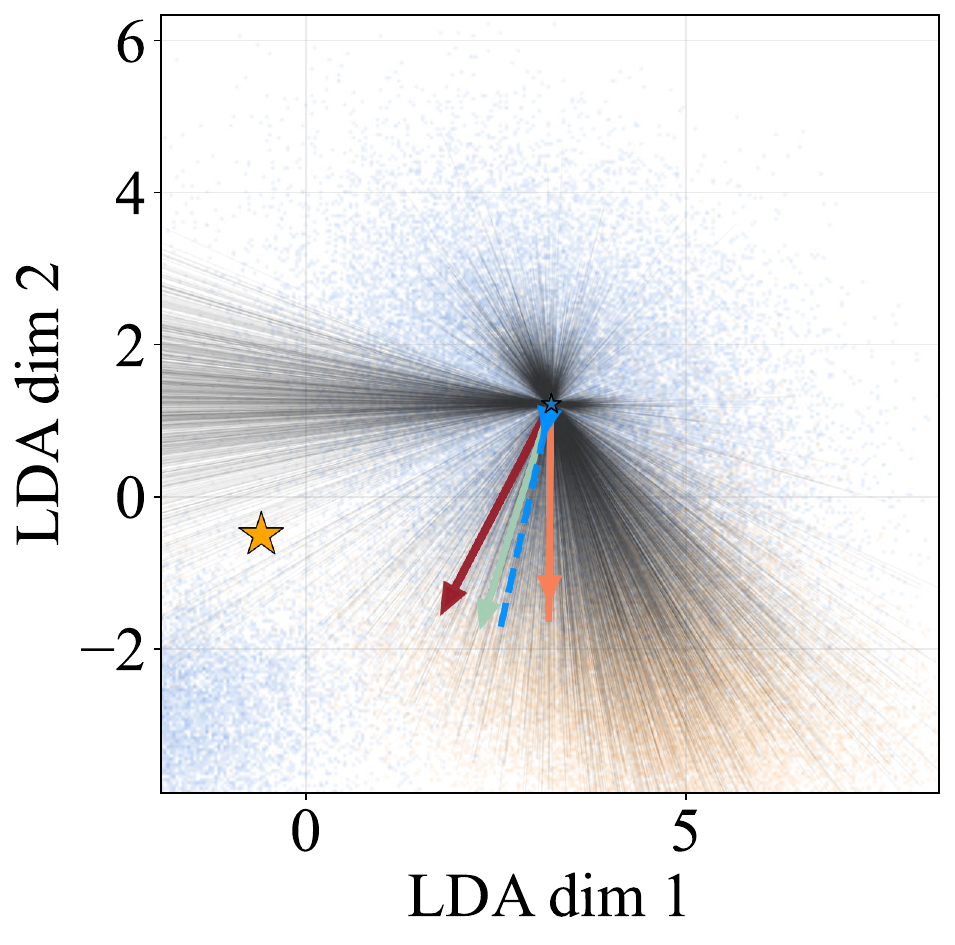}
        \label{fig:posterior_y5}
    \end{subfigure}
    \vspace{-10pt}
    \caption{
    Flow matching losses have minimizers (\textcolor[HTML]{A3CEB3}{green} reweighted, \textcolor[HTML]{971E2A}{red} original) that are strongly biased toward the data mean (\textcolor[HTML]{FA7F57}{orange} star). In contrast, one-step loss (\textcolor[HTML]{FA7F57}{orange}) and directional eikonal loss (\textcolor[HTML]{008FFF}{blue}) yield minimizers pointing to closer target data, less affected by the data mean.
    Global and zoom perspectives shown.
}
    \label{fig:posterior_matches_2x2inline}
\end{figure}

From~\Cref{fig:posterior_matches_2x2inline}, both the standard flow matching loss and the $(1-t)^{-2}$-reweighted flow matching loss yield minimizers strongly biased toward the data mean.
In contrast, the minimizers of the one-step loss and the directional eikonal loss point more closely toward the neighboring data cluster.

As shown in~\Cref{fig:compare_paths_grid}, due to the data-mean distortion, flow matching losses yield worse trajectories with higher curvature and slower improvement, as measured by the target p.d.f.

\begin{figure}[!htbp]
    \vspace{-10pt}
    \centering

    \begin{subfigure}[t]{0.36\linewidth}
        \centering
        \includegraphics[width=\linewidth]{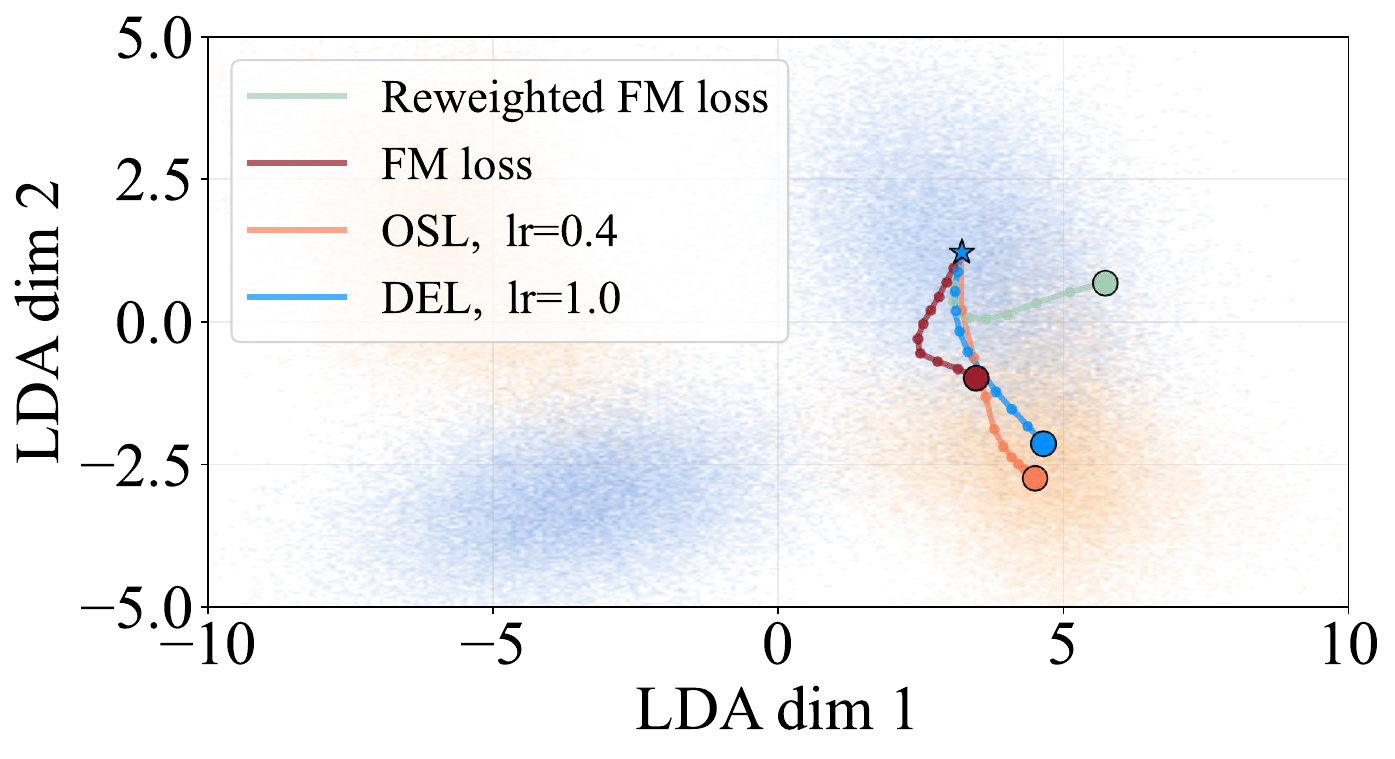}

        \vspace{2mm}

        \includegraphics[width=\linewidth]{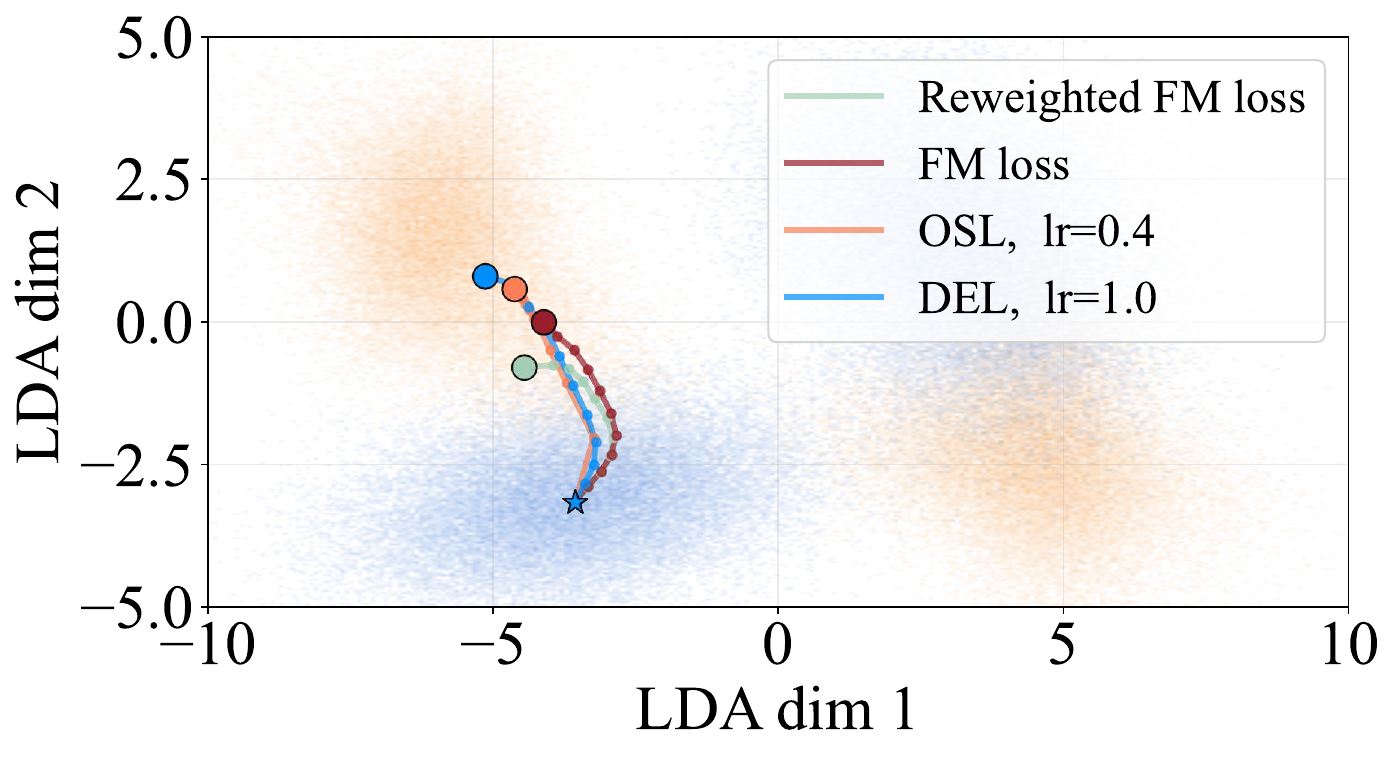}
        \caption{Trajectories from different minimizers. Mean step turning angles (deg) for the top/bottom trajectories in 8D are RFM: 19.59/15.90 and OSL: 8.59/9.36, showing straighter paths for OSL.}
        \label{fig:cmp_col_traj}
    \end{subfigure}
    \hfill
    \begin{subfigure}[t]{0.3\linewidth}
        \centering
        \includegraphics[width=\linewidth]{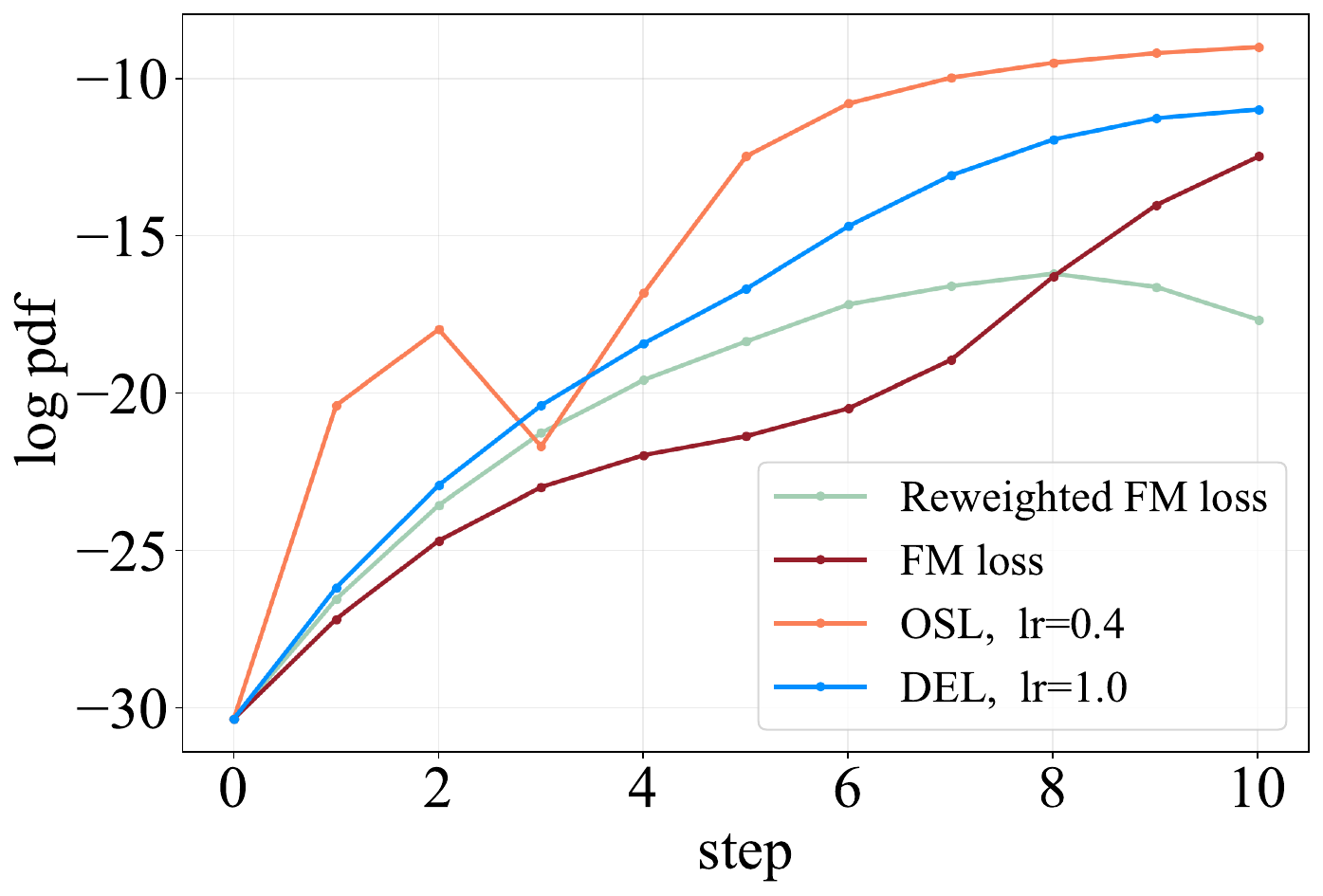}

        \vspace{2mm}

        \includegraphics[width=\linewidth]{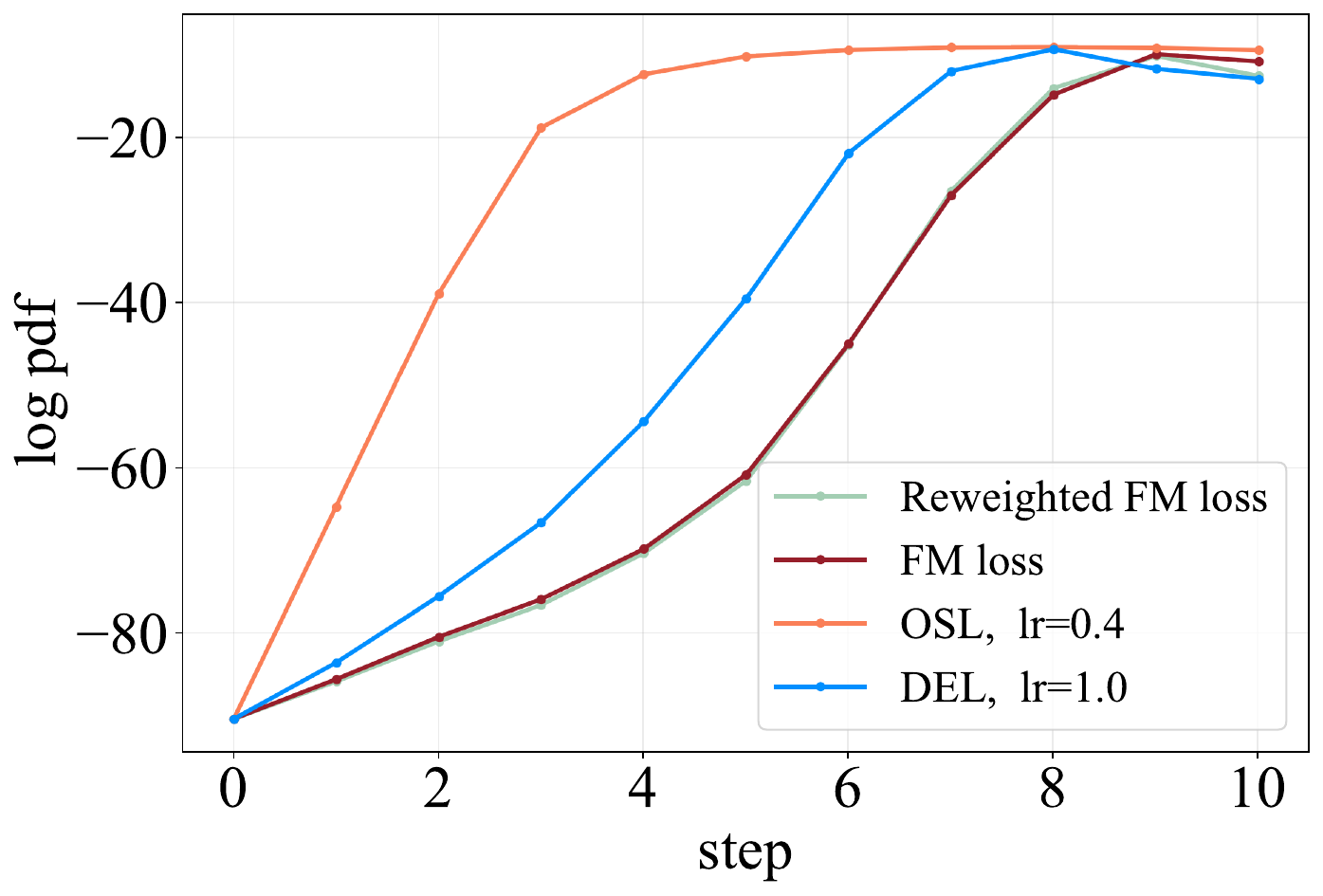}
        \caption{Target log-density (likelihood)$\uparrow$. Reweighted FM curve may be covered by the FM one.}
        \label{fig:cmp_col_logpdf}
    \end{subfigure}
    \hfill
    \begin{subfigure}[t]{0.3\linewidth}
        \centering
        \includegraphics[width=\linewidth]{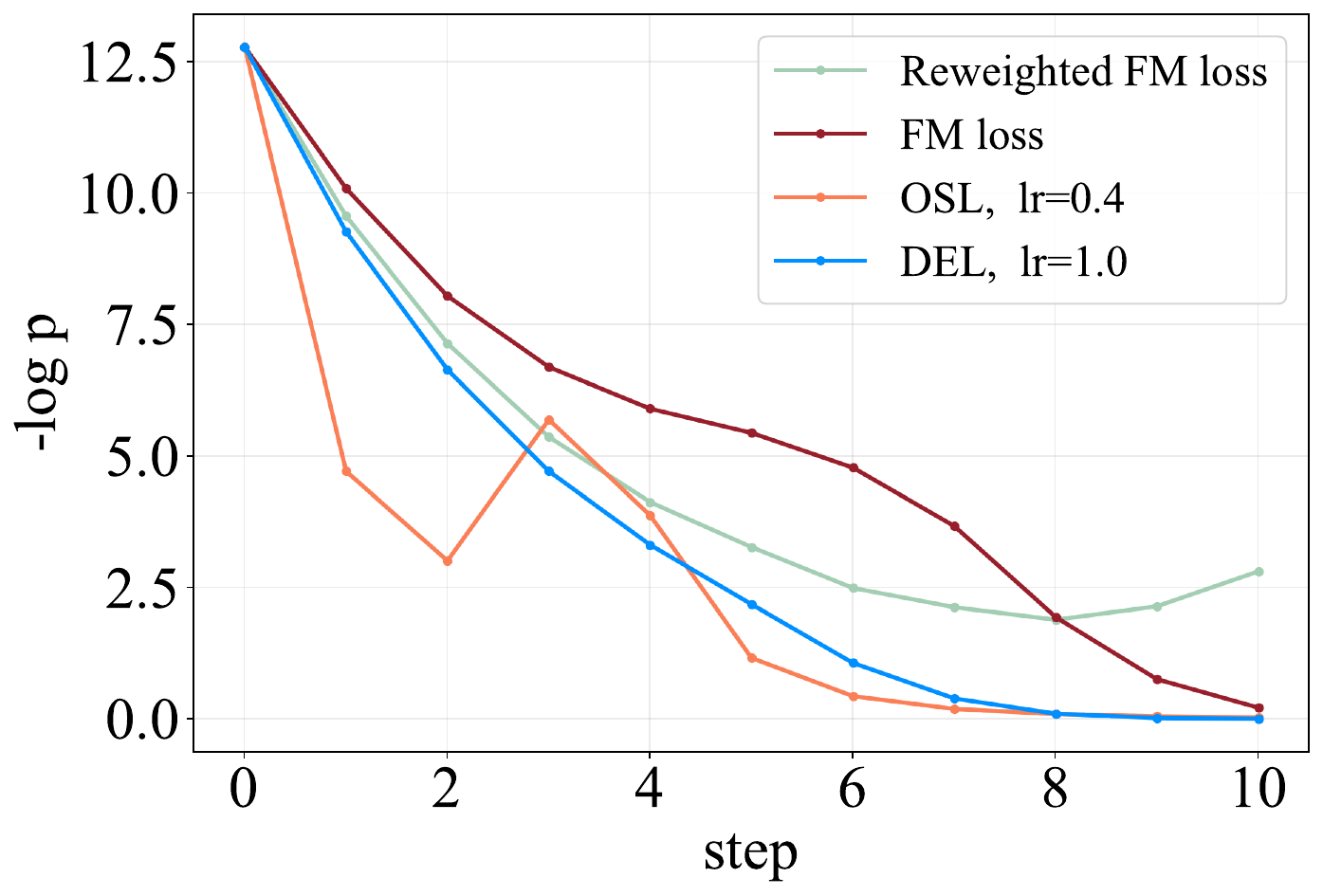}

        \vspace{2mm}

        \includegraphics[width=\linewidth]{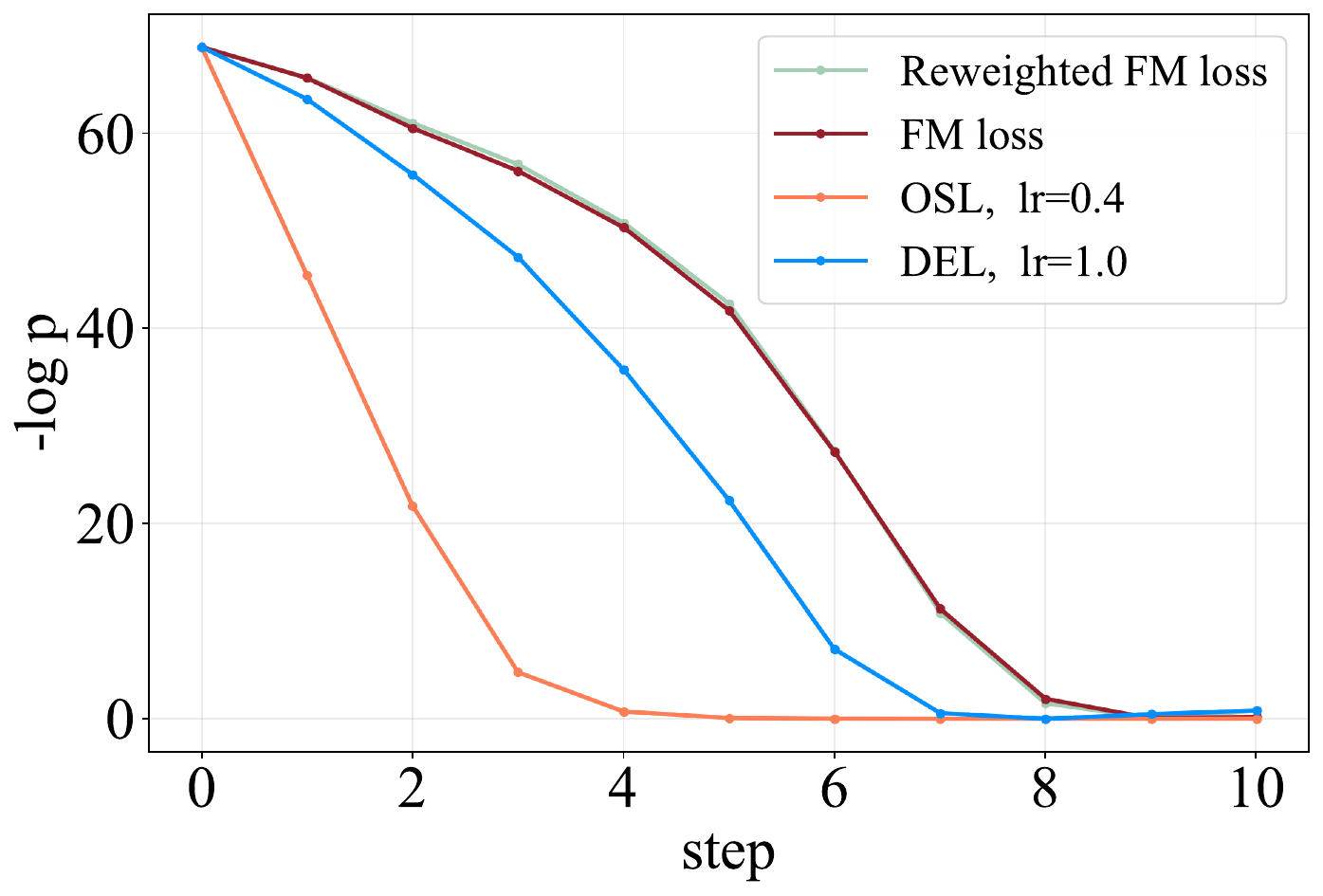}
        \caption{$-\log\!$ p-value (outlierness)$\downarrow$.}
        \label{fig:cmp_col_mahalpval}
    \end{subfigure}
    \vspace{-5pt}
    \caption{
    Flow matching produces more curved trajectories and yields slower improvement in target likelihood than our minimizers.
    We visualize target log-density along the path and report a Mahalanobis-based negative log p-value as a scale-robust outlierness score.
    }
    \label{fig:compare_paths_grid}
\end{figure}

From~\Cref{fig:compare_paths_grid}, our initial steps provide better directions, yielding larger likelihood increases and greater outlierness decreases.
We further show that the one-step loss provides the most stable guidance in~\Cref{fig:lr_sweep_orange_grid}, whereas other minimizers are sensitive to the learning rate/step size $\eta$ in~\Cref{fig:lr_sweep_y4_rgb}.

\begin{figure}[!htbp]
    \vspace{-10pt}
    \centering

    \begin{subfigure}[t]{0.36\linewidth}
        \centering
        \includegraphics[width=\linewidth]{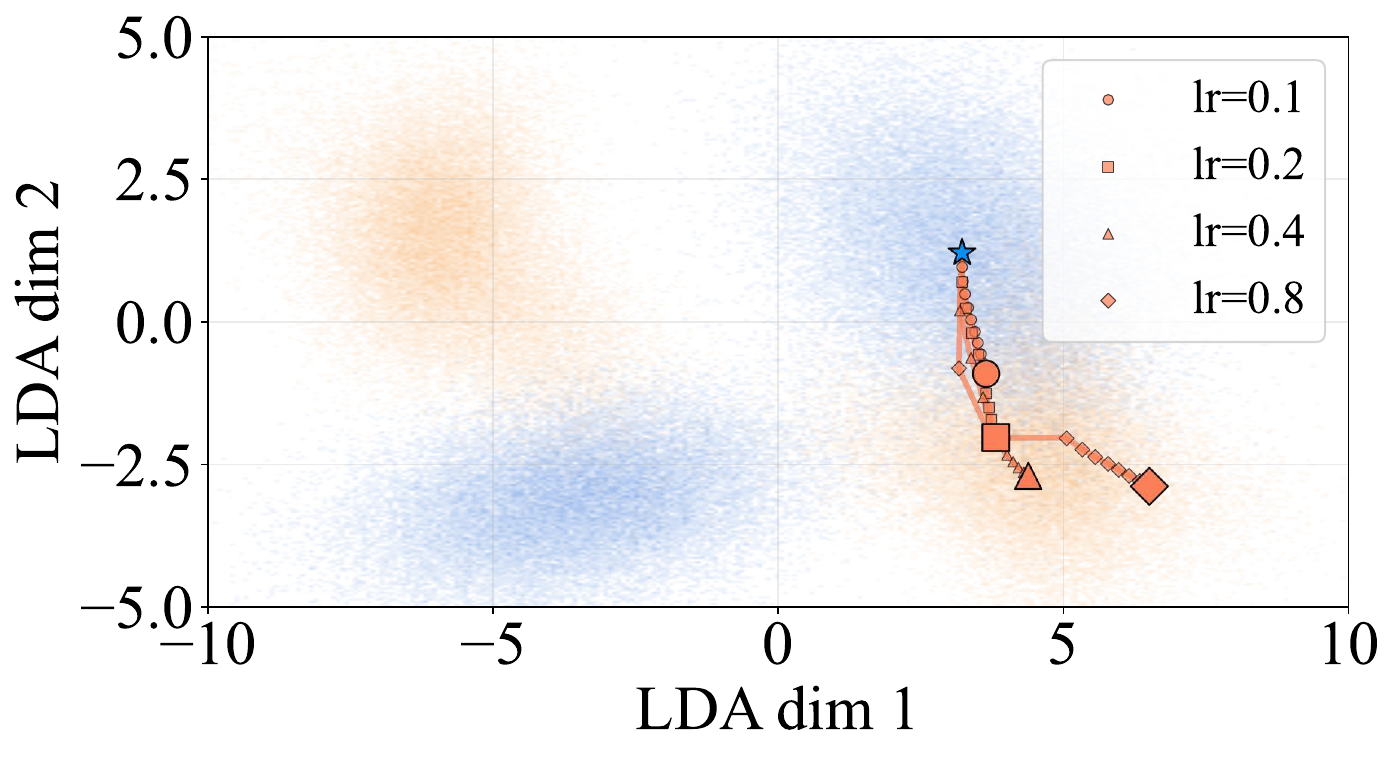}

        \vspace{2mm}

        \includegraphics[width=\linewidth]{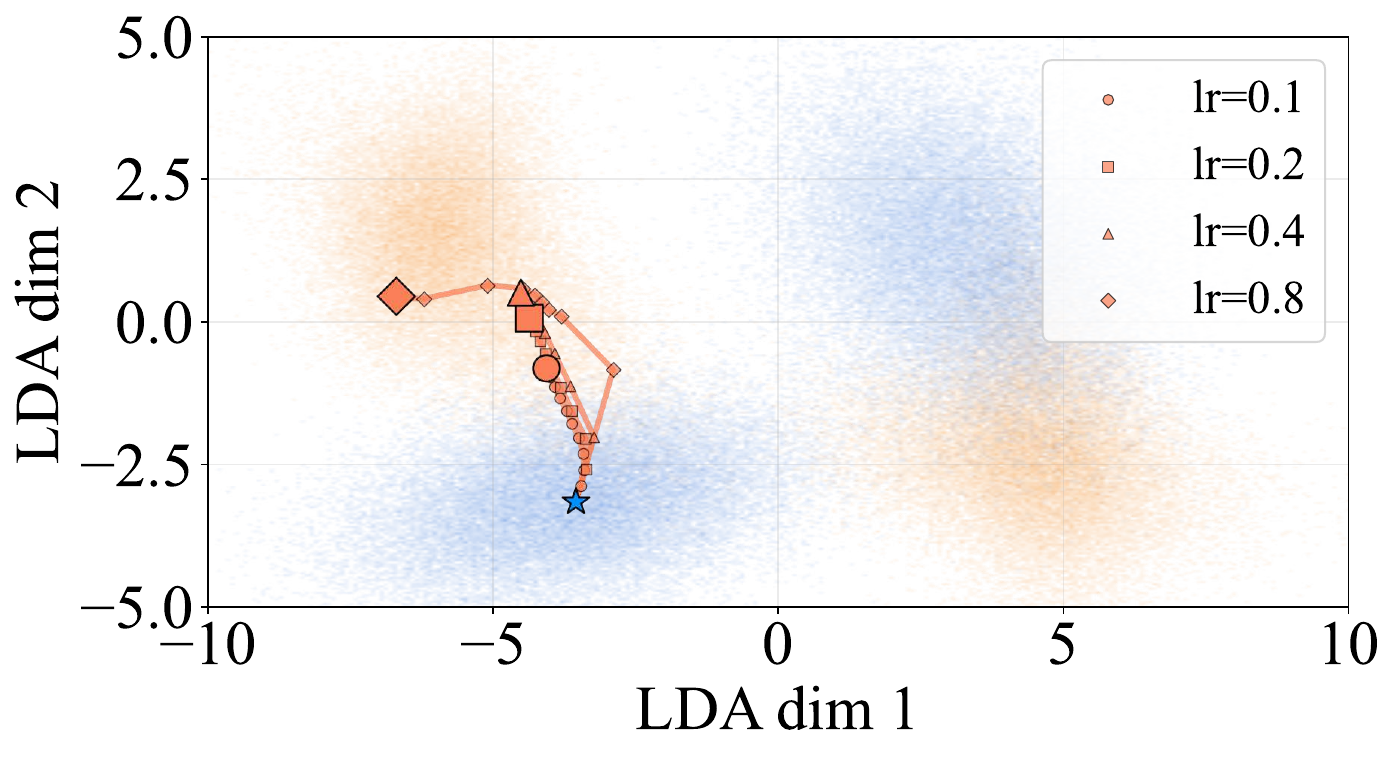}
        \caption{Trajectories under the directions of OSL minimizer with different $\eta$.}
        \label{fig:lr_col_traj}
    \end{subfigure}
    \hfill
    \begin{subfigure}[t]{0.3\linewidth}
        \centering
        \includegraphics[width=\linewidth]{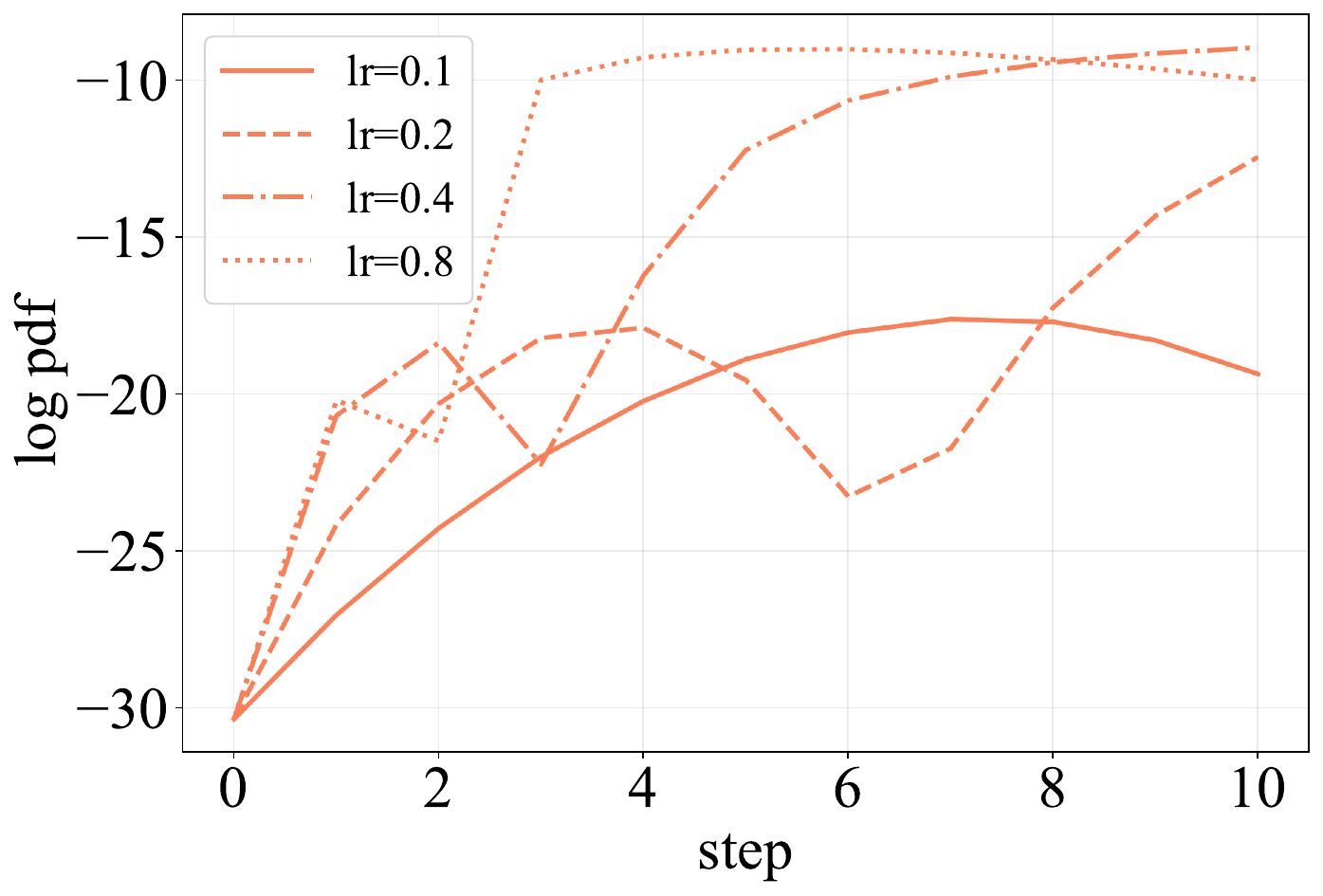}

        \vspace{2mm}

        \includegraphics[width=\linewidth]{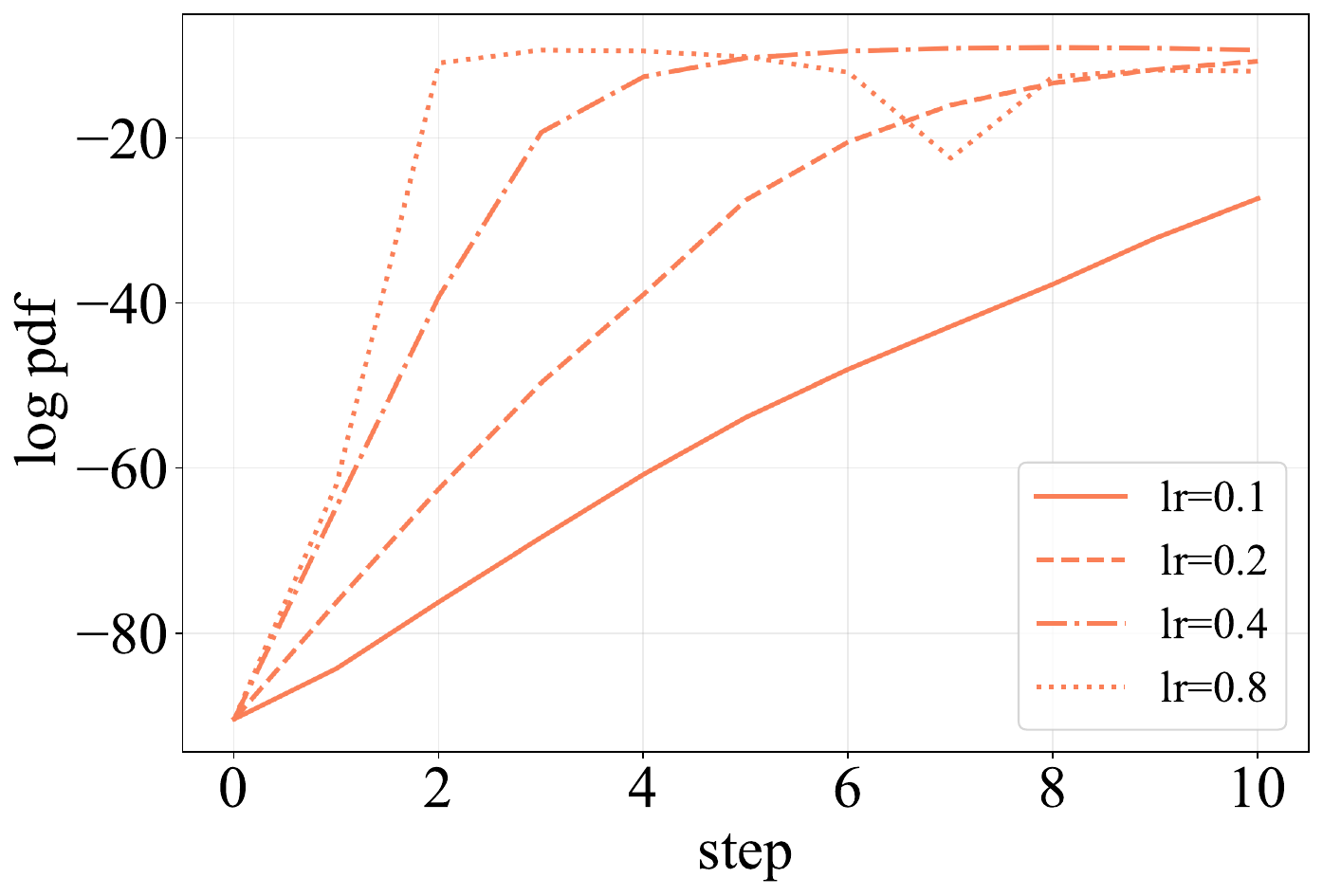}
        \caption{Target log-density (likelihood)$\uparrow$.}
        \label{fig:lr_col_logpdf}
    \end{subfigure}
    \hfill
    \begin{subfigure}[t]{0.3\linewidth}
        \centering
        \includegraphics[width=\linewidth]{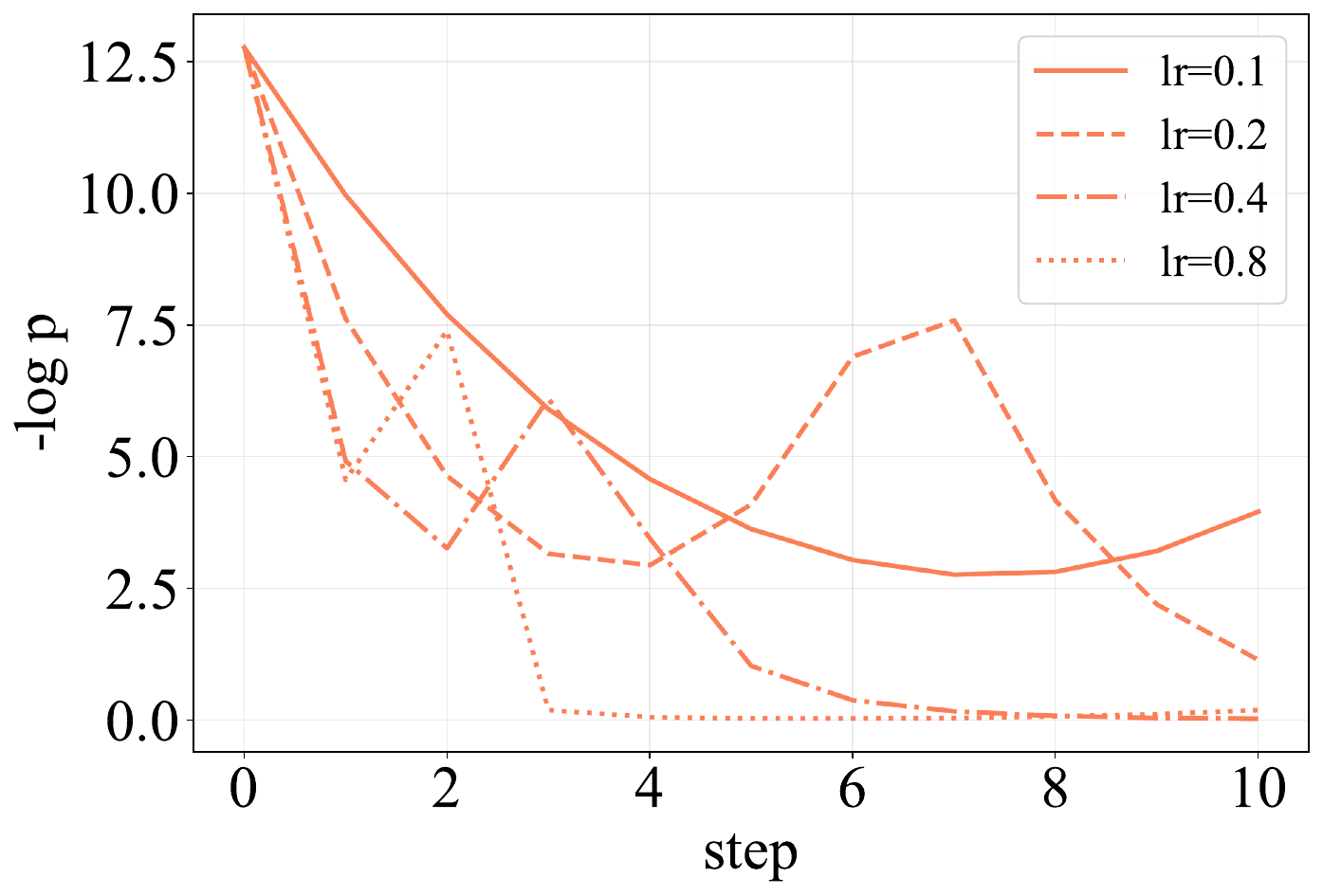}

        \vspace{2mm}

        \includegraphics[width=\linewidth]{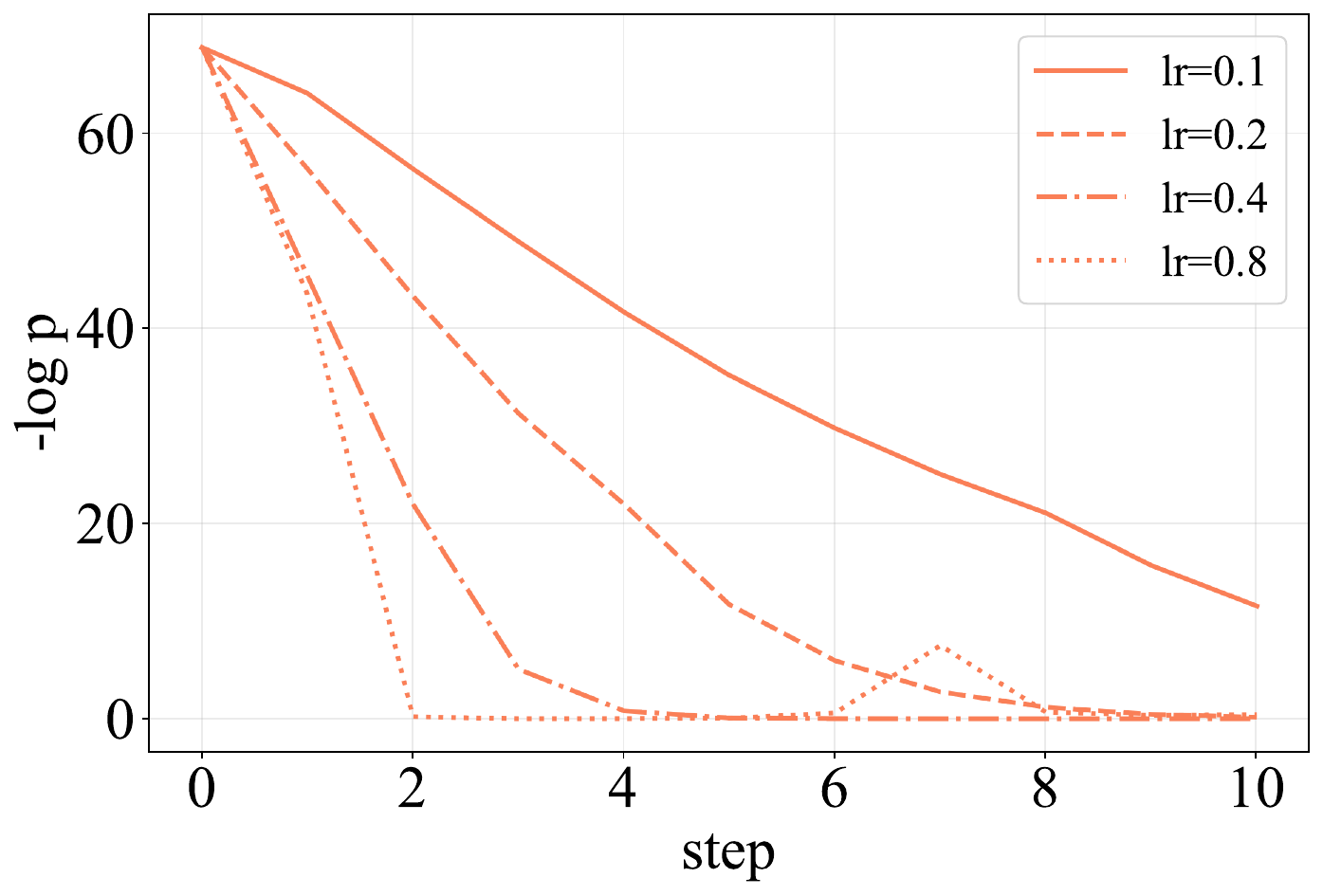}
        
        \caption{$-\log\!$ p-value (outlierness)$\downarrow$.}
        \label{fig:lr_col_mahalpval}
    \end{subfigure}
    \vspace{-5pt}
    \caption{Our one-step loss provides robust guidance across step sizes.
    It converges to near-zero outlierness over a $4\times$ range of step sizes.}
    \label{fig:lr_sweep_orange_grid}
\end{figure}

\begin{figure}[!htbp]
    \vspace{-10pt}
    \centering
    \begin{subfigure}[t]{0.32\linewidth}
        \centering
        \includegraphics[width=\linewidth]{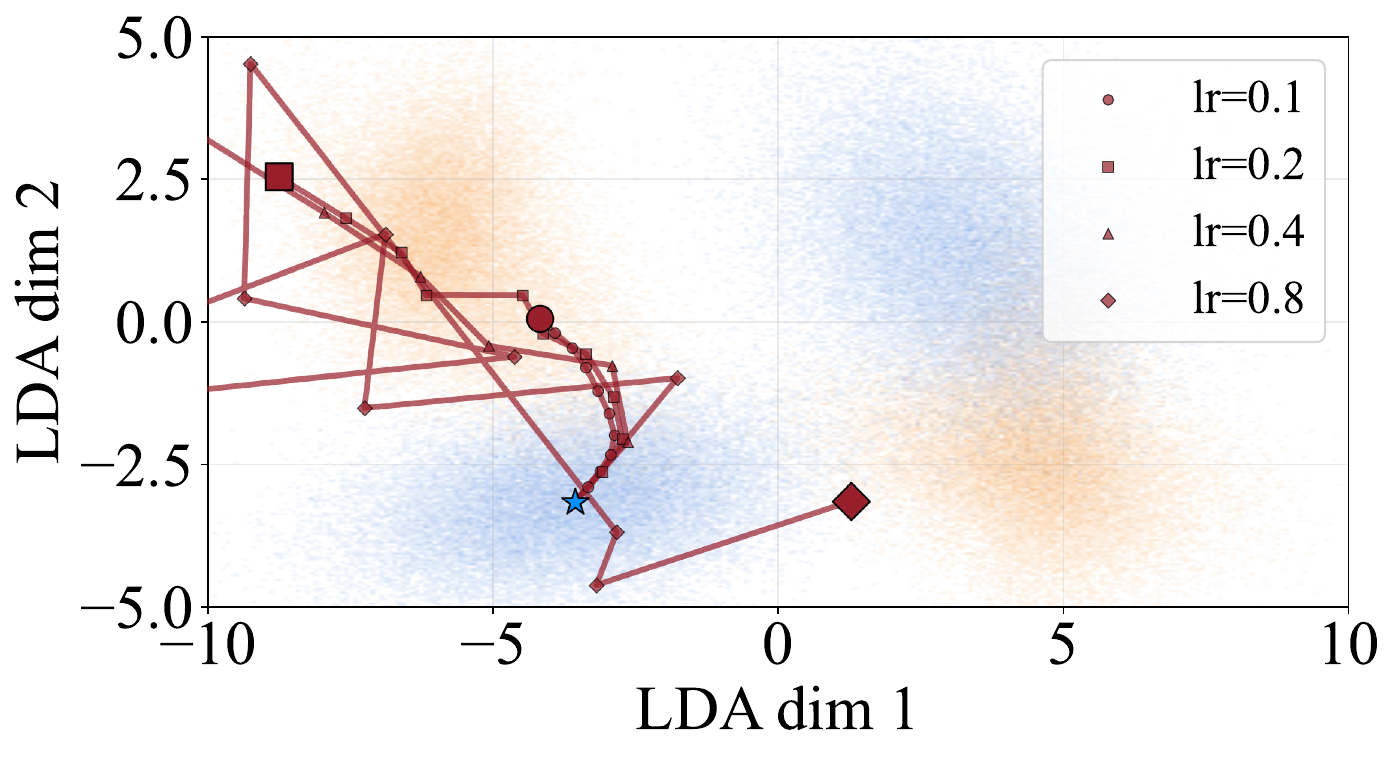}
        \label{fig:lr_y4_red}
    \end{subfigure}
    \hfill
    \begin{subfigure}[t]{0.32\linewidth}
        \centering
        \includegraphics[width=\linewidth]{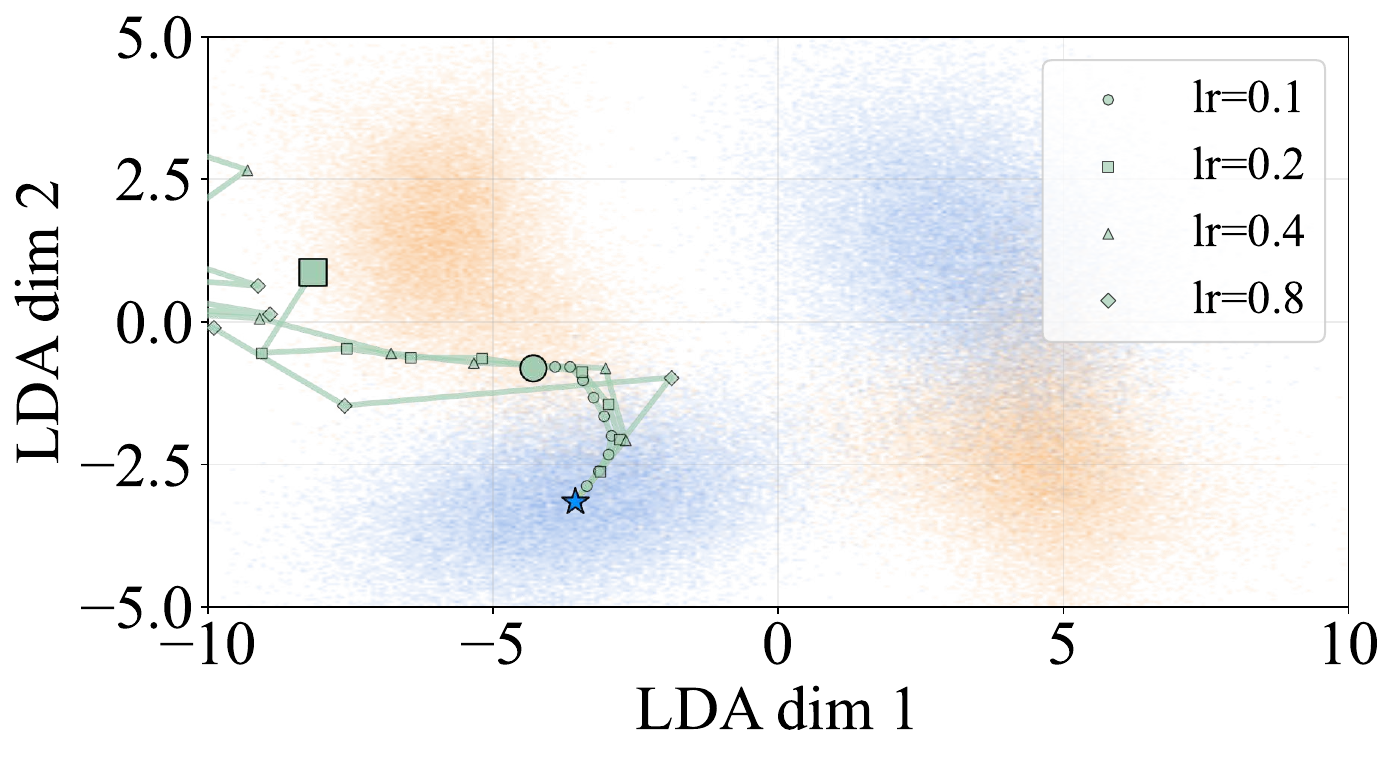}
        \label{fig:lr_y4_mint}
    \end{subfigure}
    \hfill
    \begin{subfigure}[t]{0.32\linewidth}
        \centering
        \includegraphics[width=\linewidth]{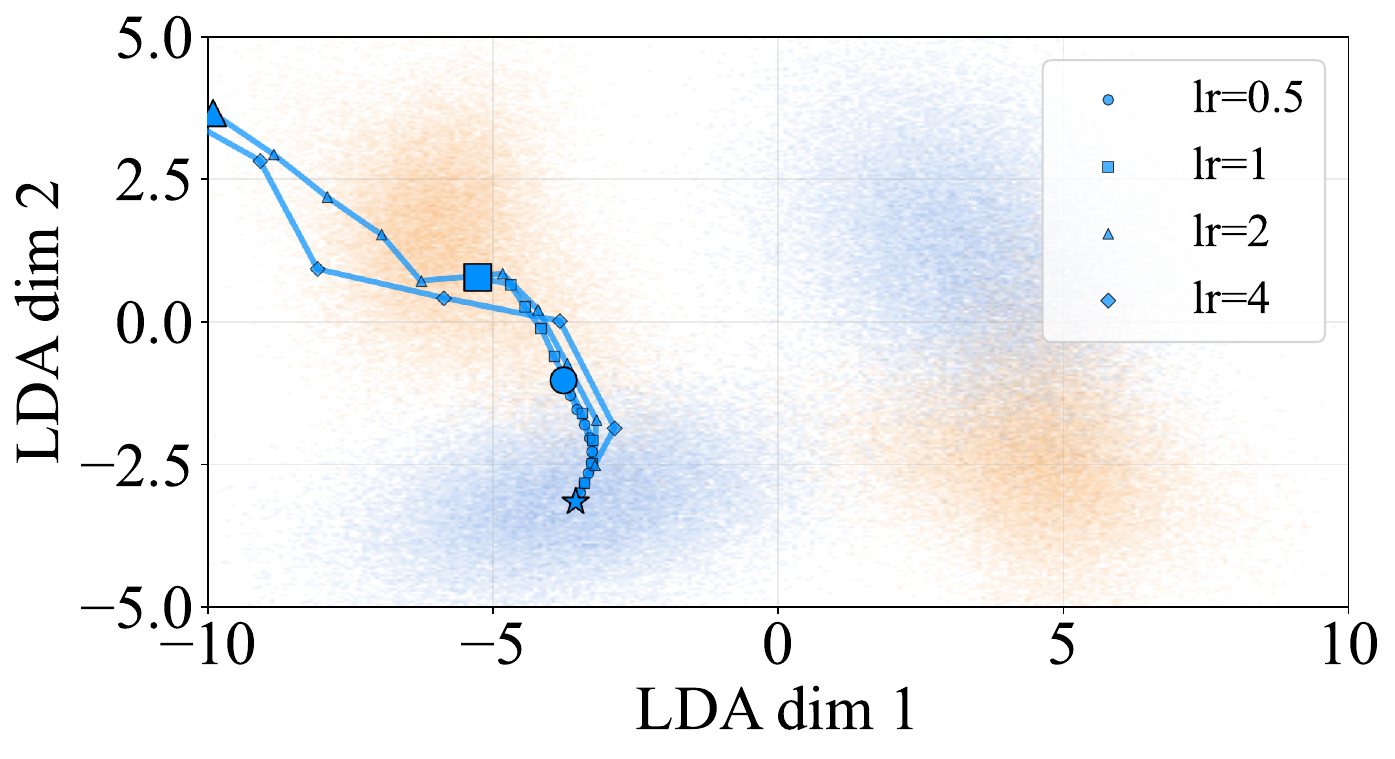}
        \label{fig:lr_y4_blue}
    \end{subfigure}
    \vspace{-20pt}
    \caption{None of the other losses provides reliable guidance over a $4\times$ range of step sizes, as OSL does. \textcolor[HTML]{971E2A}{Red}: FM loss; \textcolor[HTML]{A3CEB3}{Green}: $(1-t)^{-2}$-reweighted FM loss; \textcolor[HTML]{008FFF}{Blue}: directional eikonal loss.}

    \label{fig:lr_sweep_y4_rgb}
    \vspace{-10pt}
\end{figure}

Fixing the step number to 10, flow matching theory suggests using $\eta=1/10$.
In this 8D case, larger $\eta$ does not improve final performance, but severely perturbs the trajectories and causes divergence in~\Cref{fig:lr_sweep_y4_rgb}.

\subsection{Generated Samples}

\begin{figure}[!htbp]
    \centering
    \includegraphics[width=0.8\linewidth]{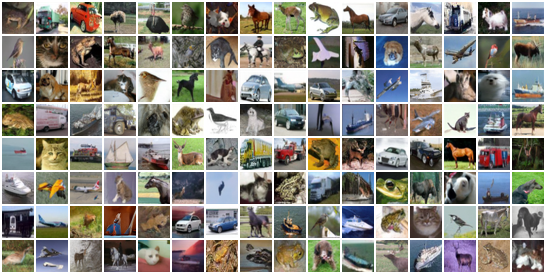}
    \vspace{-5pt}
    \caption{Uncurated samples for time- and class-unconditional generation on CIFAR-10.
\dm achieves the lowest FID among time-unconditional methods on CIFAR-10.}
    \label{fig:uncurated_cifar10}
\end{figure}

\newcommand{\curatedImgW}{0.12\linewidth}

\begin{figure}[!htbp]
\centering
\setlength{\tabcolsep}{0pt} %
\renewcommand{\arraystretch}{0} %

\subfloat[using 250-step gradient descent with $\eta=0.5$]{%
\begin{tabular}{@{}cccc@{}}
\includegraphics[width=\curatedImgW]{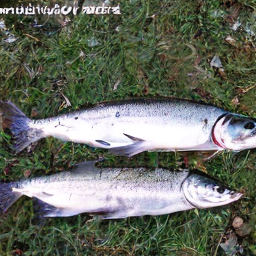} &
\includegraphics[width=\curatedImgW]{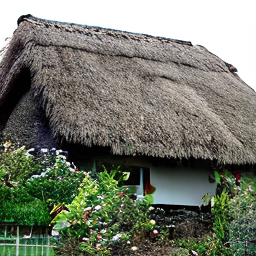} &
\includegraphics[width=\curatedImgW]{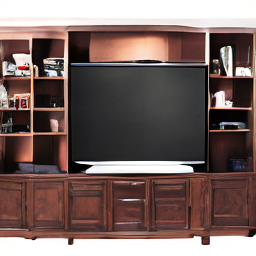} &
\includegraphics[width=\curatedImgW]{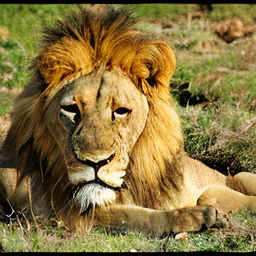} \\
\includegraphics[width=\curatedImgW]{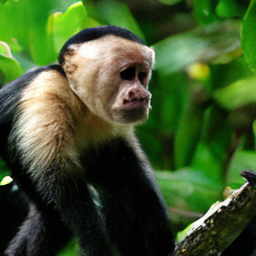} &
\includegraphics[width=\curatedImgW]{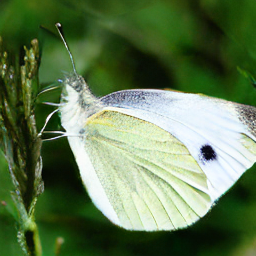} &
\includegraphics[width=\curatedImgW]{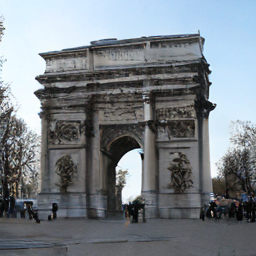} &
\includegraphics[width=\curatedImgW]{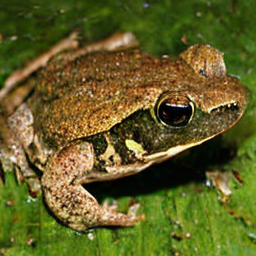} \\
\includegraphics[width=\curatedImgW]{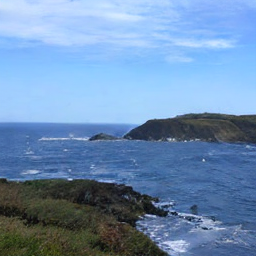} &
\includegraphics[width=\curatedImgW]{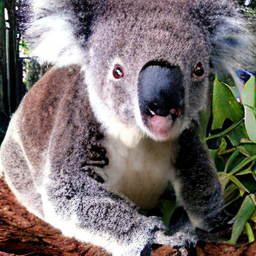} &
\includegraphics[width=\curatedImgW]{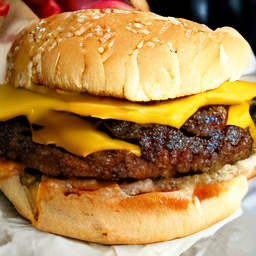} &
\includegraphics[width=\curatedImgW]{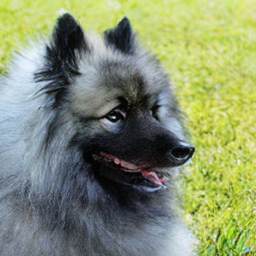} \\
\includegraphics[width=\curatedImgW]{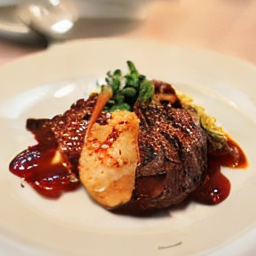} &
\includegraphics[width=\curatedImgW]{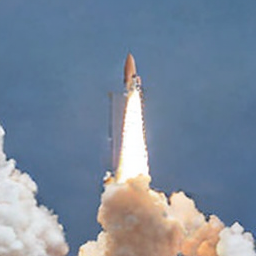} &
\includegraphics[width=\curatedImgW]{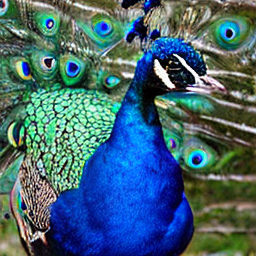} &
\includegraphics[width=\curatedImgW]{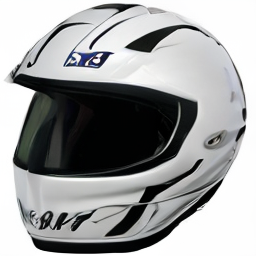} \\
\end{tabular}
}\hspace{0.01\linewidth}
\subfloat[using 250-step sphere tracing with $\eta=0.015$]{%
\begin{tabular}{@{}cccc@{}}
\includegraphics[width=\curatedImgW]{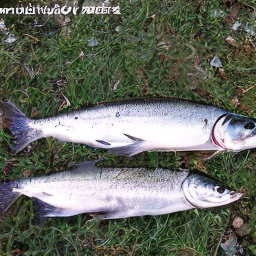} &
\includegraphics[width=\curatedImgW]{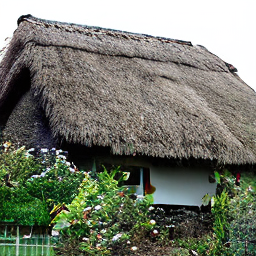} &
\includegraphics[width=\curatedImgW]{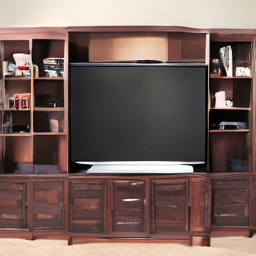} &
\includegraphics[width=\curatedImgW]{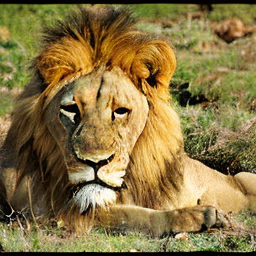} \\
\includegraphics[width=\curatedImgW]{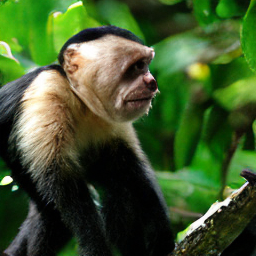} &
\includegraphics[width=\curatedImgW]{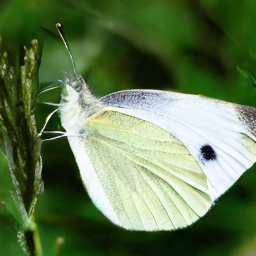} &
\includegraphics[width=\curatedImgW]{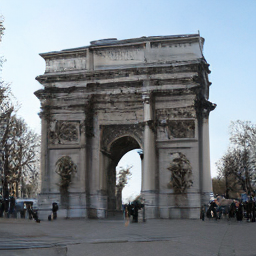} &
\includegraphics[width=\curatedImgW]{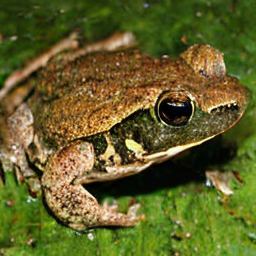} \\
\includegraphics[width=\curatedImgW]{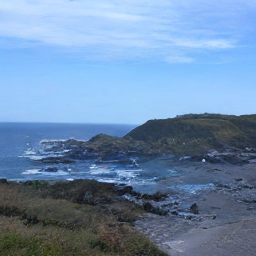} &
\includegraphics[width=\curatedImgW]{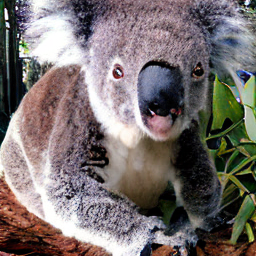} &
\includegraphics[width=\curatedImgW]{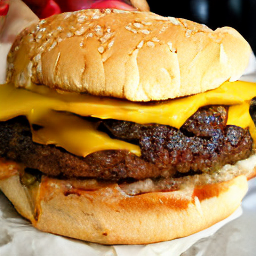} &
\includegraphics[width=\curatedImgW]{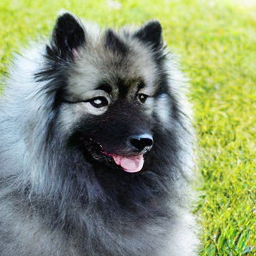} \\
\includegraphics[width=\curatedImgW]{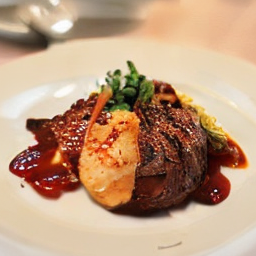} &
\includegraphics[width=\curatedImgW]{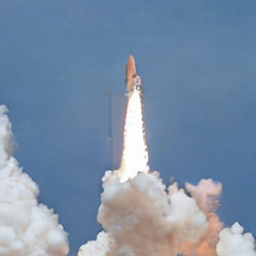} &
\includegraphics[width=\curatedImgW]{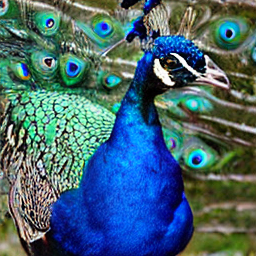} &
\includegraphics[width=\curatedImgW]{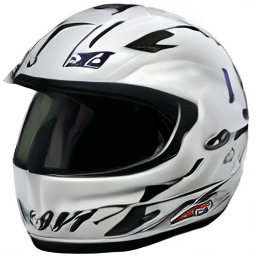} \\
\end{tabular}
}
\vspace{-5pt}
\caption{Curated ImageNet samples generated by our XL model (patch size $2$) with CFG scale $4.0$.
We apply CFG formula~\cite{ho2022classifier} on $\mathbf{v}_\theta$ in gradient descent and $u_\theta \mathbf{v}_\theta$ in sphere tracing.}
\label{fig:curated_imagenet_grid}
\end{figure}
\vspace{-2pt}

\subsection{Generation Process Comparison}
\label{app:gpc}

\begin{figure}[!htbp]
    \centering
    \includegraphics[width=0.5\linewidth]{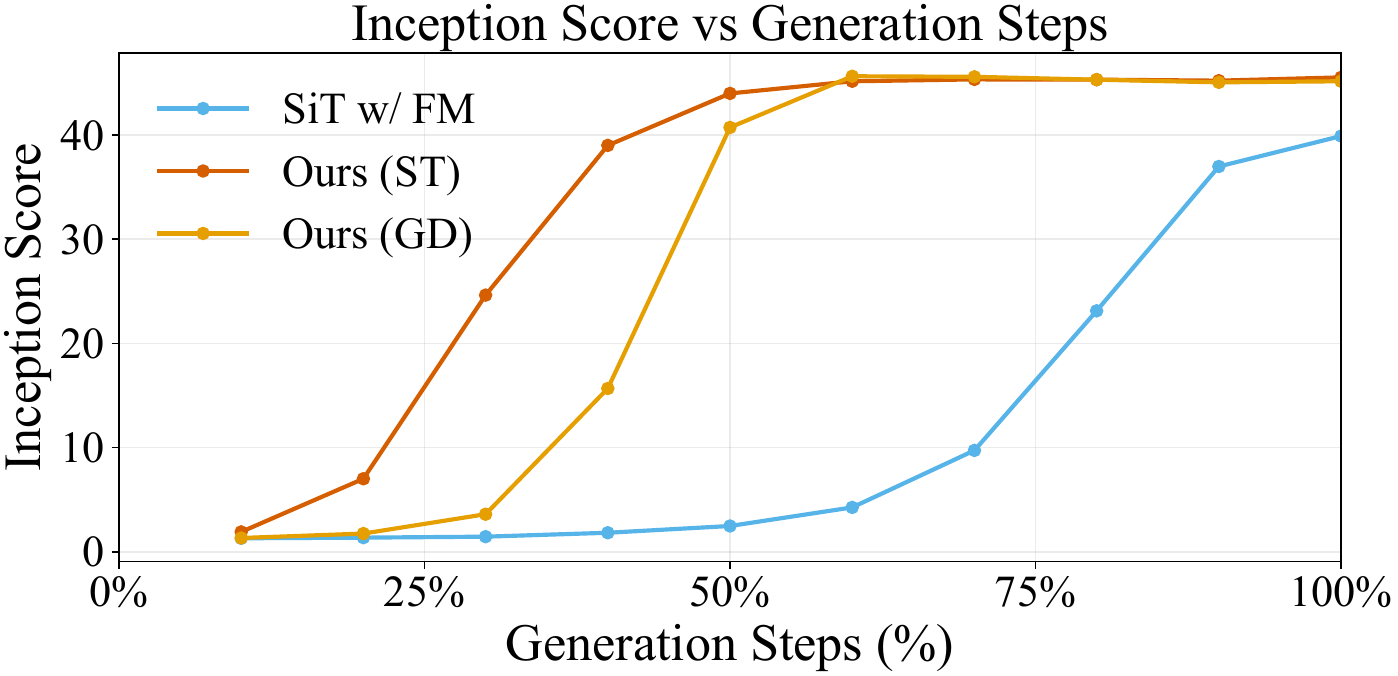}
    \caption{We run the original SiT model~\cite{ma2024sit} with flow matching setting using their recommended ODE solver dopri5 for 250 steps, matching our step number, and report the curves of inception score as a supplement of~\Cref{fig:faster}.}
    \label{fig:is_vs_steps}
    \vspace{-10pt}
\end{figure}
\vspace{-2pt}

During inference, we observe that our method attains perceptually better samples in fewer steps, whereas flow matching tends to meander in the high-noise regime at the beginning.
More concretely, flow matching can exhibit a non-monotonic FID in the early stage, even though the inception score (IS) keeps increasing (see~\Cref{fig:is_vs_steps}) and coarse structures gradually emerge.
It exceeds flow matching's final IS using only 50\% of the sampling steps, converging even faster when measured by IS than by FID.
Similar early-stage drifting behavior has also been noted in prior work. \citet{karras2022elucidating} Fig.~1 and \citet{gagneux2025generation} Fig.~7 report that samples are shifted toward the data mean at the beginning of generation.
Energy Matching~\cite{balcerak2025energy} also implies this meandering behavior: every step uses a $1/100$ scaled full denoising vector, yet achieves the best performance with 325 steps rather than 100 steps.

We visualize intermediate generation slices in~\Cref{fig:evolution} at every $10\%$ progress.
Our analysis in~\Cref{sec:analysis} shows that closest-target rematching can induce mode collapse: fully resolving target ambiguity tends to sacrifice the diversity of targets.
In our final approach, inverse distance reweighting preserves some ambiguity, so denoising cannot be reliably done in a single step and instead benefits from multiple refinement steps.
We conjecture that our model could also inspire few-step generation.
This is because many distillation models still rely on a strong multi-step teacher, and stop-gradient supervision pipelines may also benefit from faster fine-grained trajectories~\Cref{fig:paths}.

\subsection{Convergence and Robustness}
\label{app:convergence}

One limitation of our current work is that we do not structurally constrain $\mathbf{v}_\theta$ to be a conservative field that is curl-free.
In the worst case, a field with strong rotational components can cause trajectories to cycle or oscillate forever.
Hence, we use this subsection to empirically illustrate our convergence and robustness.

\begin{figure}[!htbp]
    \centering
    \vspace{-10pt}
    \includegraphics[width=0.5\linewidth]{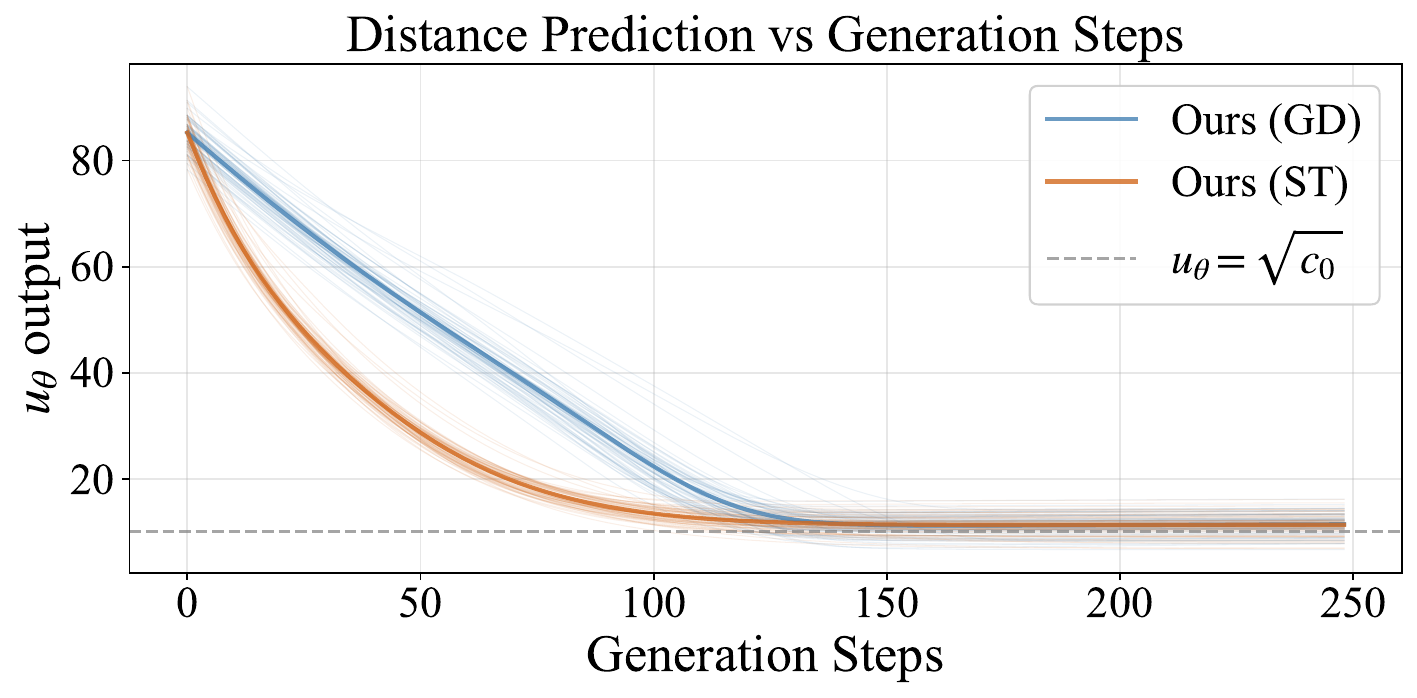}
    \vspace{-10pt}
    \caption{During ImageNet generation, the distance predicted by $u_\theta$, as smoothed by~\Cref{eq:analytical}, converges to the nearby $\sqrt{c_0}$ as predicted by our analysis.
    We sample 64 same Gaussian noise vectors and run gradient descent and sphere tracing in parallel; the bold lines show their mean.
    gradient descent provides a linear descent pattern, while sphere tracing shows an exponential descent.}
    \label{fig:dist_process}
\end{figure}

In~\Cref{fig:dist_process}, we first plot the distance curves for the 16 Gaussian noise vectors used in~\Cref{fig:evolution}, along with 48 additional initial noise vectors, to show that the convergence behaves like real optimization in low-dimensional distance fields.

As a comparison, during inference, previous flow matching and diffusion models essentially divide $[0,1]$ by the step budget and maintain a step counter as the $t$ input.
When $t$ reaches 1, denoising stops.
In their theory, the user can only specify the step budget, with no direct control over the step size.
We argue that this may not be the optimal perspective, because the model-generated states may not exactly match the $t$ implied by the step index, which is often summarized as exposure bias~\cite{ning2023elucidating, li2023alleviating}.

\begin{figure}[!htbp]
    \centering
    \includegraphics[width=0.5\linewidth]{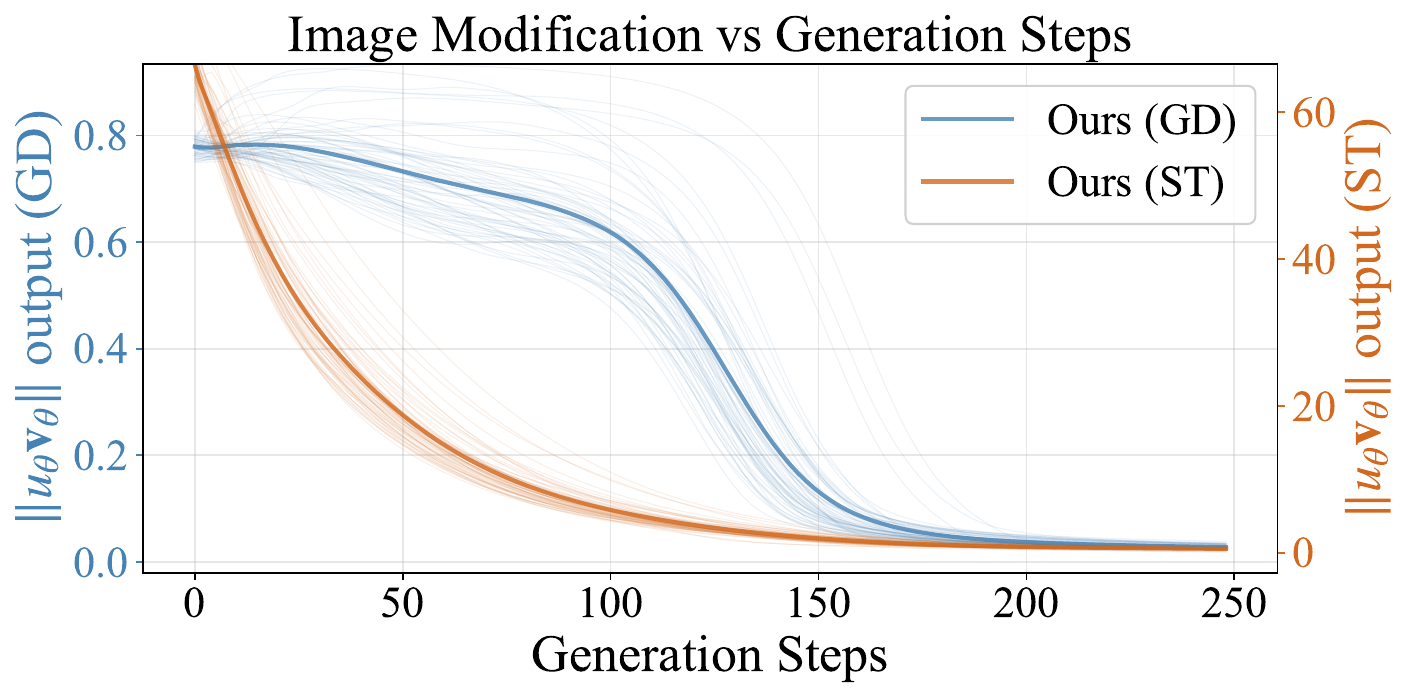}
    \vspace{-10pt}
    \caption{In the first 40\%, gradient descent maintains a steady denoising pace and then rapidly converges to zero modification. Sphere tracing still follows an exponential pattern and has the largest modifications at the beginning, consistent with our analysis in~\Cref{app:gpc}.}
    \label{fig:modification}
\end{figure}

In contrast, our inference methods sphere tracing and gradient descent are principally aligned with our loss design.
We observe that the image modification, measured by $\| \mathbf{v}_\theta \|$ (gradient descent) and $u_\theta \| \mathbf{v}_\theta \|$ (sphere tracing), also converges to near zero in~\Cref{fig:modification}.

Having one more hyperparameter to tune does not make our method fragile.
In practice, DM has a wide basin of convergence across step sizes, yielding reliable convergence and strong robustness, visualized in~\Cref{fig:fid_vs_stepsize}.

\begin{figure}[t]
    \centering
    \begin{subfigure}{0.3\linewidth}
        \centering
        \includegraphics[width=\linewidth]{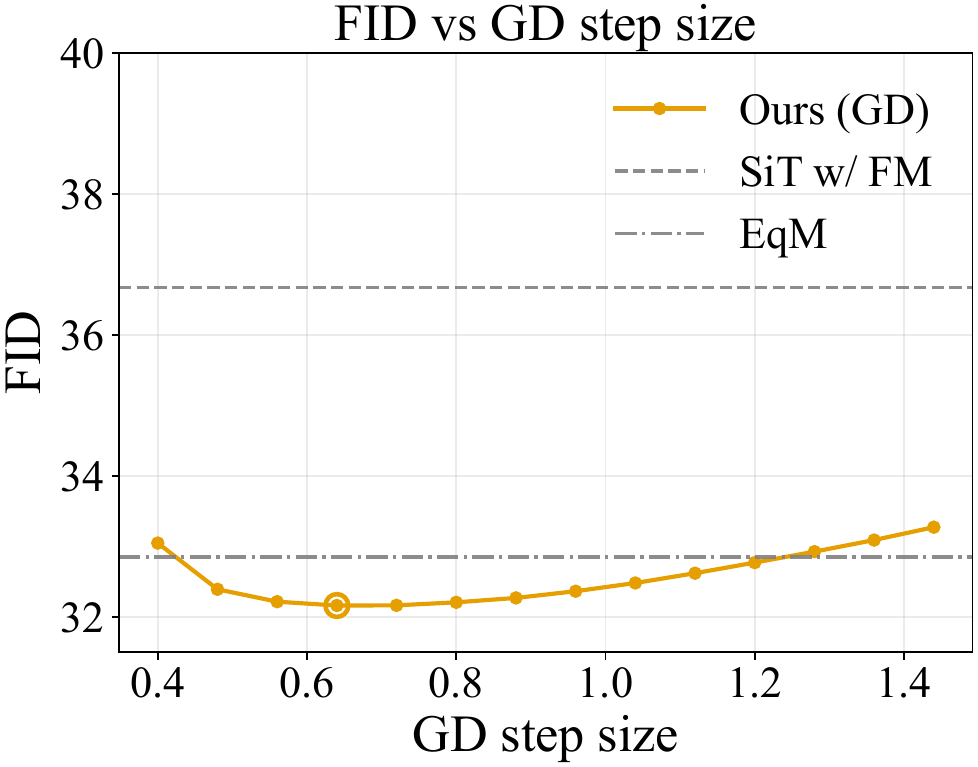}
    \end{subfigure}\hspace{0.05\linewidth}
    \begin{subfigure}{0.3\linewidth}
        \centering
        \includegraphics[width=\linewidth]{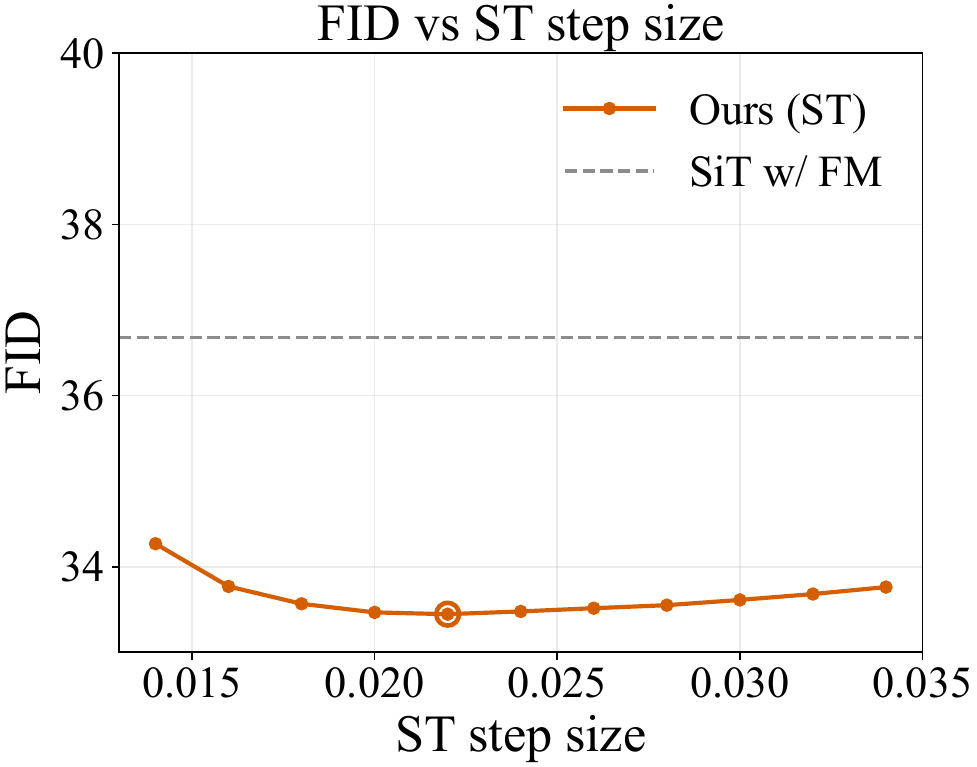}
    \end{subfigure}
    \caption{\dm is robust to step-size choices.
    We plot results for our backbone SiT~\cite{ma2024sit} and the second-best EqM~\cite{wang2025equilibrium} in gray as references, using the same base model size, training epochs, and evaluation protocol (250 steps, 50k samples).
    The best step size is highlighted with an additional circle.}
    \vspace{-10pt}
    \label{fig:fid_vs_stepsize}
\end{figure}

We also find that prior work~\cite{balcerak2025energy,wang2025equilibrium} does not perform well when using a structural constraint (i.e., taking autograd from a scalar network) to make the directions curl-free.
We conjecture that current generative model architectures are not designed to preserve directional information after the autograd operator.
Moreover, applying constraints on the autograd output, which requires backpropagating through second-order derivatives, is highly unstable~\cite{czarnecki2017sobolev,li2024neural,chetan2025accurate}.
On the other hand, efficiently computing or estimating curl in high-dimensional spaces remains an open problem, let alone using losses to enforce zero curl.
We anticipate future work will go deeper into this discussion.

\subsection{Ablation Study}
\label{app:ablation}

\paragraph{Verification of Mode Collapse and Condition Mismatch.}

We compare our original random linear interpolation against two alternative rematching strategies during training: (i) OT coupling and (ii) minibatch closest coupling.
Both variants lead to degraded performance (see~\Cref{fig:ablation_coupling} and~\Cref{tab:ot_failure}), consistent with our discussion in~\Cref{sec:analysis}.

\begin{table}[!htbp]
\centering
\captionsetup{skip=6pt}
\caption{Quick test on ImageNet generation with $1024$ samples. We keep sphere-tracing sampling unchanged (same step budget) and only vary the training-time matching on B/2 SiT model with 80 epochs: random linear interpolation as in~\Cref{def:genproc}, OT coupling~\cite{tong2023improving} (i.e., nearest-target matching without replacement), or minibatch closest coupling (i.e., nearest-target matching within each minibatch with replacement).}
\label{tab:ot_failure}

\small
\setlength{\tabcolsep}{6pt}
\renewcommand{\arraystretch}{1.05}

\begin{tabular}{l c c c}
\toprule
\textbf{Method} & \textbf{Inception Score $\uparrow$} & \textbf{FID $\downarrow$} & \textbf{sFID $\downarrow$} \\
\midrule
no coupling & 41.77 & 71.23 & 206.2 \\
w/ OT coupling      & 36.96 & 77.19 & 208.2 \\
w/ minibatch closest coupling & 26.31 & 90.79 & 217.6 \\
\bottomrule
\end{tabular}
\end{table}

\begin{figure}[!htbp]
  \centering

  \begin{subfigure}[!htbp]{0.27\linewidth}
    \centering
    \includegraphics[width=\linewidth]{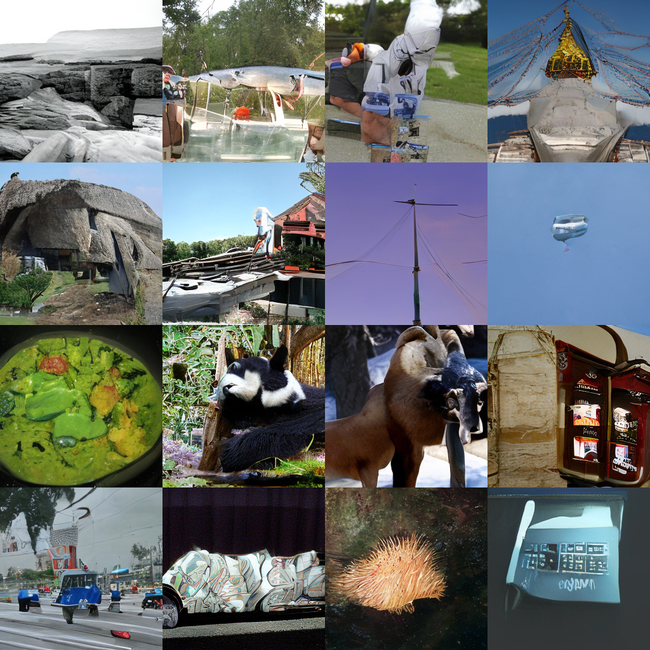}
    \caption{Original random linear matching.}
    \label{fig:ablation_coupling:a}
  \end{subfigure}\hspace{0.02\linewidth}
  \begin{subfigure}[!htbp]{0.27\linewidth}
    \centering
    \includegraphics[width=\linewidth]{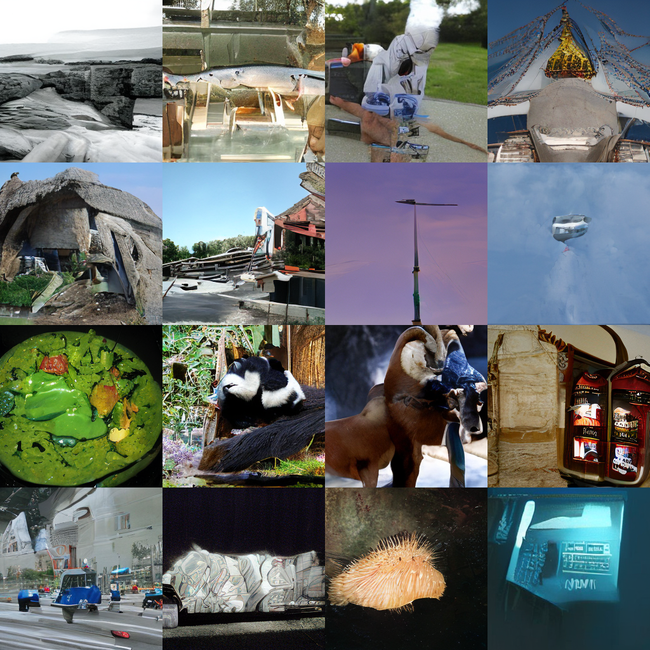}
    \caption{OT coupling.}
    \label{fig:ablation_coupling:b}
  \end{subfigure}\hspace{0.02\linewidth}
  \begin{subfigure}[!htbp]{0.27\linewidth}
    \centering
    \includegraphics[width=\linewidth]{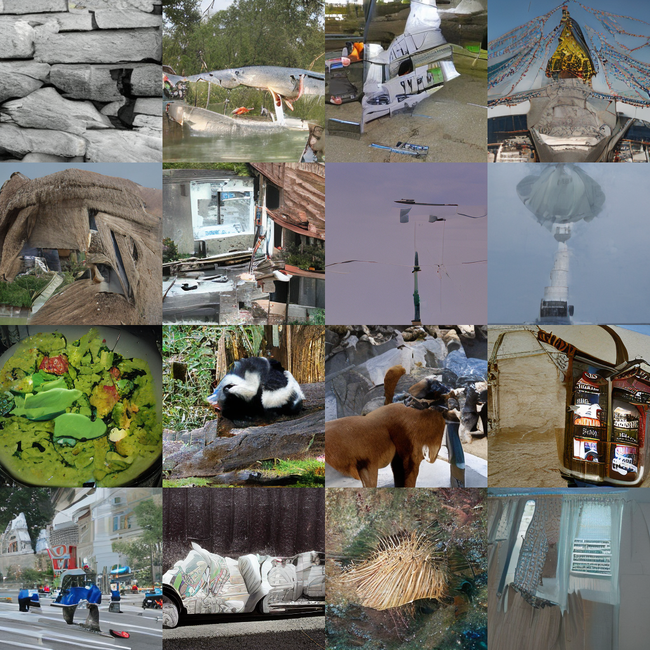}
    \caption{Minibatch closest coupling.}
    \label{fig:ablation_coupling:c}
  \end{subfigure}

  \caption{Comparison of generative models trained by different rematching algorithms, highlighting both condition mismatch and mode collapse:
  (\hyperref[fig:ablation_coupling:b]{b}) OT coupling introduces more unnatural artifacts in conditional generation;
  (\hyperref[fig:ablation_coupling:c]{c}) minibatch closest coupling makes the outputs noticeably low-contrast and grayish.
  We keep CFG scale $=1$ (i.e., no CFG) to align with other experiments.}

  \label{fig:ablation_coupling}
\end{figure}

\paragraph{Training without directional eikonal loss.}

We also train \dm using only the one-step loss and evaluate it under sphere tracing.
Removing one-step loss leaves the distance part $u_\theta$ essentially unconstrained, which makes training unstable; therefore, we do not include this setting.
As shown in~\Cref{tab:remove_del}, this variant exhibits a slight degradation in performance.
The drop is substantially larger for class-conditional ImageNet generation, while class-unconditional generation remains highly competitive, further supporting the condition mismatch discussed in~\Cref{sec:analysis}.

\vspace{-2pt}
\begin{table}[!htbp]
\centering
\captionsetup{skip=6pt}
\caption{Training with only one-step loss over-emphasizes nearby targets and degrades sample quality, leading to a higher FID. Consistent with our discussion of conditional mismatch, this degradation is substantially more pronounced on ImageNet.
On CIFAR-10, training with only one-step loss still achieves the best performance among energy-based models and ranks second among all time-unconditional methods.
All FIDs are computed using 50k generated samples with 250 sphere tracing steps.
We tune $\eta$ for each method and report the lowest FID achieved.}
\label{tab:remove_del}

\small
\setlength{\tabcolsep}{6pt}
\renewcommand{\arraystretch}{1.05}

\begin{tabular}{l c c}
\toprule
\textbf{Method} & \textbf{w/ DEL} & \textbf{w/o DEL} \\
\midrule
class-unconditional generation (CIFAR-10) & 2.23 & 2.33 \\
class-conditional generation (ImageNet)      & 33.44 & 37.01 \\

\bottomrule
\end{tabular}
\end{table}
\vspace{-2pt}

\paragraph{Loss Variations}

We apply two different loss designs on CIFAR-10 generation task: (i) set $c_0=\epsilon=0$; (ii) unnormalized one-step loss without the denominator.
Consistent with our previous analysis, both degrade generation performance (see~\Cref{tab:loss_variation}).

\vspace{-2pt}
\begin{table}[!htbp]
\centering
\captionsetup{skip=6pt}
\caption{Ablation study on CIFAR-10 of different loss designs. We tune $\eta$ for each method and report the lowest FID achieved.}
\label{tab:loss_variation}

\small
\setlength{\tabcolsep}{6pt}
\renewcommand{\arraystretch}{1.05}

\begin{tabular}{l c c}
\toprule
\textbf{Method} & \textbf{FID$\downarrow$ (ST)} & \textbf{FID$\downarrow$ (GD)} \\
\midrule
original OSL, $c_0=\epsilon=4$ (baseline) & 2.23 & 2.54 \\
original OSL, $c_0=\epsilon=0$ & 2.37 & 3.84 \\
unnormalized OSL w/o denominator, $c_0=4$  & 3.88 & 4.63 \\

\bottomrule
\end{tabular}
\end{table}
\vspace{-2pt}

As in our previous analysis, setting $\epsilon,\, c_0>0$ both prevents division by zero (and the resulting numerical instability) and discourages excessive sensitivity near the data manifold.
Neural networks have limited capacity.
If we set $\epsilon=c_0=0$, the objective effectively demands that the model learn the denoising direction accurately in a tiny neighborhood around the data.
In that neighborhood, the correct denoising direction can change very sharply over a very small distance.
To represent such sharp local variation, the model must allocate a large portion of its capacity there, which can be an inefficient use of capacity for generation tasks.
Even with $\epsilon=c_0=0$, the FID does not degrade substantially: we still achieve the best performance among energy-based models and rank second among all time-unconditional methods, indicating robustness to these choices.

By contrast, removing the denominator not only loses the inverse distance reweighting we proved in~\Cref{app:properties}, but also makes fine-grained denoising difficult to learn.
In the unnormalized form, large displacements dominate the training signal, since they consistently incur larger one-step loss errors, which is unavoidable given the ambiguous posterior discussed in~\Cref{sec:analysis}.
As a result, the model allocates less learning capacity to moderate and lightly noised inputs, whose displacements are inherently smaller.
This imbalance leads to a substantially worse FID for the unnormalized one-step loss.

\newcommand{\gridImgW}{0.19\linewidth} %
\newcommand{\gridGap}{1.2mm}           %
\newcommand{\subFigVGap}{2.5mm}        %

\begin{figure*}[t]
\centering
\setlength{\tabcolsep}{0pt}
\renewcommand{\arraystretch}{0}

\subcaptionbox{\dm with sphere tracing, $\eta=0.022$, no CFG.\label{fig:st-grid}}{%
\begin{tabular}{@{}ccccc@{}}
\includegraphics[width=\gridImgW]{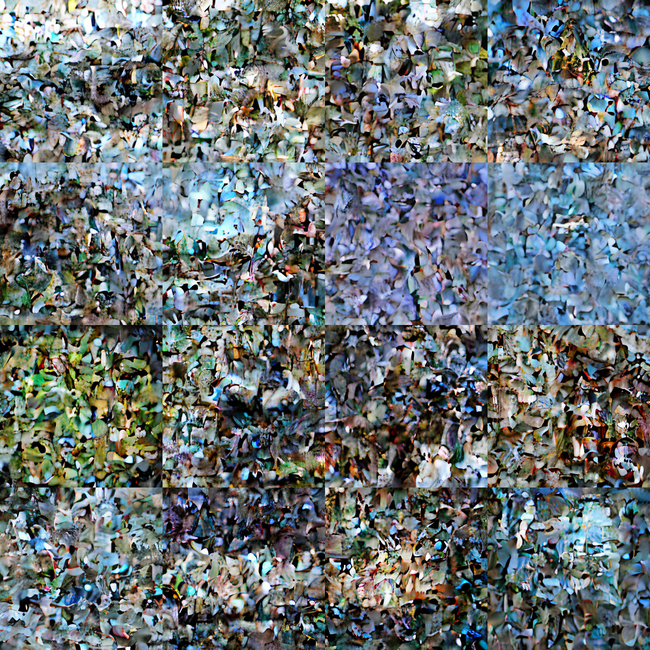} &
\hspace{\gridGap}\includegraphics[width=\gridImgW]{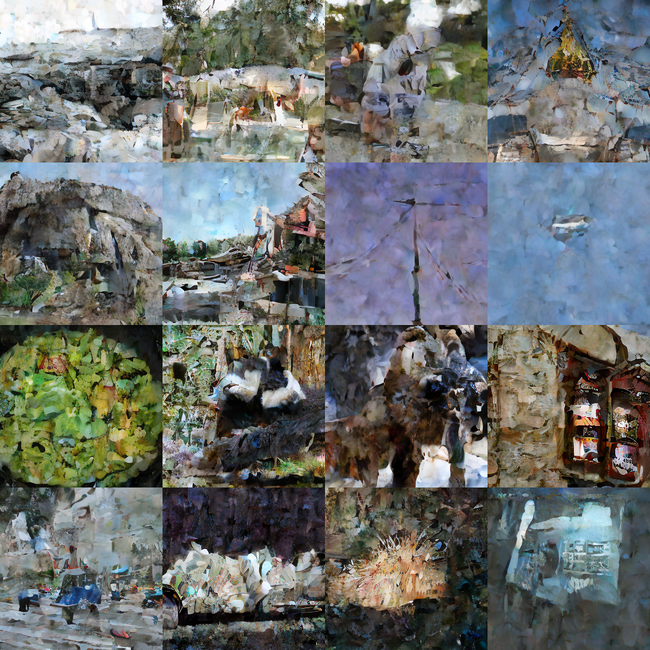} &
\hspace{\gridGap}\includegraphics[width=\gridImgW]{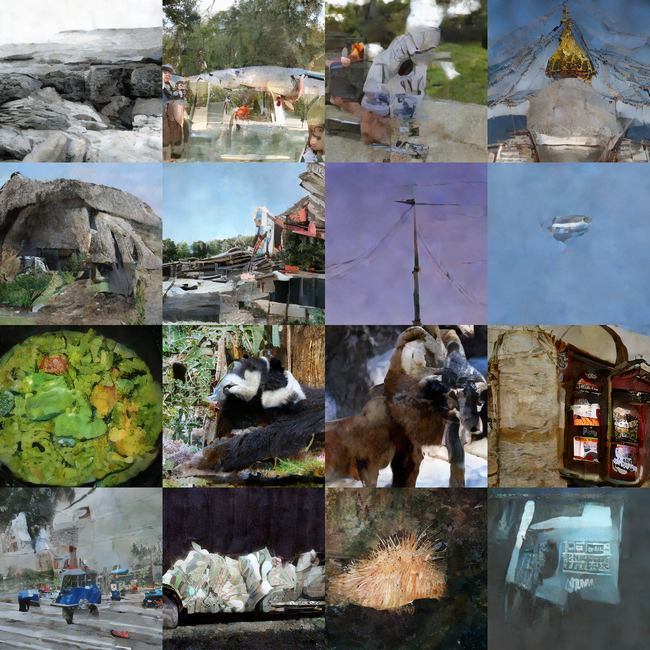} &
\hspace{\gridGap}\includegraphics[width=\gridImgW]{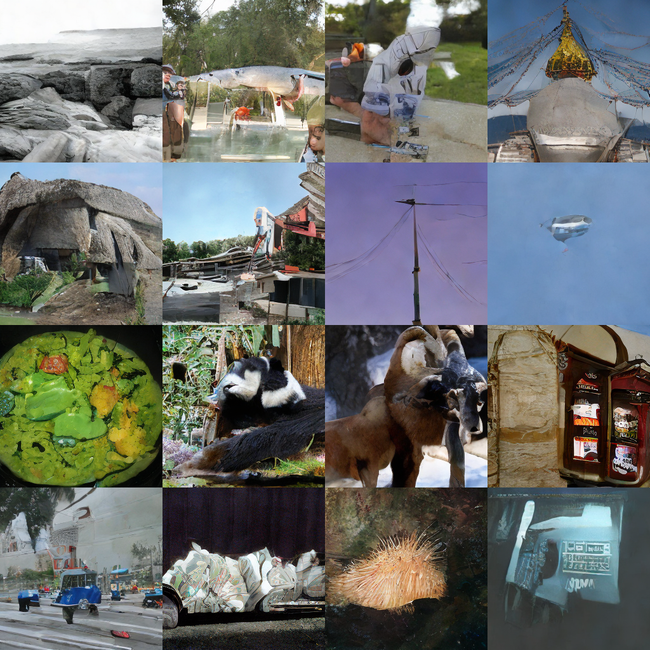} &
\hspace{\gridGap}\includegraphics[width=\gridImgW]{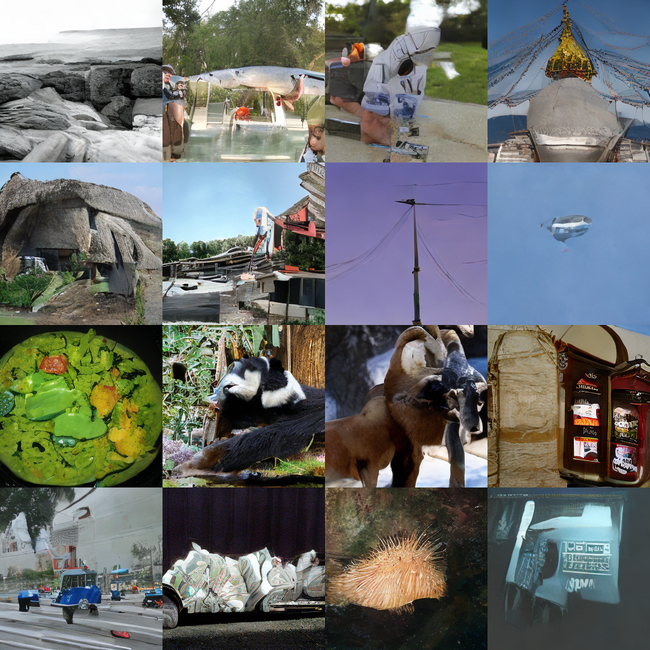} \\
\multicolumn{5}{@{}l@{}}{\vspace{\gridGap}}\\
\includegraphics[width=\gridImgW]{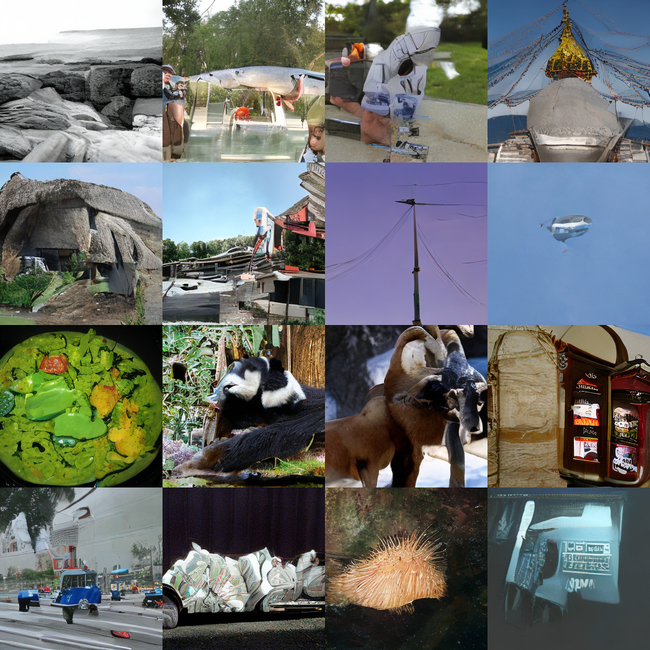} &
\hspace{\gridGap}\includegraphics[width=\gridImgW]{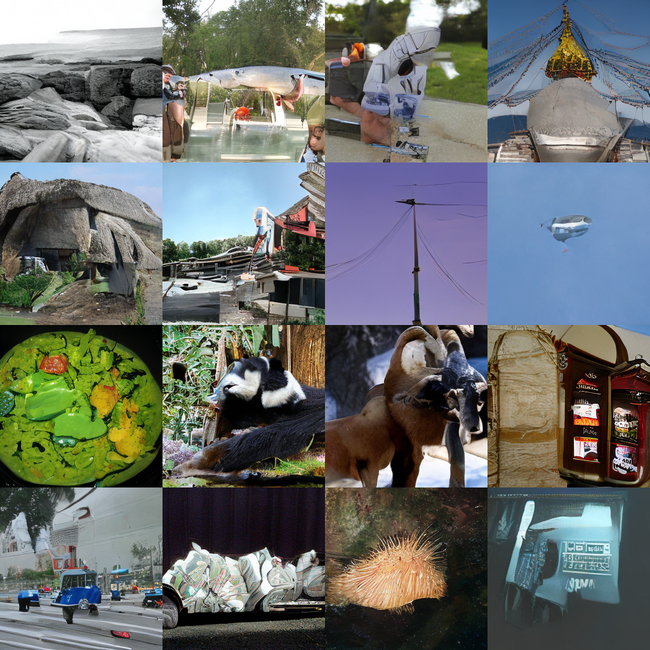} &
\hspace{\gridGap}\includegraphics[width=\gridImgW]{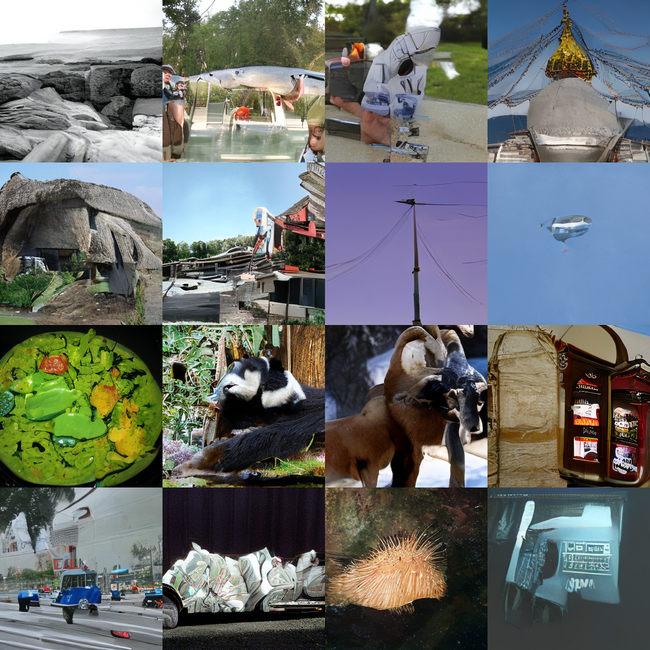} &
\hspace{\gridGap}\includegraphics[width=\gridImgW]{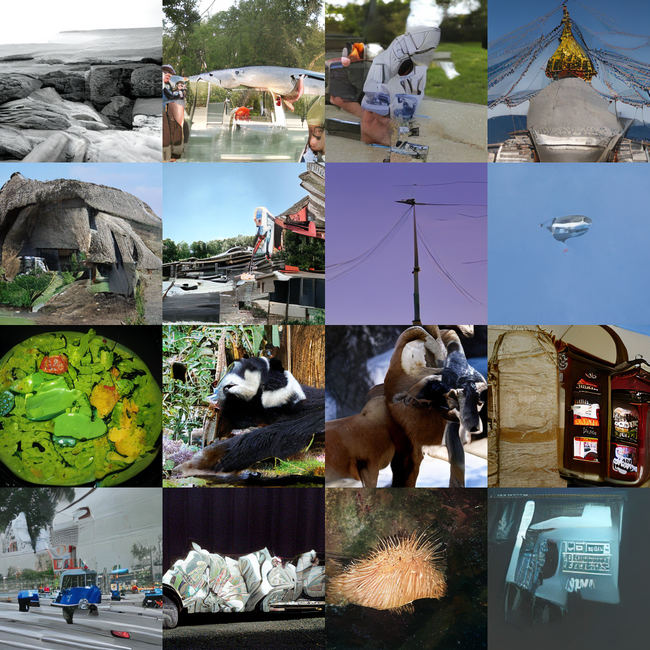} &
\hspace{\gridGap}\includegraphics[width=\gridImgW]{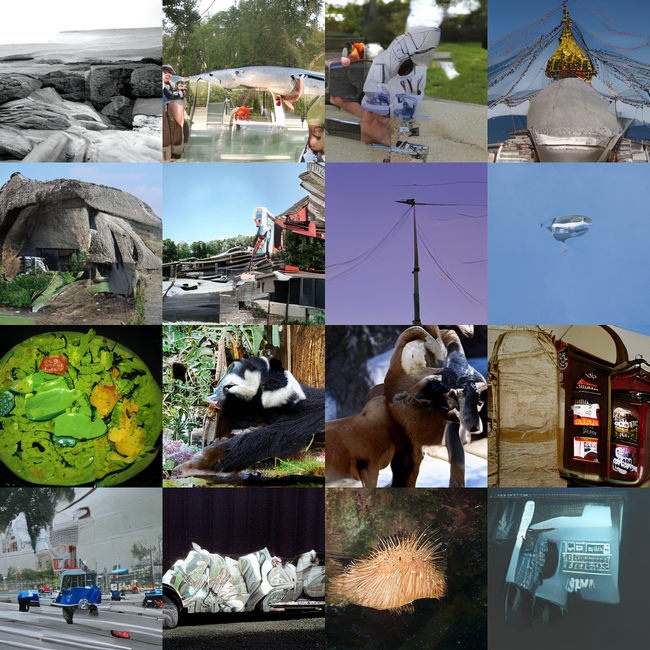} \\
\end{tabular}%
}

\vspace{\subFigVGap}

\subcaptionbox{\dm with gradient descent, $\eta =0.64$, no CFG.\label{fig:gd-grid}}{%
\begin{tabular}{@{}ccccc@{}}
\includegraphics[width=\gridImgW]{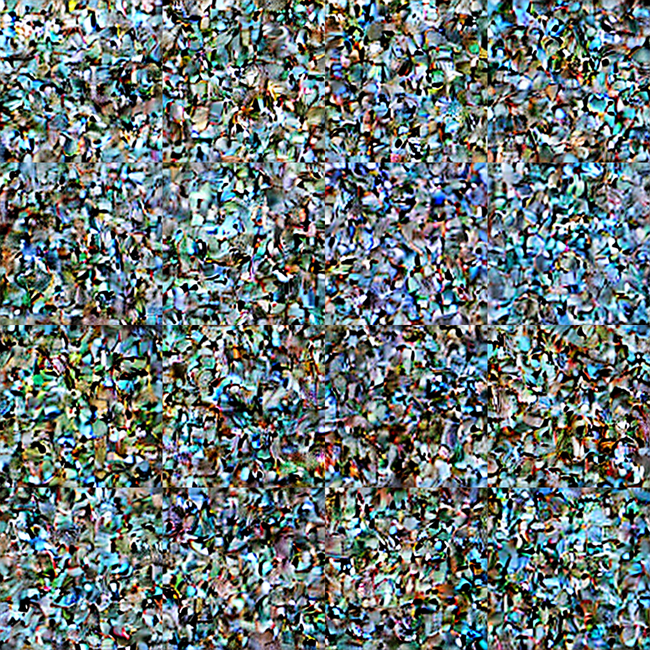} &
\hspace{\gridGap}\includegraphics[width=\gridImgW]{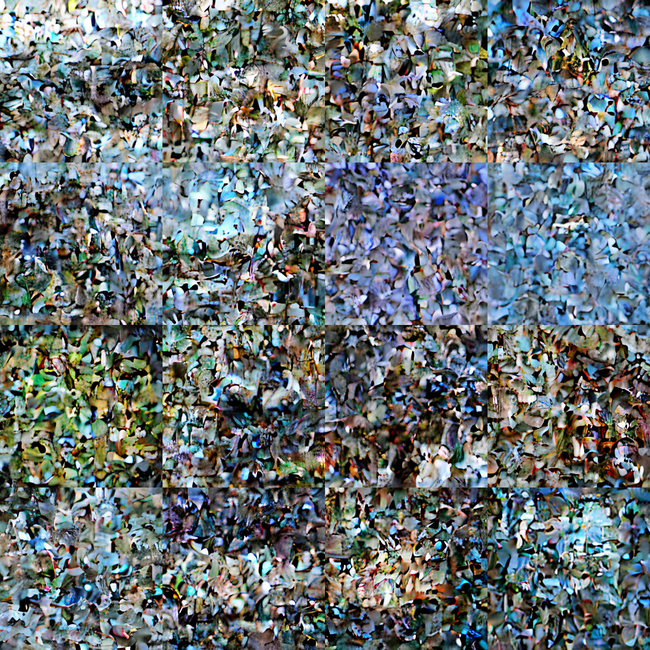} &
\hspace{\gridGap}\includegraphics[width=\gridImgW]{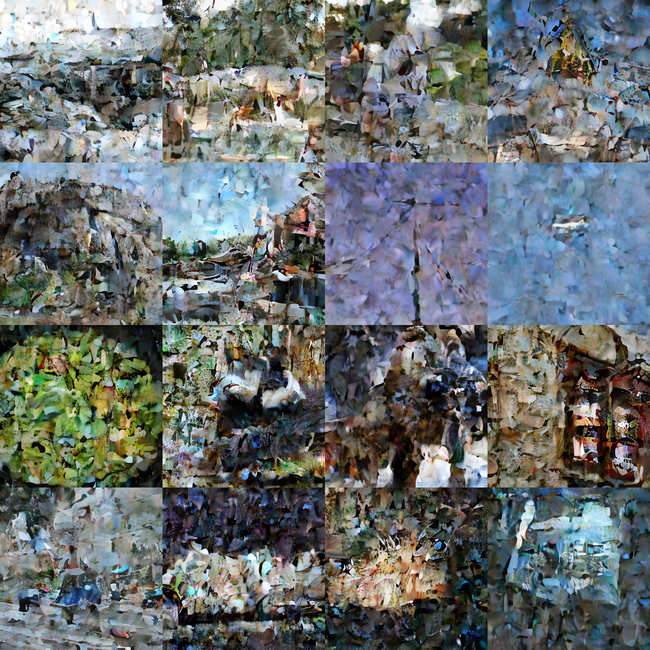} &
\hspace{\gridGap}\includegraphics[width=\gridImgW]{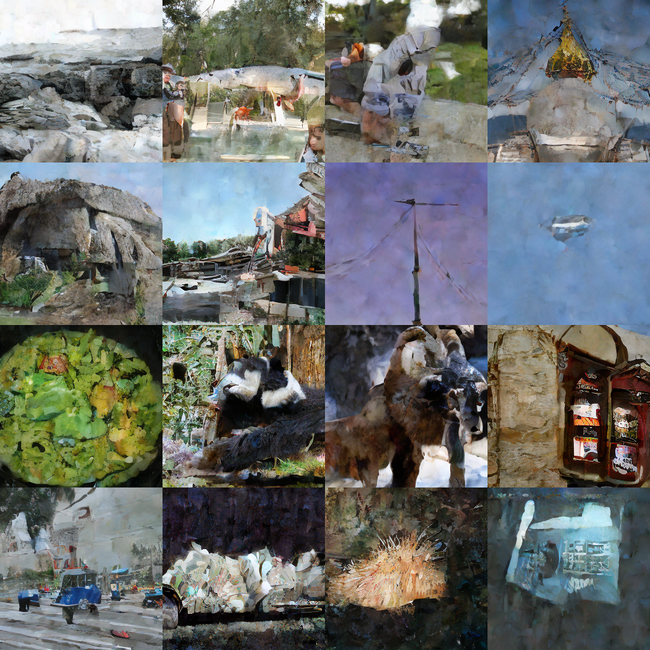} &
\hspace{\gridGap}\includegraphics[width=\gridImgW]{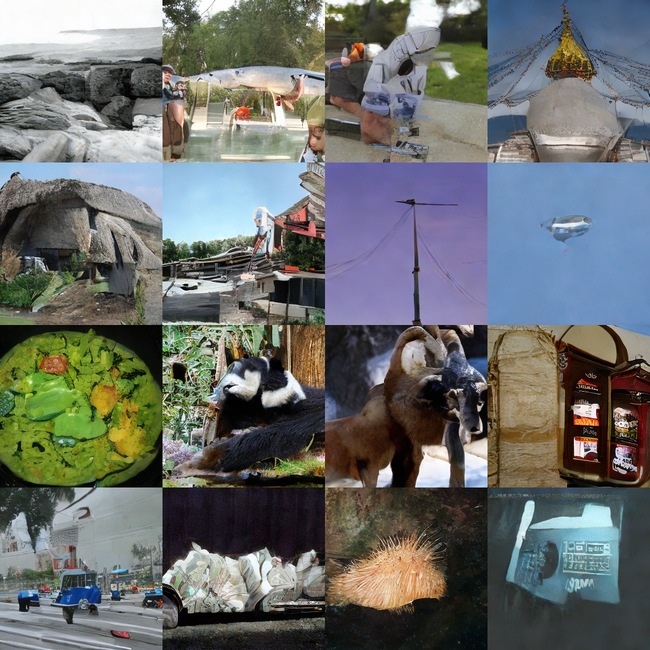} \\
\multicolumn{5}{@{}l@{}}{\vspace{\gridGap}}\\
\includegraphics[width=\gridImgW]{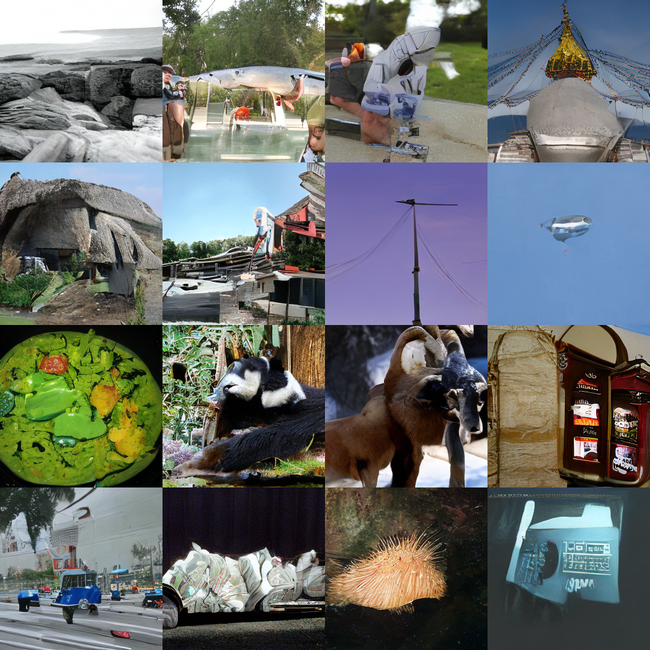} &
\hspace{\gridGap}\includegraphics[width=\gridImgW]{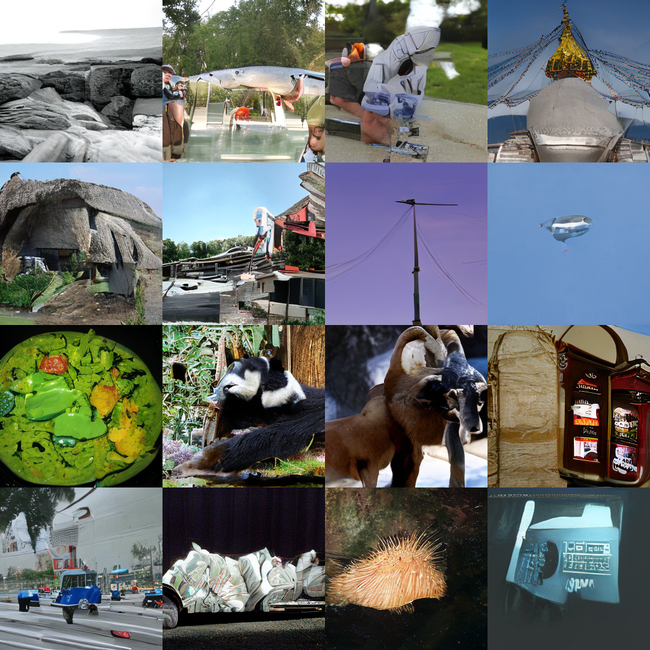} &
\hspace{\gridGap}\includegraphics[width=\gridImgW]{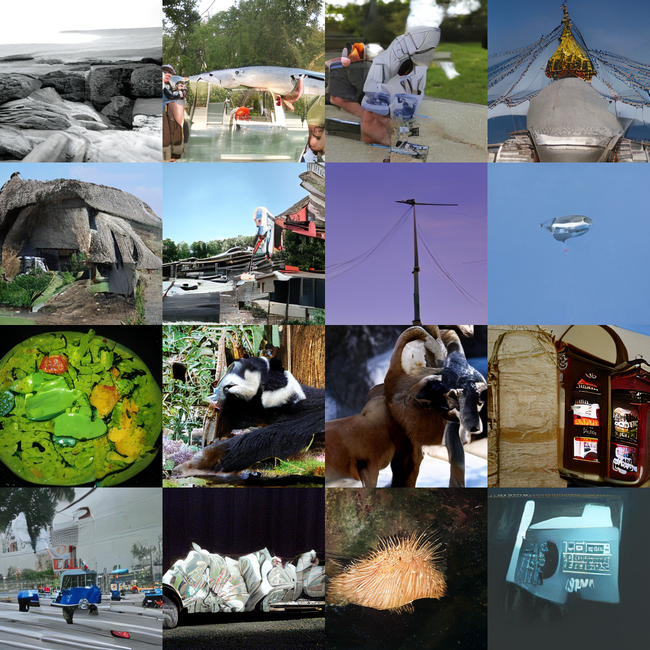} &
\hspace{\gridGap}\includegraphics[width=\gridImgW]{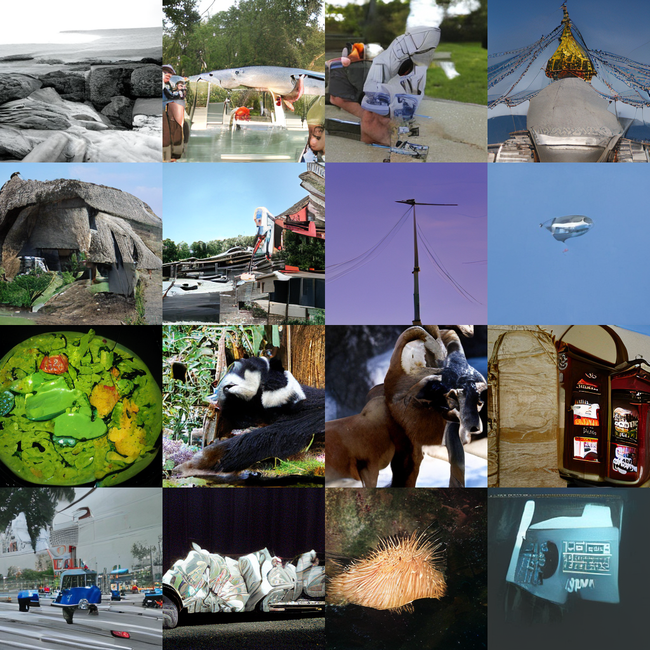} &
\hspace{\gridGap}\includegraphics[width=\gridImgW]{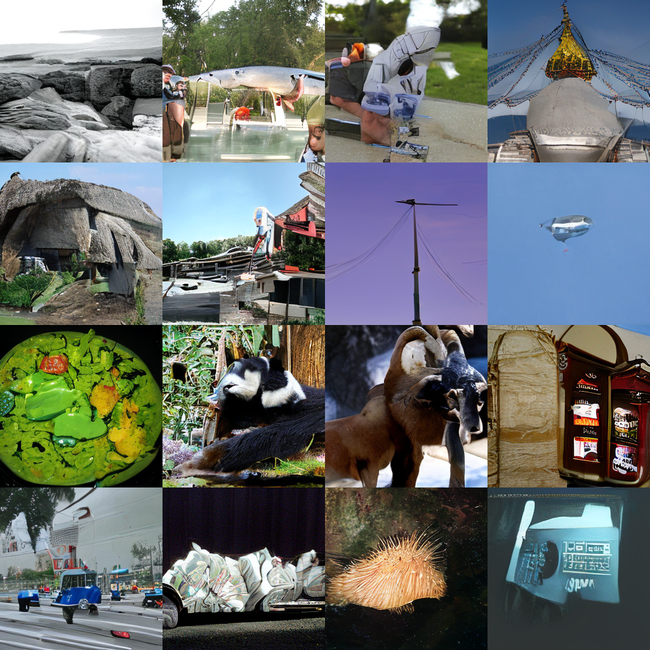} \\
\end{tabular}%
}

\vspace{\subFigVGap}

\subcaptionbox{Flow matching SiT with ODE solver dopri5, no CFG.\label{fig:sit-grid}}{%
\begin{tabular}{@{}ccccc@{}}
\includegraphics[width=\gridImgW]{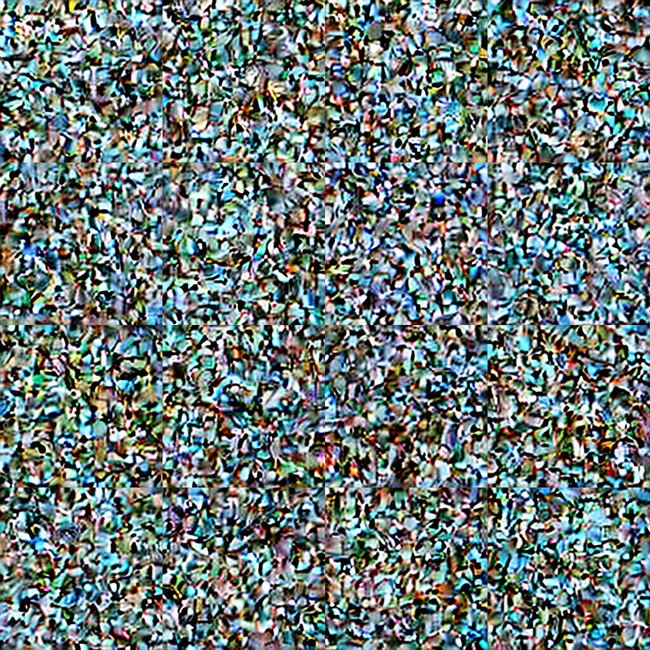} &
\hspace{\gridGap}\includegraphics[width=\gridImgW]{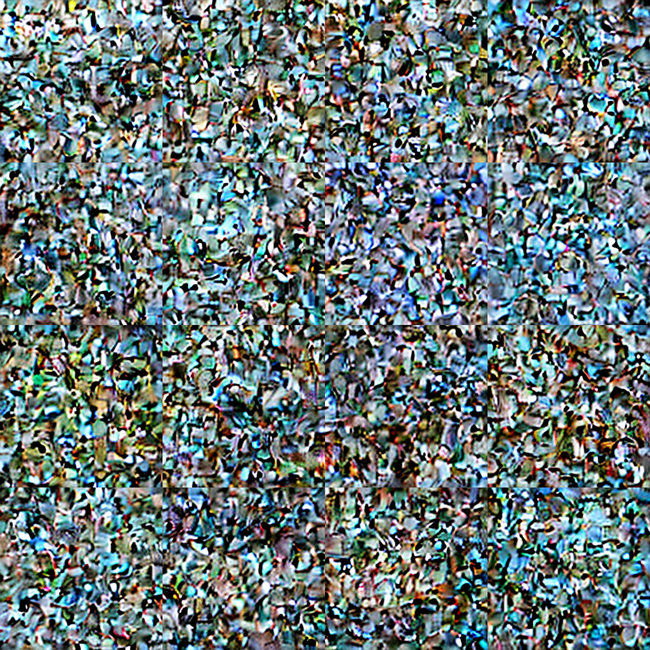} &
\hspace{\gridGap}\includegraphics[width=\gridImgW]{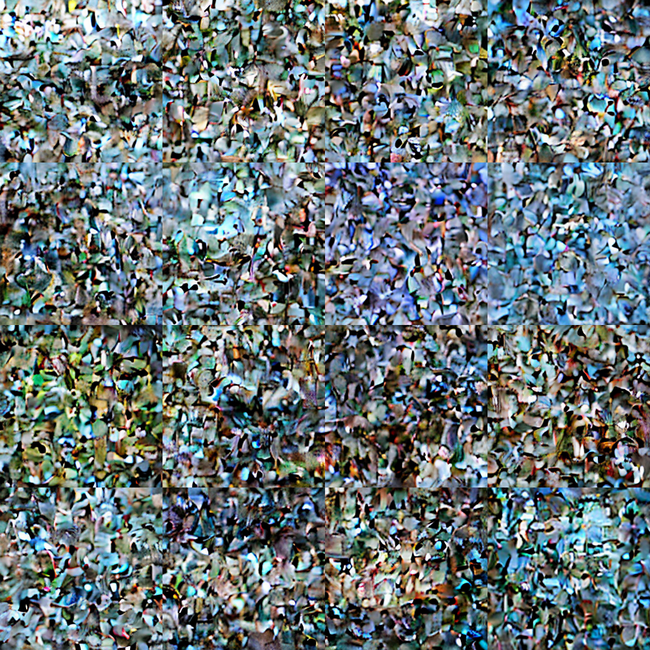} &
\hspace{\gridGap}\includegraphics[width=\gridImgW]{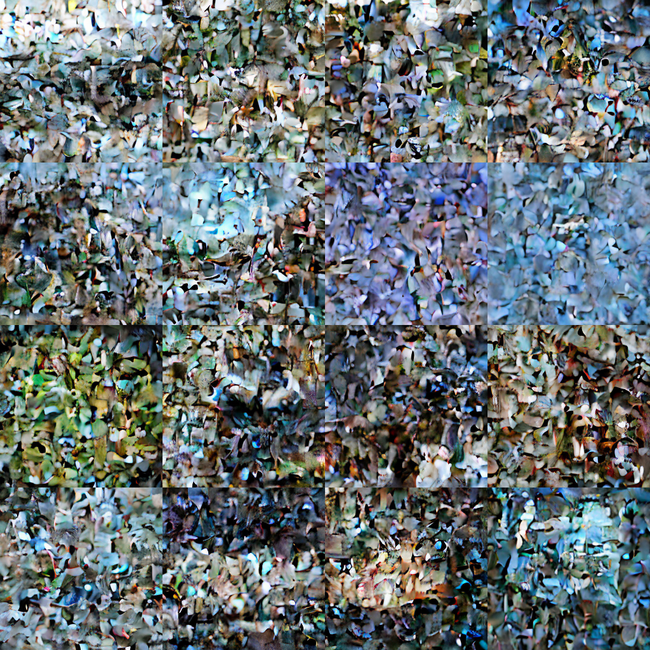} &
\hspace{\gridGap}\includegraphics[width=\gridImgW]{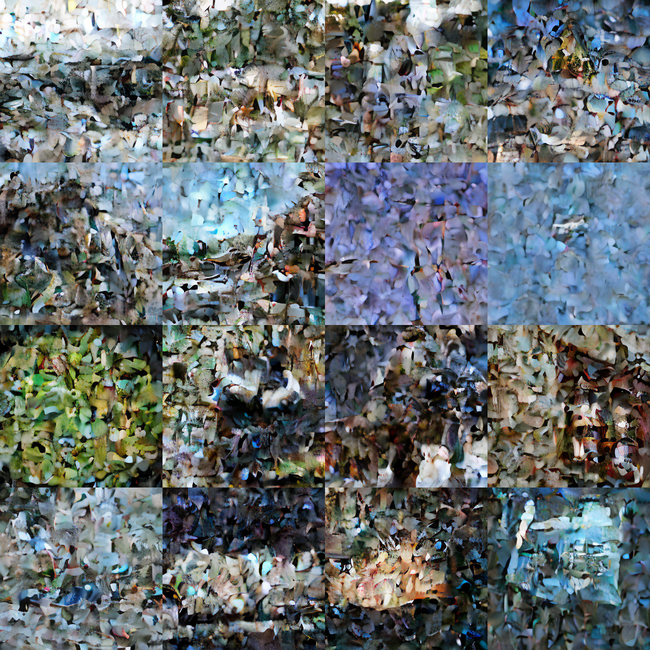} \\
\multicolumn{5}{@{}l@{}}{\vspace{\gridGap}}\\
\includegraphics[width=\gridImgW]{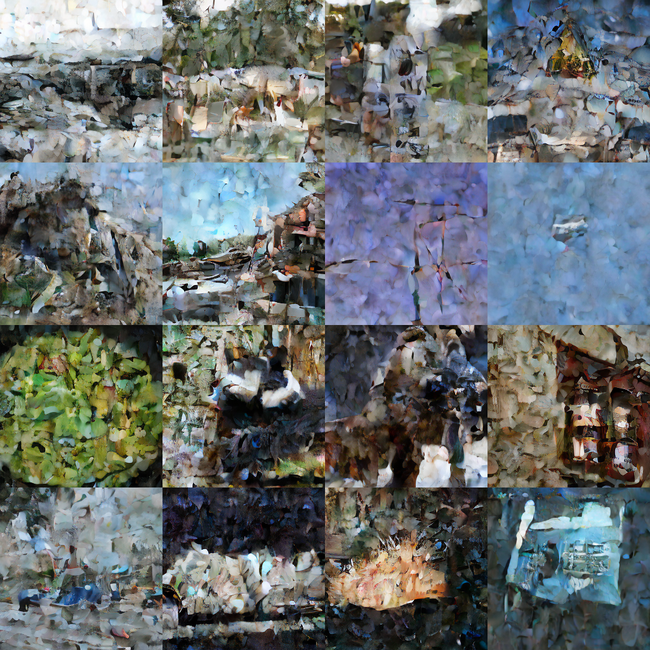} &
\hspace{\gridGap}\includegraphics[width=\gridImgW]{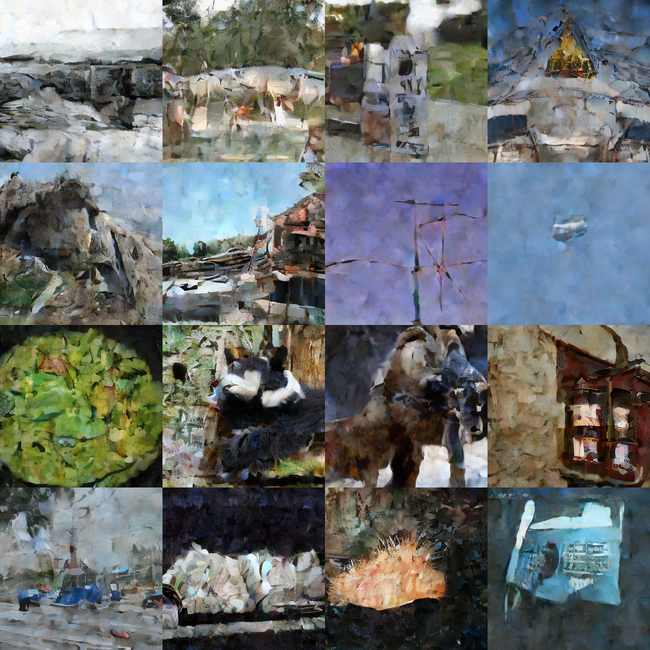} &
\hspace{\gridGap}\includegraphics[width=\gridImgW]{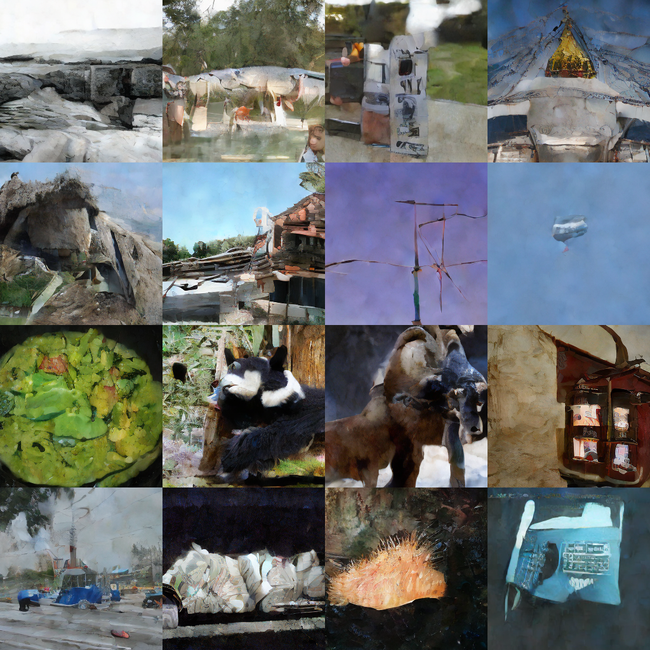} &
\hspace{\gridGap}\includegraphics[width=\gridImgW]{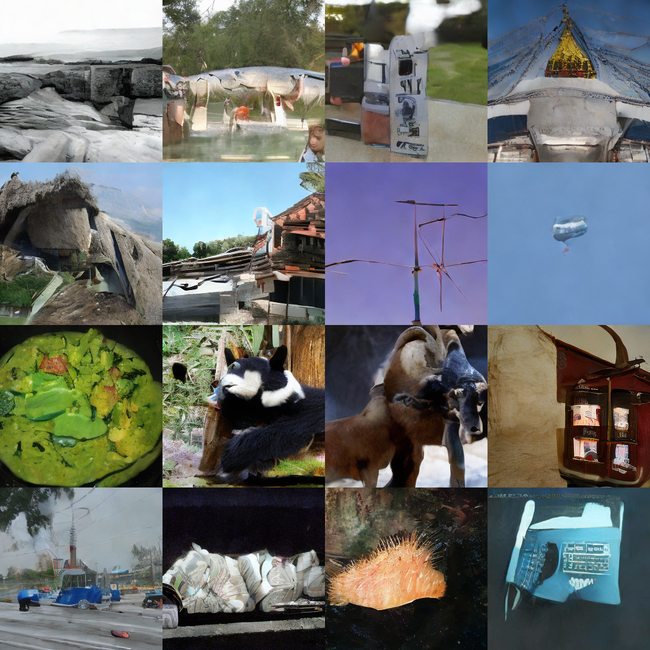} &
\hspace{\gridGap}\includegraphics[width=\gridImgW]{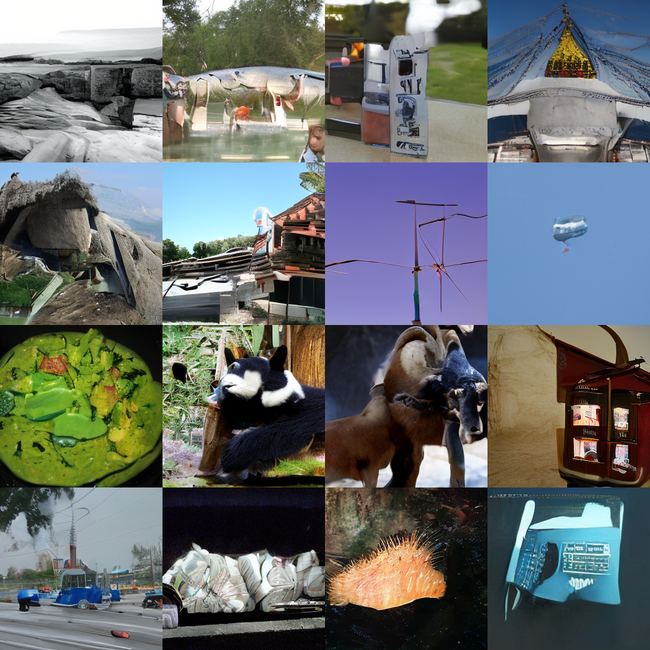} \\
\end{tabular}%
}

\caption{We perform 250-step generation using SiT-B/2 models trained for 80 epochs under their respective losses, starting from the same initial noise at inference.
For each method, the step size is chosen to maximize the final performance after 250 steps.
We visualize intermediate outputs at 10\%--100\% progress in 10\% increments.}
\label{fig:evolution}
\end{figure*}

\clearpage

\subsection{Supplementary Tables and Figures}
\label{sec:exp_out}

For several baselines, we directly extract numeric values from the vector graphics metadata in Fig.~3 of~\citet{wang2025equilibrium}, Fig.~8 of~\citet{sun2025noise}, and Fig.~8 of~\citet{peebles2023scalable}.
We report these values to two decimal places and expect them to match the originals up to this precision.
Due to space constraints in the main paper, we provide additional results here.

\begin{table}[!htbp]
\caption{FID comparison on CIFAR-10 and ImageNet under comparable model capacity. Baseline numbers are taken from the corresponding papers.}
\centering
\small
\setlength{\tabcolsep}{6pt}
\renewcommand{\arraystretch}{1.05}

\begin{tabular}{l cc cc c}
\toprule
\multirow{2}{*}{\textbf{Method}}
& \multicolumn{2}{c}{\textbf{CIFAR-10 ($\sim$56M params)}}
& \multicolumn{2}{c}{\textbf{ImageNet (SiT B/2)}}
& \multirow{2}{*}{\textbf{Avg.}} \\
\cmidrule(lr){2-3}\cmidrule(lr){4-5}
& \textbf{FID$\downarrow$} & \textbf{Rel. diff.} & \textbf{FID$\downarrow$} & \textbf{Rel. diff.} & \\
\midrule
Distance Marching & 2.33 & \textbackslash & 32.16 & \textbackslash & \textbackslash \\
EqM~\cite{wang2025equilibrium}   & 3.36 & +44.2\%          & 32.85 & +2.1\%          & +23.2\% \\
uEDM~\cite{sun2025noise}  & 2.33 & +0.0\%          & 40.80 & +26.9\%          & +13.5\% \\
\bottomrule
\end{tabular}
\label{tab:fid_comp}
\end{table}

\begin{figure}[!htbp]
    \centering
    \includegraphics[width=0.55\linewidth]{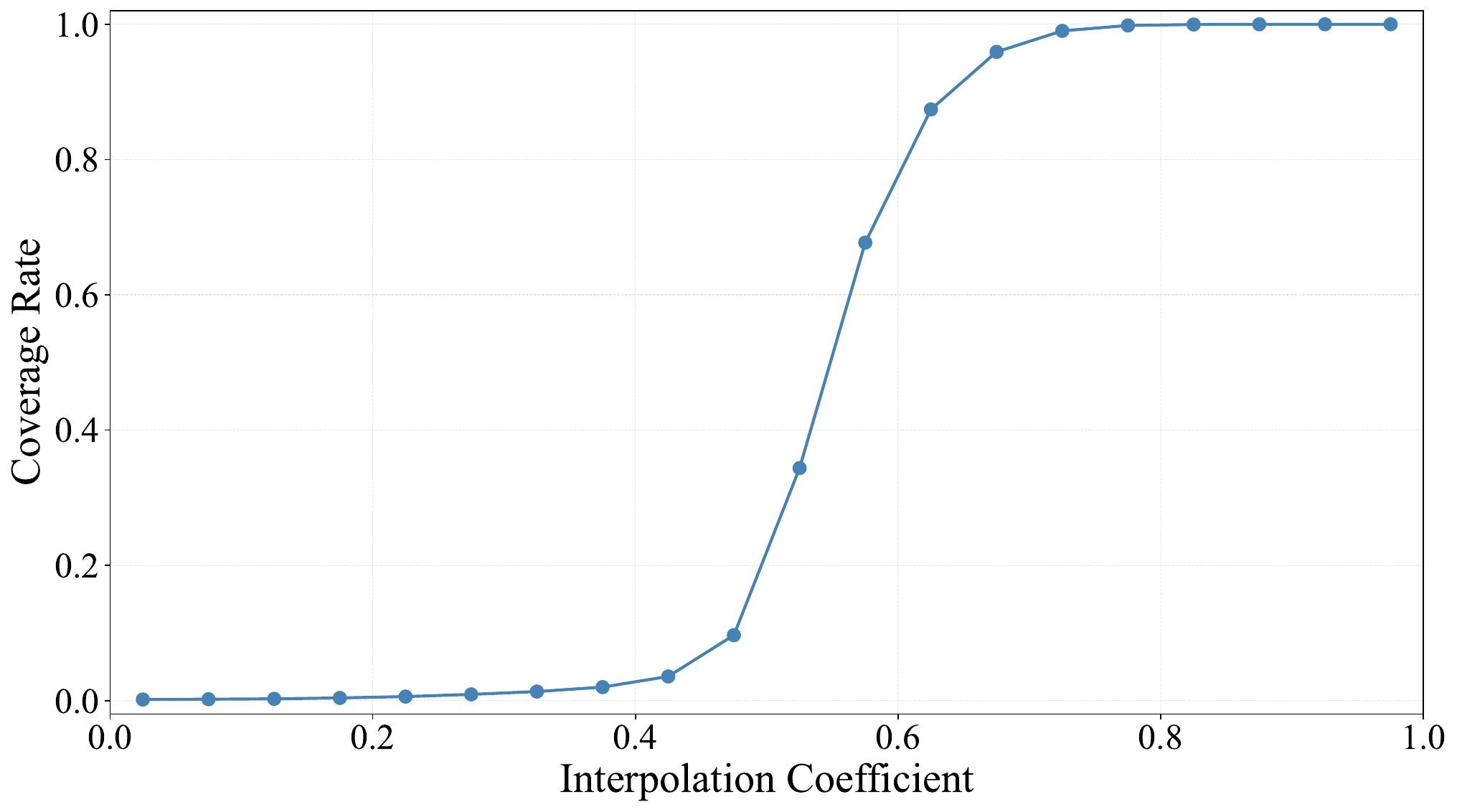}
    \caption{Coverage rate changes abruptly in the middle, remaining near $0$ on the noisy side and saturating near $1$ on the data side, indicating a highly non-uniform nearest-neighbor visitation across interpolation time.}
    \label{fig:mode_collapse_curve}
\end{figure}

\begin{figure}[!htbp]
    \centering
    \includegraphics[width=0.55\linewidth]{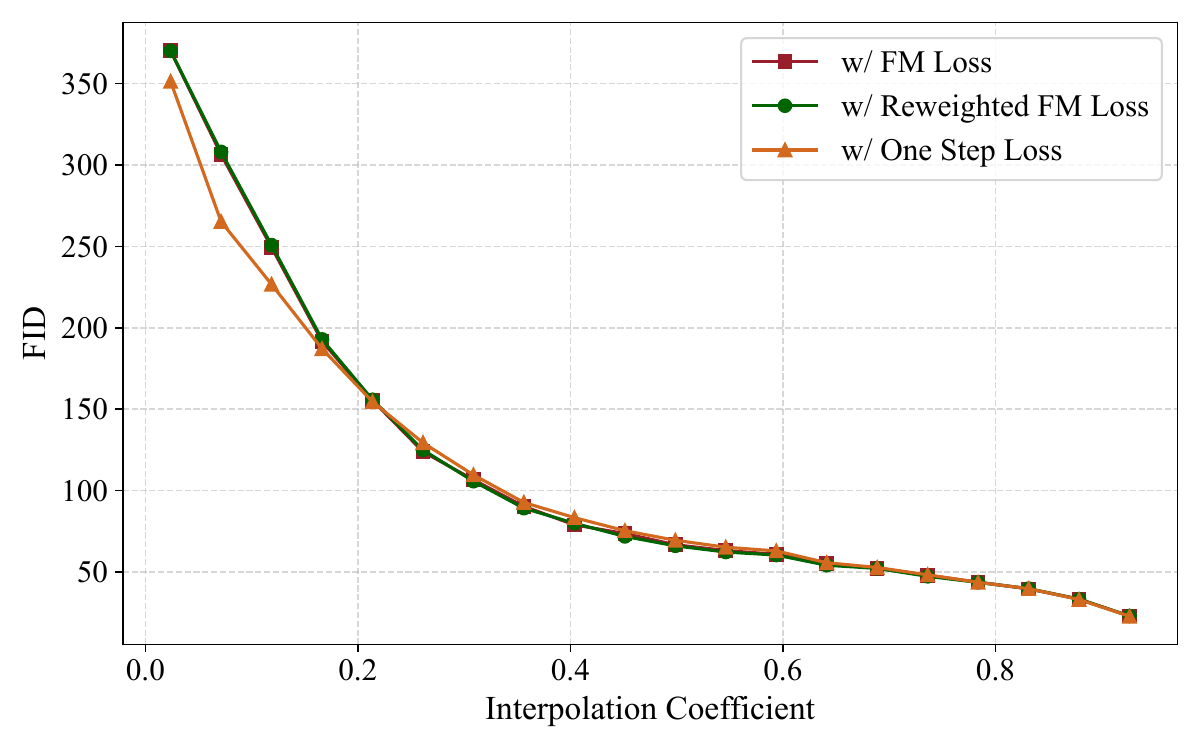}
    \caption{
\textbf{Full FID curves across diffusion time.}
We partition $t\in[0.0,\,0.95]$ into 20 uniform bins (same bin count as~\Cref{fig:fid-compare}, which restricts to $t\in[0.0,\,0.25]$). 
For each bin, we sample $T$ uniformly within the bin and generate 2048 noisy samples following~\Cref{def:genproc}; FID is computed on the corresponding denoised outputs for each objective.
We exclude directional eikonal loss from this comparison because it does not specify a denoising destination; instead, we report in~\Cref{fig:verify-equa} that its solution lies between the one-step loss minimizer and the flow matching minimizer.
Beyond the small-$t$ regime, the performance gaps shrink, and the FID curves across objectives become much less distinguishable.
The red and green curves always overlap so that it is hard to distinguish.}
    \label{fig:full_fid_comparison}
\end{figure}

\clearpage

\stopappendixtoc

\end{document}